\documentclass{tlp}

\newif\ifdraft\drafttrue
\newif\ifinlineref\inlinereffalse
\newif\iffinal\finalfalse
\newif\ifextended\extendedfalse
\newif\ifdotikz\dotikzfalse
\newif\ifshowotherappendix\showotherappendixtrue
\newif\ifmakeallproofsinline\makeallproofsinlinefalse
\newif\ifrevisionmarkers\revisionmarkerstrue

\inlinereftrue
\draftfalse
\revisionmarkersfalse
\extendedtrue

\ifdraft
\fi

\usepackage{times}
\usepackage[scaled]{helvet}
\usepackage{courier}

\usepackage{enumerate}
\usepackage{amsmath}
\usepackage{amssymb}
\usepackage{url}
\usepackage{xspace}
\usepackage{marvosym}
\usepackage{xcolor}
\usepackage{paralist}
\usepackage{graphicx}
\usepackage{subfig}
\usepackage{multicol}
\usepackage{listings}

\usepackage{ifpdf}
\ifpdf
\usepackage{microtype}
\fi

	\newcommand{\qed}[0]{}
	\newcommand{\qedhere}[0]{}

\ifdraft
\usepackage{pdfsync}
\usepackage{srcltx}
\else
\fi

\usepackage[lined,algoruled]{algorithm2e}

\ifdraft
\newcommand{\comment}[1]{{\small\bf\color{blue} *** #1 ***}}
\else
\newcommand{\comment}[1]{}
\fi

\ifdotikz
\usepackage{tikz}
\pgfrealjobname{hexeval}
\usetikzlibrary{shapes,arrows,backgrounds,chains,%
matrix,patterns,arrows,decorations.pathmorphing,decorations.pathreplacing,%
positioning,fit,calc,decorations.text,shadows%
}
\else
\long\def\beginpgfgraphicnamed#1#2\endpgfgraphicnamed{\includegraphics{#1}}
\fi

\newcommand{\nop}[1]{#1}
\renewcommand{\nop}[1]{} %

\ifmakeallproofsinline
\newcommand{\myinlineproof}[1]{#1}
\newcommand{\mylocatedproof}[1]{}
\else
\newcommand{\myinlineproof}[1]{}
\newcommand{\mylocatedproof}[1]{#1}
\fi

\ifrevisionmarkers
\newcommand{\reva}[1]{{\color{blue} #1}}
\newcommand{\revam}[1]{{\color{red} #1}}
\newcommand{\revanop}[1]{}
\else
\newcommand{\reva}[1]{{#1}}
\newcommand{\revam}[1]{}
\newcommand{\revanop}[1]{}
\fi

\newtheorem{theorem}{Theorem}
\newtheorem{proposition}{Proposition}

\newtheorem{definition}{Definition}
\newtheorem{example}{Example}
\newtheorem{property}{Property}

\newenvironment{myitemize}{\begin{list}{$\bullet$}{%
\setlength{\topmargin}{0pt}
\setlength{\leftmargin}{0pt}
\setlength{\itemindent}{10pt}}
}{\end{list}}

\newcounter{myenumeratecounter}

\renewcommand{\vec}[1]{\ensuremath{\mb{#1}}}

\newcommand{\mi}[1]{\ensuremath{\mathit{#1}}}
\newcommand{\mb}[1]{\ensuremath{\mathbf{#1}}}
\def\lif{\ensuremath{\leftarrow}}

\def\naf{\ensuremath{\mathop{not}}}
\def\bbN{\ensuremath{{\mathbb{N}}}}

\def\cI{\ensuremath{{\mathcal{I}}}}
\def\cA{\ensuremath{{\mathcal{A}}}}
\def\cE{\ensuremath{{\mathcal{E}}}}

\def\cS{\ensuremath{{\mathcal{S}}}}
\def\scI{\ensuremath{{\textsc{i}}}}
\def\scO{\ensuremath{{\textsc{o}}}}

\def\cG{\ensuremath{{\cal G}}}

\def\cC{\ensuremath{{\cal C}}}
\def\cX{\ensuremath{{\cal X}}}
\def\cG{\ensuremath{{\cal G}}}
\newcommand\bi{\begin{itemize}}
\newcommand\ei{\end{itemize}}
\newcommand\quo[1]{`#1'}
\newcommand{\ins}{\,{\in}\,}

\newcommand{\les}{\,{\le}\,}

\newcommand{\eqs}{\,{=}\,}
\newcommand{\neqs}{\,{\neq}\,}
\newcommand{\setminuss}{\,{\setminus}\,}
\newcommand{\modelss}{\,{\models}\,}

\newcommand{\cups}{\,{\cup}\,}
\newcommand{\caps}{\,{\cap}\,}
\newcommand{\mids}{\,{\mid}\,}
\newcommand{\timess}{\,{\times}\,}

\newcommand{\lors}{\,{\vee}\,}

\newcommand{\subsets}{\,{\subset}\,}
\newcommand{\subseteqs}{\,{\subseteq}\,}

\newcommand\hex{{\sc hex}\xspace}
\newcommand\dlv{{\small\sffamily dlv}\xspace}
\newcommand\dlvhex{{\small\sffamily dlvhex}\xspace}
\newcommand\dlvex{{\small\sffamily dlv-ex}\xspace}

\def\HBP{\mi{H\!B}_P}

\def\fP{fP}
\newcommand{\AS}{\mathcal{A\!S}}
\newcommand{\dq}{\ifmmode\text{"}\else"\fi}
\newcommand{\lifs}{\,{\lif}\,}
\newcommand{\cons}{\ensuremath{\mi{const}}}

\newcommand\heurold{\textsl{H1}\xspace}
\newcommand\heurnew{\textsl{H2}\xspace}
\newcommand\heurtriv{\textsl{H0}\xspace}
\newcommand\heurmrg{\textsl{H3}\xspace}

\newcommand{\mycuri}[0]{\ensuremath{\mi{cur\scI}}}
\newcommand{\mycuro}[0]{\ensuremath{\mi{cur\scO}}}
\newcommand{\myrefcounto}[0]{\ensuremath{\mi{refs\scO}}}

\newcommand\clasp{{\small\sffamily clasp}\xspace}

\newcommand\gringo{{\small\sffamily gringo}\xspace}

\newcommand\clingcon{{\small\sffamily clingcon}\xspace}
\newcommand\ezcsp{{\small\textsc ezcsp}\xspace}
\newcommand\acsolver{{\small\sffamily ACsolver}\xspace}

\newcommand{\amp}[1]{\ensuremath{\text{\textsl{{\&}}}\!\mathit{#1}}}
\newcommand{\ext}[3]{\ensuremath{\amp{#1}[#2](#3)}}
\newcommand{\extfun}[1]{\ensuremath{f_{\text{\sl\&}#1}}}
\newcommand{\extFun}[1]{\ensuremath{F_{\text{\sl\&}#1}}}

\newcommand{\GroundLiberallyDomainExpansionSafeProgram}{\ensuremath{\textsc{GroundHEX}}}
\newcommand{\EvaluateGroundHEX}{\ensuremath{\textsc{EvaluateGroundHEX}}}
\newcommand{\EvaluateLDESafe}{\ensuremath{\textsc{EvaluateLDESafe}}}
\newcommand{\acthex}{\textsc{Acthex}}
\newcommand{\grnd}{\ensuremath{\mathit{grnd}}}

\newcommand{\attr}[3]{#1{\upharpoonright}_{#2}#3}
\newcommand{\ipar}{\textsc{i}}
\newcommand{\opar}{\textsc{o}}

\def\extg{\amp{g}}
\newcommand{\inpconst}{\mb{const}}

\def\depends{\rightarrow}

\def\dependsneg{\rightarrow_n}
\def\dependsext{\rightarrow^e}
\def\dependsextmon{\rightarrow^e_m}
\def\dependsextnmon{\rightarrow^e_{\mi{\reva{n}}}}
\def\dependsmon{\rightarrow_m}
\def\dependsnmon{\rightarrow_n}

\def\dependsmn{\rightarrow_{m,n}}

\newcommand{\T}{\mathbf{T}}
\newcommand{\F}{\mathbf{F}}

\def\join{{\:\bowtie\:}}

\def\ufinal{u_{\mi{final}}}

\def\myigraph{i-graph\xspace}
\def\myigraphs{i-graphs\xspace}
\def\myiint{i-inter\-pre\-ta\-tion\xspace}
\def\myiints{i-inter\-pre\-ta\-tions\xspace}
\def\myoint{o-inter\-pre\-ta\-tion\xspace}
\def\myoints{o-inter\-pre\-ta\-tions\xspace}

\def\myimodels{\mi{i\text{-}ints}}
\def\myomodels{\mi{o\text{-}ints}}
\def\myimodelsI{\mi{i\text{-}ints}_{\cI}}
\def\myomodelsI{\mi{o\text{-}ints}_{\cI}}
\def\myimodelsA{\mi{i\text{-}ints}_{\cA}}
\def\myomodelsA{\mi{o\text{-}ints}_{\cA}}
\def\myunit{\mi{unit}}
\def\mytype{\mi{type}}
\def\myint{\mi{int}}
\newcommand{\myinputsE}[1]{\mi{preds}_{#1}}
\newcommand{\myinputs}{\myinputsE{\cE}}

\def\myBuildAnswerSets{\ensuremath{\textsc{Build}\-\textsc{Answer}\-\textsc{Sets}}\xspace}

\def\myGetNextIModel{\ensuremath{\textsc{Get}\-\textsc{Next}\-\textsc{Input}\-\textsc{Model}}\xspace}
\def\myGetNextOModel{\ensuremath{\textsc{Get}\-\textsc{Next}\-\textsc{Output}\-\textsc{Model}}\xspace}
\def\myEnsureModelIncrement{\ensuremath{\textsc{Ensure}\-\textsc{Model}\-\textsc{Increment}}\xspace}
\def\myOnDemandAS{\ensuremath{\textsc{Answer}\-\textsc{Sets}\-\textsc{On}\-\textsc{Demand}}\xspace}
\def\myNextAnswerSet{\ensuremath{\mathit{NextAnswerSet}}}
\def\undef{\ensuremath{\textsc{undef}}}
\def\mycautext{first ancestor intersection unit\xspace}
\def\mycaustext{first ancestor intersection units\xspace}
\def\myCAUtext{FAI\xspace}
\def\myCAUstext{FAIs\xspace}
\def\mycau{\mi{fai}}

\newcommand\facts[0]{\ensuremath{\mi{facts}}}

\newcommand{\citeNBYYB}[2]{\citeANP{#1} \citeyear{#1,#2}}

\def\papertitle{A model building framework for Answer Set Programming with external computations}
\def\shortpapertitle{Model building for ASP with external atoms}

\submitted{27 January 2015}
\revised{24 June 2015}
\accepted{28 June 2015}

	\title[\shortpapertitle]{\vspace*{-1em} \papertitle%
  \thanks{\reva{
    This article is a significant extension of
    \protect\cite{2011_pushing_efficient_evaluation_of_hex_programs_by_modular_decomposition} and parts of \protect\cite{ps2012}.
This work has been supported by the Austrian Science Fund (FWF) Grants
P24090 and P27730, and the Scientific and Technological Research Council of Turkey (TUBITAK) Grant 114E430.}
  }}

	\author[Eiter, Fink, Ianni, Krennwallner, Redl, and Sch\"{u}ller]{%
		Thomas Eiter, Michael Fink \\
		Institut f\"ur Informationssysteme, Technische Universit\"at Wien \\
    Favoritenstra\ss e 9-11, A-1040 Vienna, Austria \\
    \email{\{eiter,fink\}@kr.tuwien.ac.at}
		\and
		Giovambattista Ianni \\
		Dipartimento di Matematica, Cubo 30B, Universit\`{a} della Calabria\\
    87036 Rende (CS), Italy \\
    \email{ianni@mat.unical.it}
		\and
		Thomas Krennwallner, Christoph Redl \\
		Institut f\"ur Informationssysteme, Technische Universit\"at Wien \\
    Favoritenstra\ss e 9-11, A-1040 Vienna, Austria \\
    \email{\{tkren,redl\}@kr.tuwien.ac.at}
		\and
		Peter Sch\"{u}ller \\
		Computer Engineering Department, Faculty of Engineering,
		Marmara University \\
    Goztepe Kampusu, Kadikoy 34722, Istanbul, Turkey\\
    \email{peter.schuller@marmara.edu.tr}}

\ifdotikz
\makeatletter
\pgfdeclareshape{document}{
	\inheritsavedanchors[from=rectangle] %
	\inheritanchorborder[from=rectangle]
	\inheritanchor[from=rectangle]{center}
	\inheritanchor[from=rectangle]{north}
	\inheritanchor[from=rectangle]{south}
	\inheritanchor[from=rectangle]{west}
	\inheritanchor[from=rectangle]{east}
	\backgroundpath{%
		\southwest \pgf@xa=\pgf@x \pgf@ya=\pgf@y
		\northeast \pgf@xb=\pgf@x \pgf@yb=\pgf@y
		\pgf@xc=\pgf@xb \advance\pgf@xc by-10pt %
		\pgf@yc=\pgf@yb \advance\pgf@yc by-10pt
		\pgfpathmoveto{\pgfpoint{\pgf@xa}{\pgf@ya}}
		\pgfpathlineto{\pgfpoint{\pgf@xa}{\pgf@yb}}
		\pgfpathlineto{\pgfpoint{\pgf@xc}{\pgf@yb}}
		\pgfpathlineto{\pgfpoint{\pgf@xb}{\pgf@yc}}
		\pgfpathlineto{\pgfpoint{\pgf@xb}{\pgf@ya}}
		\pgfpathclose
		\pgfpathmoveto{\pgfpoint{\pgf@xc}{\pgf@yb}}
		\pgfpathlineto{\pgfpoint{\pgf@xc}{\pgf@yc}}
		\pgfpathlineto{\pgfpoint{\pgf@xb}{\pgf@yc}}
		\pgfpathlineto{\pgfpoint{\pgf@xc}{\pgf@yc}}
	}
}
\makeatother
\fi%

\newcommand\myfigureSystemArchitecture{
	\beginpgfgraphicnamed{systemarchitecture}
	\begin{tikzpicture}[%
    >=latex',
		remember picture,
		text centered,
		start chain,
		node distance=1cm,
		every on chain/.style={join=by ->},
		every join/.style={line width=1.25pt},
		sources/.style={
			shadow xshift=1ex,
			shadow yshift=1ex,
			cylinder,
			copy shadow,
			fill=white,
			shape border rotate=90,
			shape aspect=.1,
			line width=1.25pt,
			text width=1.5cm,
			minimum width=1cm,
			minimum height=11mm,
      outer ysep=1mm
		}
    ]
		\tikzstyle{unit} = [line width=1.0pt, auto]
		\tikzstyle{model} = [line width=1.0pt, auto, text width=2cm]
		\tikzstyle{line} = [draw, line width=1.25pt, join=by ->]
		\matrix (m) [matrix of nodes,
		column sep=7.2mm,
		row sep=12mm,
		inner sep=3pt,
		nodes={draw, %
		  line width=0.7pt,
		  anchor=center,
		  text centered
		},
		bases/.style={
			minimum width=1.6cm,
			text width=1.6cm,
			tape
		},
		doc/.style={
			tape,
			text width=1.5cm,
			minimum width=1.5cm,
			minimum height=9mm
		},
		docs/.style={
			tape,
			copy shadow,
			fill=white,
			text width=1.5cm,
			minimum width=2cm,
			minimum height=11mm
		},
		sets/.style={
			copy shadow,
			fill=white,
			text width=1.25cm,
			minimum width=1.5cm
		},
		subsystem/.style={
			line width=1.25pt,
			text width=1.9cm,
			minimum width=1cm,
			minimum height=9mm
		},
		system/.style={
			line width=1.25pt,
			text width=1.9cm,
			minimum width=2.1cm,
			minimum height=9mm
		},
		source/.style={
			shadow xshift=1ex,
			shadow yshift=1ex,
			cylinder,
			fill=white,
			shape border rotate=90,
			shape aspect=.1,
			line width=1.25pt,
			text width=1.5cm,
			minimum width=1cm,
			minimum height=11mm
		},
		hexprog/.style={
			document,
			fill=white,
			text width=1.6cm,
			minimum width=1.2cm,
			minimum height=12mm
		},
		]
		{
			|[hexprog]| \hex-Program	\pgfmatrixnextcell |[subsystem]| Evaluation Framework	\pgfmatrixnextcell \pgfmatrixnextcell |[docs]| Answer Sets \\
						\pgfmatrixnextcell |[subsystem]| Model Generators \pgfmatrixnextcell \pgfmatrixnextcell |[system]| ASP Solver   \\
			|[system]| ASP Grounder	\pgfmatrixnextcell |[subsystem]| \hex{}-Grounder		\pgfmatrixnextcell |[subsystem]| Post \mbox{Propagator} \\
		};
    \path ($(m-3-2)!0.5!(m-3-3)$) ++(0,-20mm)
      node[draw,line width=0.7pt,anchor=center,text centered,sources]
      (plugins) {Plugins};
		\draw[dashed] ($(m-1-2)+(-1.5cm,+0.7cm)$) rectangle
      node [yshift=3.35cm] {\dlvhex{} core}
      ($(m-3-3)+(1.5cm,-0.7cm)$);
		\path[draw,line width=1pt,->] (m-1-1) -- node [scale=0.5,shape=circle,draw,fill=white] {1} (m-1-2);
		\path[draw,line width=1pt,<->] (m-1-2) -- node [scale=0.5,shape=circle,draw,fill=white] {2} (m-2-2);
		\path[draw,line width=1pt,<->] (m-2-2) -- node [scale=0.5,shape=circle,draw,fill=white] {3} (m-3-2);
		\path[draw,line width=1pt,<->] (m-3-2) -- node [scale=0.5,shape=circle,draw,fill=white] {4} (m-3-1);
		\path[draw,line width=1pt,<->] (m-3-2) -- node [scale=0.5,shape=circle,draw,fill=white] {5} (plugins);
		\path[draw,line width=1pt,<->] (m-2-2) -- node [scale=0.5,shape=circle,draw,fill=white] {6} (m-2-4);
		\path[draw,line width=1pt] (m-2-4.south) edge[bend left=45,<->] node [scale=0.5,shape=circle,draw,fill=white] {7} (m-3-3.east);
		\path[draw,line width=1pt,<->] (m-3-3) -- node [scale=0.5,shape=circle,draw,fill=white] {8} (plugins);
		\path[draw,line width=1pt,->] (m-1-2) -- node [scale=0.5,shape=circle,draw,fill=white] {9} (m-1-4);
	\end{tikzpicture}
	\endpgfgraphicnamed
}
\newcommand\myfigureTikzAtomDepGraph{%
    \beginpgfgraphicnamed{atomdepgraph}%
    \small%
    \begin{tikzpicture}[line width=0.7pt,>=latex']
    \begin{scope}[xscale=3,yscale=-1.7]
      \node (swimin) at (1,1) {$\mygoinout(\myindoor)$};
      \node (swimout) at (2,1) {$\mygoinout(\myoutdoor)$};
      \node (rqswim) at (1,2) {$\ext{\mycost}{\mygoinout}{C}$};
      \node (swimp) at (2,2) {$\mygoinout(P)$};
      \node (needinout) at (0.5,3) {$\myneed(\myinout,C)$};
      \node (gotox) at (1.5,3) {$\mygolocation(X)$};
      \node (ngotox) at (2.5,3) {$\myngolocation(X)$};
      \node (gotoy) at (2.5,4.5) {$\mygolocation(Y)$};
      \node (go) at (2,4.5) {$\mygosomewhere$};
      \node (rqgoto) at (1.25,3.75) {$\ext{\mycost}{\mygolocation}{C}$};
      \node (needloc) at (1.25,4.5) {$\myneed(\reva{\myloc},C)$};
      \node (needmoney) at (0.5,5.0) {$\myneed(X,\mymoney)$};
    \end{scope}
    \begin{scope}
      \draw (swimin.south east) edge[bend right,->] node[midway,above] {$_m$} (swimout.south west);
      \draw (swimout.north west) edge[bend right,->] node[midway,above] {$_m$} (swimin.north east);
      \draw (rqswim.north) edge[->] node[midway,left] {$^e_m\!$} (swimin);
      \draw (rqswim.north east) edge[->] node[near end,below] {$^e_m$} (swimout);
      \draw (swimp.north) edge[->] node[midway,right] {$_m$} (swimout);
      \draw (swimp.north west) edge[->] node[near end,below] {$_m$} (swimin);
      \draw (needinout.north) edge[->] node[midway,left] {$_m$} (rqswim);
      \draw (gotox.north) edge[->] node[midway,left] {$_m$} (swimp);
      \draw (ngotox.north) edge[->] node[midway,left] {$_m$} (swimp);
      \draw (gotox.south east) edge[bend right,->] node[midway,above] {$_m$} (ngotox.south west);
      \draw (ngotox.north west) edge[bend right,->] node[midway,above] {$_m$} (gotox.north east);
      \draw (gotox.north west) edge[bend left=-115,distance=10mm,->] node[midway,right] {$_m$} (gotox.south west);

      \draw (rqgoto.north) edge[->] node[midway,left] {$^e_m$} (gotox);

      \draw (needloc.north) edge[->] node[midway,left] {$_m$} (rqgoto);
      \draw (gotoy.north) edge[->] node[midway,right] {$_m$} (gotox);
      \draw (go.north) edge[->] node[midway,right] {$_m$} (gotox);

      \draw (needmoney) edge[->] node[midway,left] {$_m$} (needinout);
      \draw (needmoney) edge[->] node[near end,below] {~$_m$} (needloc);
    \end{scope}
    \end{tikzpicture}%
    \endpgfgraphicnamed%
}%
\newcommand\myfigureTikzRuleDepGraph{%
    \beginpgfgraphicnamed{ruledepgraph}%
    \small%
    \begin{tikzpicture}[line width=0.7pt,>=latex']
    \begin{scope}[xscale=3,yscale=-1.5]
      \node (r1) at (2,1)
        {$r_1{:}\ \mygoinout(\myindoor) \lors \mygoinout(\myoutdoor) \lif$};
      \node (r2) at (1,2)
        {$\begin{array}{@{}l@{}}
            r_2{:}\ \myneed(\myinout,C) \lifs \\
            \qquad\quad\ext{\mycost}{\mygoinout}{C}
          \end{array}$};
      \node (r3) at (3,2)
        {$\begin{array}{@{}l@{}}
            r_3{:}\ \mygolocation(X) \lors \myngolocation(X) \lifs\\
            \qquad\quad\mygoinout(P), \mylocation(P,X)
          \end{array}$};
      \node (r4) at (3.75,3)
        {$r_4{:}\ \mygosomewhere \lifs \mygolocation(X)$};
      \node (r5) at (1.75,3)
        {$\begin{array}{@{}l@{}}
            r_5{:}\ \myneed(\myloc,C) \lifs \\
            \qquad\quad\ext{\mycost}{\mygolocation}{C}
          \end{array}$};
      \node (c6) at (2.5,4)
        {$c_6{:}\ \lifs \mygolocation(X), \mygolocation(Y), X\,{\neq}\,Y$};
      \node (c7) at (3.75,4)
        {$c_7{:}\ \lifs \naf \mygosomewhere$};
      \node (c8) at (1,4)
        {$c_8{:}\ \lifs \myneed(X,\mymoney)$};
    \end{scope}
    \begin{scope}
      \draw (r2) edge[->] node[midway,above] {$_m$} (r1);
      \draw (r3) edge[->] node[midway,above] {$_m$} (r1);

      \draw (r5) edge[->] node[midway,above] {$_m$} (r3);
      \draw (r4) edge[->] node[midway,right] {$\ _m$} (r3);

      \draw (c6) edge[->] node[midway,left] {$_m$} (r3);
      \draw (c7) edge[->] node[midway,right] {$_{\reva{n}}$} (r4);
      \draw (c8) edge[->] node[midway,left] {$_m$} (r2);
      \draw (c8) edge[->] node[midway,right] {$\ _m$} (r5);
    \end{scope}
    \end{tikzpicture}%
    \endpgfgraphicnamed%
}%
\newcommand\myfigureExOldEvalStrat{%
    \beginpgfgraphicnamed{exOldEvalStrat}%
    \small%
    \begin{tikzpicture}[inner ysep=0.15em,line width=0.7pt,>=latex']%
    \node[rectangle,draw,inner xsep=0.20em] (comp1) at (0,4)
      {$\begin{array}{@{}l@{}}
      r_1{:}\;\mygoinout(\myindoor) \lors \mygoinout(\myoutdoor) \lifs. \\
      r_3{:}\;\mygolocation(X) \lors \myngolocation(X) \lifs
      \mygoinout(P), \mylocation(P,X). \\
      r_4{:}\;\mygosomewhere \lifs \mygolocation(X). \\
      c_6{:}\;{\lif}\, \mygolocation(X), \mygolocation(Y), X \neq Y. \\
      c_7{:}\;{\lif}\, \naf \mygosomewhere. \\
      \text{derives: } \mygoinout(X),\;
                       \mygolocation(X),\;
                       \myngolocation(X),\;
                       \mygosomewhere
      \end{array}$};
    \node[rectangle,draw,below=4mm of comp1,inner xsep=0.2em] (comp2)
      {$\begin{array}{@{}l@{}}
      r_2{:}\;\myneed(\myinout,C) \lifs \ext{\mycost}{\mygoinout}{C}. \\
      r_5{:}\;\myneed(\myloc,C) \lifs \ext{\mycost}{\mygolocation}{C}. \\
      \text{derives: } \myneed(A,B)
      \end{array}$};
    \node[rectangle,draw,below=4mm of comp2] (comp3)
      {$\begin{array}{@{}l@{}}
      c_8{:}\;{\lif}\, \myneed(X,\mymoney). \\
      \text{derives nothing}
      \end{array}$};
    \node[anchor=east,inner sep=0pt,outer sep=0pt] at (comp1.west) {$u_1\,$};
    \node[anchor=east,inner sep=0pt,outer sep=0pt] at (comp2.west) {$u_2\,$};
    \node[anchor=east,inner sep=0pt,outer sep=0pt] at (comp3.west) {$u_3\,$};
    \draw[->] (comp2.north) -- (comp1.south);
    \draw[->] (comp3.north) -- (comp2.south);
    \end{tikzpicture}%
    \endpgfgraphicnamed%
}%
\newcommand\myfigureExBetterEvalStrat{%
    \beginpgfgraphicnamed{exBetterEvalStrat}%
    \small%
    \begin{tikzpicture}[inner ysep=0.25em,xscale=1.5,yscale=1.2,line width=0.7pt,>=latex']%
      \node[rectangle,draw] (comp1) at (0.5,4.8)
        {$\begin{array}{@{}l@{}}
          r_1{:}\;\mygoinout(\myindoor) \lors \mygoinout(\myoutdoor) \,{\lif}. \\
          \text{derives: } \mygoinout(X)
        \end{array}$};
      \node[rectangle,draw,inner xsep=0.2em,anchor=east] (comp2) at (0.2,3)
        {$\begin{array}{@{}l@{}}
          r_2{:}\;\myneed(\myinout,C) \,{\lif} \ext{\mycost}{\mygoinout}{C}. \\
          c_8{:}\;{\lif}\, \myneed(X,\mymoney). \\
          \text{derives: } \myneed(\myinout,C)
        \end{array}$};
      \node[rectangle,draw,inner xsep=0.2em,anchor=west] (comp3) at (0.6,3)
        {$\begin{array}{@{}l@{}}
          r_3{:}\;\mygolocation(X) \lors \myngolocation(X) \lifs \\
          \qquad\qquad\mygoinout(P), \mylocation(P,X). \\
          r_4{:}\;\mygosomewhere \lifs \mygolocation(X). \\
          c_6{:}\;{\lif}\, \mygolocation(X), \mygolocation(Y), X \neqs Y. \\
          c_7{:}\;{\lif}\, \naf \mygosomewhere. \\
          \text{derives: } \mygolocation(X),\;
                           \myngolocation(X),\;
                           \mygosomewhere
        \end{array}$};
      \node[rectangle,draw] (comp4) at (0.5,1.0)
        {$\begin{array}{@{}l@{}}
          r_5{:}\;\myneed(\myloc,C) \lifs \ext{\mycost}{\mygolocation}{C}. \\
          c_8{:}\;{\lif}\, \myneed(X,\mymoney). \\
          \text{derives: } \myneed(\myloc,C)
        \end{array}$};
      \draw[->] (comp2) -- (comp1);
      \draw[->] (comp3) -- (comp1);
      \draw[->] (comp4) -- (comp2);
      \draw[->] (comp4) -- (comp3);
      \node[anchor=east,inner sep=0pt,outer sep=0pt] at (comp1.west) {$u_1\,$};
      \node[anchor=east,inner sep=0pt,outer sep=0pt] at (comp2.west) {$u_2\,$};
      \node[anchor=west,inner sep=0pt,outer sep=0pt] at (comp3.east) {$\,u_3$};
      \node[anchor=east,inner sep=0pt,outer sep=0pt] at (comp4.west) {$u_4\,$};
    \end{tikzpicture}%
    \endpgfgraphicnamed%
}%
\newcommand\myfigureExBetterEvalModel{%
    \beginpgfgraphicnamed{exBetterEvalModel}%
    \small%
    \begin{tikzpicture}[inner ysep=0.25em,inner xsep=0.2em,line width=0.7pt,>=latex',
      unit/.style={rectangle,draw,dashed}]%
    \begin{scope}
    \begin{scope}[yshift=3mm,xscale=1.5,yscale=-1.0]
    \node[rectangle,draw] (m1) at (0,1) {$\emptyset$};
      \node[anchor=south] (m1label) at (m1.north) {$m_1/\scI$};
    \node[rectangle,draw] (m2) at (-1,2) {
      $\begin{array}{@{}r@{}l@{}}
        \{ & \mygoinout(\myindoor) \} \end{array}$};
      \node[anchor=south east] at (m2.north) {$m_2/\scO$};
    \node[rectangle,draw] (m3) at (1,2) {
      $\begin{array}{@{}r@{}l@{}}
        \{ & \mygoinout(\myoutdoor) \} \end{array}$};
      \node[anchor=south west] at (m3.north) {$m_3/\scO$};
    \end{scope}
    \begin{scope}[xshift=-35mm,yshift=-22.5mm,xscale=0.75,yscale=-1.4]
    \node[rectangle,draw] (m4) at (-1,1) {$\myint(m_2)$};
      \node[anchor=south] (m4label) at (m4.north) {$m_4/\scI$};
    \node[rectangle,draw] (m5) at (1,1) {$\myint(m_3)$};
      \node[anchor=south] (m5label) at (m5.north) {$m_5/\scI$};
    \node[] (m4no) at (-1,2) {\Lightning};
    \node[rectangle,draw,minimum width=10mm] (m6) at (1,2) {$\emptyset$};
      \node[anchor=south east] at (m6.north) {$m_6/\scO$};
    \end{scope}
    \begin{scope}[xshift=36mm,yshift=-21.5mm,xscale=1.35,yscale=-1.3]
    \node[rectangle,draw] (m7) at (-1.9,1) {$\myint(m_2)$};
      \node[anchor=south] (m7label) at (m7.north) {$m_7/\scI$};
    \node[rectangle,draw] (m8) at (2,1) {$\myint(m_3)$};
      \node[anchor=south] at (m8.north) {$m_8/\scI$};
    \node[rectangle,draw] (m9) at (-2.9,2) {
      $\begin{array}{@{}r@{}l@{}}
        \{ & \mygosomewhere, \\
           & \myngolocation(\mypoolm), \\
           & \mi{goto}(\mypoola) \} \end{array}$};
      \node[anchor=south west] at (m9.north west) {$m_9/\scO$};
    \node[rectangle,draw] (m10) at (-1,2) {
      $\begin{array}{@{}r@{}l@{}}
        \{ & \mygosomewhere, \\
           & \myngolocation(\mypoola), \\
           & \mi{goto}(\mypoolm) \} \end{array}$};
      \node[anchor=south east] at (m10.north east) {$m_{10}/\scO$};
    \node[rectangle,draw] (m11) at (1,2) {
      $\begin{array}{@{}r@{}l@{}}
        \{ & \mygosomewhere, \\
           & \myngolocation(\mypoolg), \\
           & \mi{goto}(\mypooln) \} \end{array}$};
      \node[anchor=south west] at (m11.north west) {$m_{11}/\scO$};
    \node[rectangle,draw] (m12) at (3.2,2) {
      $\begin{array}{@{}r@{}l@{}}
        \{ & \mygosomewhere, \\
           & \myngolocation(\mypooln), \\
           & \mi{goto}(\mypoolg) \} \end{array}$};
      \node[anchor=south east] at (m12.north east) {$m_{12}/\scO$};
    \end{scope}
    \begin{scope}[xshift=18mm,yshift=-59.0mm,xscale=3.0,yscale=-1.1]
    \node[rectangle,draw] (m13) at (-1,1) {
      $\begin{array}{@{}r@{}l@{}}
        \{ & \mygosomewhere, \mygolocation(\mypooln),
             \myngolocation(\mypoolg) \} \end{array}$};
      \node[anchor=south west] (m13label) at (m13.north west) {$m_{13}/\scI$};
    \node[rectangle,draw] (m14) at (1,1) {
      $\begin{array}{@{}r@{}l@{}}
        \{ & \mygosomewhere, \mygolocation(\mypoolg),
             \myngolocation(\mypooln) \} \end{array}$};
      \node[anchor=south] (m14label) at (m14.north) {$m_{14}/\scI$};
    \node[rectangle,draw] (m15) at (-1,2) {
      $\begin{array}{@{}r@{}l@{}}
        \{ & \myneed(\myloc,\myyogamat) \} \end{array}$};
      \node[anchor=south west] at (m15.north west) {$m_{15}/\scO$};
    \node (m14no) at (1,2) {\Lightning};
    \end{scope}
    \end{scope}
    \begin{scope}[->]
    \draw (m2) -- (m1);
    \draw (m3) -- (m1);
    \draw (m4) -- (m2);
    \draw (m5) -- (m3);
    \draw (m4no) -- (m4);
    \draw (m6) -- (m5);
    \draw (m7) -- (m2);
    \draw (m8) -- (m3);
    \draw (m9) -- (m7);
    \draw (m10) -- (m7);
    \draw (m11) -- (m8);
    \draw (m12) -- (m8);
    \draw (m13) -- (m6.south);
    \draw (m13) -- (m11.south);
    \draw (m14) -- (m6.south);
    \draw (m14) -- (m12.south);
    \draw (m15) -- (m13);
    \draw (m14no) -- (m14);
    \end{scope}
    \node[unit,inner sep=0.4em,
      fit=(m1label) (m2) (m3)] (u1) {};
      \node[left=0mm of u1] {\rotatebox{90}{at unit $u_1$}};
    \node[unit,inner sep=0.4em,
      fit=(m4) (m4label) (m5) (m6)] (u2) {};
      \node[anchor=south west] at (u2.north west) {at unit $u_2$};
    \node[unit,inner sep=0.4em,
      fit=(m7) (m7label) (m9) (m12)] (u3) {};
      \node[anchor=south east] at (u3.north east) {at unit $u_3$};
    \node[unit,inner sep=0.4em,
      fit=(m13) (m13label) (m14) (m15)] (u4) {};
      \node[left=0mm of u4] {\rotatebox{90}{at unit $u_4$}};
    \node[unit,right=25mm of u1,
      minimum height=4em,minimum width=8em] (key) {};
    \node[anchor=south] at (key.center) {\reva{$m$ / $\mytype(m)$}};
    \node[anchor=north,rectangle,draw,minimum width=6em]
      at (key.center) {\reva{$\myint(m)$}};
    \node[above=0mm of key] (keyat) {\reva{at unit $\myunit(m)$}};
    \node[anchor=south east] at (key.west) {\reva{Key:}~~~~};
    \end{tikzpicture}%
    \endpgfgraphicnamed%
}%
\newcommand\myfigureCAUexample[1]{
\begin{figure}[#1]
  \centering
  \beginpgfgraphicnamed{myCAUexample}%
  \small%
  \begin{tikzpicture}[line width=0.7pt,>=latex']%
    \begin{scope}[vtx/.style={circle,draw}]
      \node[vtx] (a) at (2,0) {a};
      \node[vtx] (b) at (1,1) {b};
      \node[vtx] (c) at (0,2) {c};
      \node[vtx] (d) at (2,2) {d};
      \node[vtx] (e) at (1,3) {e};
      \node[vtx] (f) at (0,4) {f};
      \node[vtx] (g) at (2,4) {g};
      \begin{scope}[-latex']
      \draw (a) -- (b);
      \draw (a) -- (d);
      \draw (b) -- (c);
      \draw (b) -- (d);
      \draw (c) -- (e);
      \draw (d) -- (e);
      \draw (e) -- (f);
      \draw (e) -- (g);
      \end{scope}
    \end{scope}
    \node at (4,3) {$\mycau(b) = \{ e \}$};
    \node at (4,1) {$\mycau(a) = \{ d,e \}$};
  \end{tikzpicture}%
  \endpgfgraphicnamed%
  \caption{First Ancestor Intersection units (\myCAUstext) in an evaluation graph.}
  \label{fig:cauexamples}
\end{figure}
}%
\newcommand\myfiguregoodbadcau[1]{
\begin{figure}[#1]
  \centering
  \beginpgfgraphicnamed{goodbadcaukey}%
  \small%
  \begin{tikzpicture}
    \begin{scope}[xscale=0.3,yscale=0.3]
    \draw (0,-0.75) rectangle (2,0.75);
    \node[anchor=west] at (2,0) {unit};

    \draw[-open triangle 45] (0,-2) -- (2,-2);
    \node[anchor=west] at (2,-2) {unit dependency};

    \node[circle,draw,anchor=east] at (2,-4) {};
    \node[anchor=west] at (2,-4) {\myiint};

    \node[circle,fill,anchor=east] at (2,-6) {};
    \node[anchor=west] at (2,-6) {\myoint};

    \draw[-triangle 45] (0,-8) -- (2,-8);
    \node[anchor=west] at (2,-8) {dependency};

    \node at (2,-10) {~};
    \end{scope}
  \end{tikzpicture}%
  \endpgfgraphicnamed%
  \quad
  \beginpgfgraphicnamed{badcau}%
  \small%
  \begin{tikzpicture}%
    \begin{scope}[scale=0.5]

      \coordinate (top) at (3,6);
      \coordinate (left) at (-1,3);
      \coordinate (right) at (7,3);
      \coordinate (bot) at (3,-1);

      \node[starburst,draw,minimum width=2.3cm,minimum height=1.3cm,
        starburst points=17,starburst point height=0.3cm,random starburst=344] at ($(top)+(0,-2)$) {};
      \node at ($(top)+(0,-3.5)$) {Violation!};

      \draw (top) ++(-1.5,0) rectangle +(3,1.5);

      \draw (left) rectangle +(+1,-2.5);

      \draw (right) rectangle +(-1,-2.5);

      \draw (bot) ++(1.5,0) rectangle +(-3,-1.5);

      \node[circle,draw] (topi) at ($(top)+(0,-0.5)$) {};
      \node[circle,fill] (topol) at ($(top)+(-0.5,-2.0)$) {};
      \node[circle,fill] (topor) at ($(top)+(+0.5,-2.0)$) {};

      \node[circle,draw] (lefti) at ($(left)+(1.5,-0.5)$) {};
      \node[circle,fill] (lefto) at ($(left)+(1.5,-2.0)$) {};

      \node[circle,draw] (righti) at ($(right)+(-1.5,-0.5)$) {};
      \node[circle,fill] (righto) at ($(right)+(-1.5,-2.0)$) {};

      \node[circle,draw] (boti) at ($(bot)+(0,0.5)$) {};

      \begin{scope}[-triangle 45]
      \draw (topol) -- (topi);
      \draw (topor) -- (topi);
      \draw (lefto) -- (lefti);
      \draw (righto) -- (righti);

      \draw (lefti) -- (topol);
      \draw (righti) -- (topor);
      \draw (boti) -- (lefto);
      \draw (boti) -- (righto);
      \end{scope}
      \begin{scope}[-open triangle 45]
      \draw ($(left)+(+0.5,0)$) -- ($(top)+(-1.5,0.75)$);
      \draw ($(right)+(-0.5,0)$) -- ($(top)+(+1.5,0.75)$);
      \draw ($(bot)+(-1.5,-0.75)$) -- ($(left)+(+0.5,-2.5)$);
      \draw ($(bot)+(+1.5,-0.75)$) -- ($(right)+(-0.5,-2.5)$);
      \end{scope}
    \end{scope}
  \end{tikzpicture}%
  \endpgfgraphicnamed%
  \quad
  \beginpgfgraphicnamed{goodcau}%
  \small%
  \begin{tikzpicture}%
    \begin{scope}[scale=0.5]

      \coordinate (top) at (3,6);
      \coordinate (left) at (-1,3);
      \coordinate (right) at (7,3);
      \coordinate (bot) at (3,-1);

      \node at ($(top)+(0,-3.5)$) {OK!};

      \draw (top) ++(-1.5,0) rectangle +(3,1.5);

      \draw (left) rectangle +(1,-2.5);

      \draw (right) rectangle +(-1,-2.5);

      \draw (bot) ++(1.5,0) rectangle +(-3,-1.5);

      \node[circle,draw] (topi) at ($(top)+(0,-0.5)$) {};
      \node[circle,fill] (topo) at ($(top)+(0,-2.0)$) {};

      \node[circle,draw] (lefti) at ($(left)+(1.5,-0.5)$) {};
      \node[circle,fill] (lefto) at ($(left)+(1.5,-2.0)$) {};

      \node[circle,draw] (righti) at ($(right)+(-1.5,-0.5)$) {};
      \node[circle,fill] (righto) at ($(right)+(-1.5,-2.0)$) {};

      \node[circle,draw] (boti) at ($(bot)+(0,0.5)$) {};

      \begin{scope}[-triangle 45]
      \draw (topo) -- (topi);
      \draw (lefto) -- (lefti);
      \draw (righto) -- (righti);

      \draw (lefti) -- (topo);
      \draw (righti) -- (topo);
      \draw (boti) -- (lefto);
      \draw (boti) -- (righto);
      \end{scope}
      \begin{scope}[-open triangle 45]
      \draw ($(left)+(0.5,0)$) -- ($(top)+(-1.5,0.75)$);
      \draw ($(right)+(-0.5,0)$) -- ($(top)+(+1.5,0.75)$);
      \draw ($(bot)+(-1.5,-0.75)$) -- ($(left)+(0.5,-2.5)$);
      \draw ($(bot)+(+1.5,-0.75)$) -- ($(right)+(-0.5,-2.5)$);
      \end{scope}
    \end{scope}
  \end{tikzpicture}%
  \endpgfgraphicnamed%
  \caption{Interpretation Graphs:
    violation of the \myCAUtext condition
    on the left,
    correct situation on the right.}
  \label{fig:goodbadcau}
\end{figure}
}%
\newcommand\myfigureAtomDepGraph[1]{
  \begin{figure}[#1]%
    \centering%
    \myfigureTikzAtomDepGraph%
    \caption{Atom dependency graph of running example $\myPswim$.}
    \label{fig:swimmingatomdepgraph}
  \end{figure}
}%
\newcommand\myfigureRuleDepGraph[1]{
  \begin{figure}[#1]%
    \centering%
    \myfigureTikzRuleDepGraph%
    \caption{Rule dependency graph of running example $\myPswim$.}
    \label{fig:swimmingruledepgraph}
  \end{figure}
}%
\newcommand\myfigureOldEvalGraph[1]{
  \begin{figure}[#1]%
    \centering%
    \myfigureExOldEvalStrat%
    \caption{Evaluation graph $\cE_1$ for running example \hex program $\myPswim$.}
    \label{fig:exOldEvalStrat}
  \end{figure}
}%
\newcommand\myfigureBetterEvalGraph[1]{
  \begin{figure}[#1]%
    \centering%
    \myfigureExBetterEvalStrat%
    \caption{Evaluation graph $\cE_2$ for running example \hex program $\myPswim$.}
    \label{fig:exBetterEvalStrat}
  \end{figure}
}%
\newcommand\myfigureModelGraphBetterStrat[1]{
  \begin{figure}[#1]%
    \centering%
	\resizebox{\textwidth}{!}{
    \myfigureExBetterEvalModel%
	}
    \caption{Interpretation graph $\cI_2$ for $\cE_2$: \reva{dashed areas group interpretations according to their $\myunit(\cdot)$ value.}}
    \label{fig:exModelGraphBetterStrat}%
  \end{figure}
}%
\newcommand\myfigureAlgoBuildAnswerSets[1]{%
  \begin{algorithm}[#1]
    \caption{\myBuildAnswerSets}%
    \label{alg:buildAnswerSets}
    \DontPrintSemicolon
    \SetAlgoVlined
    \SetVlineSkip{1mm}
    \SetCommentSty{footnotesize}

    \KwIn{$\cE=(V,E)$: evaluation graph for \hex program $P$,
          which contains a unit $\ufinal$ that depends on all other units in $V$}
    \KwOut{a set of all answer sets of $P$}
    $M := \emptyset$,
    $F := \emptyset$,
    $\myunit := \emptyset$,
    $\mytype := \emptyset$,
    $\myint := \emptyset$,
    $U := V$ \;
\smallskip

    \nlset{(a)}\label{step:whileloop}%
    \While{$U \neq \emptyset$}{
      choose $u \in U$ s.t.\ $\myinputs(u) \cap U = \emptyset$ \;
      let $\{ u_1,\ldots,u_k \} = \myinputs(u)$\;
      \eIf{$k = 0$}{%
        \nlset{(b)}\label{step:emptyinputmodels}%
        $m := \mi{max}(M) + 1$ \;
        $M := M \cup \{m\}$ \;
        $\myunit(m) := u$,
        $\mytype(m) := \scI$,
        $\myint(m) := \emptyset$\;
      }{%
        \nlset{(c)}\label{step:firstforloop}%
        \For{$m_1 \in \myomodels(u_1),\dotsc, m_k \in \myomodels(u_k)$}{%
          \If{$J = m_1 \join \dotsb \join m_k$ is defined}{%
            $m := \mi{max}(M) + 1$ \;
            $M := M \cup \{m\}$,
            $F := F \cup \{ (m,m_i) \mid 1\leq i\leq k \}$ \;
            $\myunit(m) := u$,
            $\mytype(m) := \scI$,
            $\myint(m) := J$ \;
          }%
        }
      }
      \nlset{(d)}\label{step:return}%
      \If{$u = \ufinal$}{%
        \Return{$\myimodels(\ufinal)$} \;
      }
      \nlset{(e)}\label{step:secondforloop}%
      \For{$m' \in \myimodels(u)$}{%
        $O := \EvaluateLDESafe(u, \myint(m'))$ \;
        \For{$o \in O$}{%
          $m := \mi{max}(M) + 1$ \;
          $M := M \cup \{m\}$,
          $F := F \cup \{ (m,m') \}$ \;
          $\myunit(m) := u$,
          $\mytype(m) := \scO$,
          $\myint(m) := o$ \;
        }%
      }%
      \nlset{(f)}\label{step:removeu}%
      $U := U \setminus \{ u \}$ \;
    }
  \end{algorithm}
}%
\newcommand\myfigureAlgoEvaluateLDESafe[1]{%
  \begin{algorithm}[#1]
  \caption{\EvaluateLDESafe}
  \label{alg:EvaluateLDESafe}
  \DontPrintSemicolon

  \KwIn{A liberally de-safe \hex{}-program $P$, an input interpretation $I$}
  \KwOut{All answer sets of $P \cup \mathit{facts}(I)$ without $I$}

  \tcp*[h]{add input facts and ground, cf.~\cite{eite-etal-14a}}\;
  $P' \leftarrow \GroundLiberallyDomainExpansionSafeProgram(P \cup \mathit{facts}(I))$\;

  \tcp*[h]{evaluate the ground program, cf.~\cite{efkrs2014-jair},}\;
  \tcp*[h]{and perform output projection}\;
  \Return $\big\{ I' \setminus I \mid I' \in \EvaluateGroundHEX(P') \big\}$\;
  \end{algorithm}%
}%
\newcommand\myfigureAlgoGetNextIModel[1]{%
  \begin{algorithm}[#1]
    \caption{$\myGetNextIModel(u)$}
    \label{alg:getNextIModel}
    \DontPrintSemicolon
    \SetAlgoVlined
    \SetVlineSkip{1mm}
    \SetCommentSty{small}
    \KwIn{$u$: unit}
    \KwOut{$m_{out}$: imodel at $u$ or $\undef$}

\smallskip

    \nlset{(a)}\label{step:gnimLeaf}%
    \If{$\myinputs(u) = \emptyset$}{%
      \uIf{$\mycuri(u) = \undef$}{%
        \nlset{($+$)}%
        add imodel $\emptyset$ at $u$ to $\cA$\;
        \Return{$\emptyset$}\;
      }
      \lElse{%
        \Return{$\undef$}\;
      }%
    }
    let $\{ u_1,\ldots,u_k \} = \myinputs(u)$
    \tcc*{assume this order is fixed for each unit $u$}
    \uIf{$\mycuri(u) \neq \undef$}{%
      $at := \myEnsureModelIncrement(u,1)$\;
      \lIf{$at = \undef$}{\Return{$\undef$}}\;
      $at := at - 1$\;
    }%
    \lElse{$at := k$}\;
    \nlset{(b)}\label{step:gnimWhile}%
    \While{$at \neq 0$}{
      \uIf{$\mycuro(u_{at}) \neq \undef$}{%
        $\myrefcounto(u_{at}) := \myrefcounto(u_{at}) + 1$\;
        $at := at - 1$\;
      }
      \Else{%
        $m := \myGetNextOModel(u_{at})$\;
        \uIf{$m = \undef$}{%
          \lIf{$at \eqs k$}{\Return{$\undef$}}\;
          $at := \myEnsureModelIncrement(u,at+1)$\;
          \lIf{$at = \undef$}{\Return{$\undef$}}\;
        }%
        \Else{%
          $\myrefcounto(u_{at}) := \myrefcounto(u_{at}) + 1$\;
          $at := at - 1$\;
        }
      }
    }
    let $m = \mycuro(u_1) \join \cdots \join \mycuro(u_k)$\;
    \nlset{($+$)}%
    add imodel $m$ to $\cA$ with dependencies to $\mycuro(u_1),\ldots,\mycuro(u_k)$\;
    \Return{$m$}\;
  \end{algorithm}
}%
\newcommand\myfigureAlgoEnsureModelIncrement[1]{%
  \begin{algorithm}[#1]
    \caption{$\myEnsureModelIncrement(u,at)$}
    \label{alg:ensureModelIncrement}
    \DontPrintSemicolon
    \SetAlgoVlined
    \SetVlineSkip{1mm}
    \SetCommentSty{small}
    \KwIn{$u$: unit with $\{u_1,\ldots,u_k\} = \myinputs(u)$, $at$: index $1 \leq at \leq k$}
    \KwOut{$at'$: index $at \leq at' \leq k$ or $\undef$}

\smallskip

    \Repeat{$at = k+1$}{%
      $\myrefcounto(u_{at}) := \myrefcounto(u_{at}) - 1$\;
      $m := \myGetNextOModel(u_{at})$\;
      \lIf{$m = \undef$}{%
        $at := at + 1$\;
      }%
      \Else{%
        $\myrefcounto(u_{at}) := \myrefcounto(u_{at}) + 1$\;
        \Return{$at$}\;
      }
    }
    \Return{$\undef$}\;
  \end{algorithm}
}%
\newcommand\myfigureAlgoGetNextOModel[1]{%
  \begin{algorithm}[#1]
    \caption{$\myGetNextOModel(u)$}
    \label{alg:getNextOModel}
    \DontPrintSemicolon
    \SetAlgoVlined
    \SetVlineSkip{1mm}
    \SetCommentSty{small}
    \KwIn{$u$: unit}
    \KwOut{$m_{out}$: next omodel at $u$ or $\undef$}

\smallskip

    \lIf{$\myrefcounto(u) > 0$}{\Return{\undef}}\;
    \lIf{$\mycuri(u) = \undef$}{$\mycuri(u) := \myGetNextIModel(u)$}\;
    \While{$\mycuri(u) \neq \undef$}{%
      $\mycuro(u) := \myNextAnswerSet(u \cups \facts(\mycuri(u)),\mycuro(u))$\;
      \If{$\mycuro(u) \neq \undef$}{%
        \nlset{($+$)}%
        add omodel $\mycuro(u)$ to $\cA$ with dependency to $\mycuri(u)$\;
        \Return{$\mycuro(u)$}\;
      }
      $\mycuri(u) := \myGetNextIModel(u)$\;
    }
    \Return{$\undef$}\;
  \end{algorithm}
}%
\newcommand\myfigureAlgoOnDemandAS[1]{%
  \begin{algorithm}[#1]
    \caption{$\myOnDemandAS$}
    \label{alg:OnDemandAS}
    \DontPrintSemicolon
    \SetAlgoVlined
    \SetVlineSkip{1mm}
    \SetCommentSty{small}
    \KwIn{evaluation graph $\cE$ for program $P$, with final unit $\ufinal=\emptyset$}
    \KwOut{the answer sets of $P$}
\smallskip

    initialize global storage ${\cal S}$\;
    \Repeat{$m_{out} = \undef$}{%
      $m_{out}$ := \myGetNextOModel($\ufinal$)\;
      \lIf{$m_{out} \neq \undef$}{output $m_{out}$}
      }
\label{MyAlgOnDemandAS}
  \end{algorithm}
}%

\begin{document}
	\maketitle

\vspace*{-1ex}
\noindent
{\bfseries Note:} This article has been accepted for publication in \emph{Theory and Practice of Logic Programming}, \copyright\ Cambridge University Press.

\smallskip
\begin{abstract}
  As software systems are getting increasingly connected, there is a
  need for equipping nonmonotonic logic programs with access to
  external sources that are possibly remote and may contain
  information in heterogeneous formats. To cater for this need, \hex{}
  programs were designed as a generalization of answer set
  programs with an API style interface that allows to access arbitrary
  external sources, providing great flexibility. Efficient evaluation of
  such programs however is challenging, and it requires to interleave external
  computation and model building;
  to decide when to switch between these tasks is difficult, and
  existing approaches have limited scalability in many real-world application scenarios.
  We present a new approach for the evaluation of logic
  programs with external source access, which is based on a configurable
  framework for dividing the non-ground program into
  possibly overlapping
  smaller parts called evaluation units. The latter will be
  processed by interleaving
  external evaluation and model building
  using an evaluation graph and a model graph, respectively, and by
  combining intermediate results.
  Experiments with our prototype implementation show a significant
  improvement
  compared to previous approaches.  While
  designed for \hex-programs, the new evaluation approach may be
  deployed to related rule-based formalisms as well.
\end{abstract}

\begin{keywords}
  Answer Set Programming, Model Building, External Computation, \hex{} Programs
\end{keywords}

\abovedisplayshortskip=2pt
\belowdisplayshortskip=2pt
\abovedisplayskip=4pt
\belowdisplayskip=4pt
\maketitle

\section{Introduction}

Motivated by a need for knowledge bases to access external sources,
extensions of declarative KR formalisms have been conceived that
provide this capability, which is often realized via an API-style
interface. In particular, \hex programs~\cite{eist2005-ijcai}
extend nonmonotonic logic programs under the stable model
semantics with the possibility to
bidirectionally access external sources of knowledge and/or computation.
E.g., a rule

\smallskip
\centerline{%
$\mi{pointsTo}(X,Y) \leftarrow \ext{hasHyperlink}{X}{Y}, \mi{url}(X)$
}
\smallskip

\noindent might be used for obtaining
pairs of URLs $(X,Y)$, where $X$ actually links $Y$ on the Web, and $\amp{hasHyperlink}$ is an {\em external
predicate} construct.
Besides
constants (i.e., values) as above, also
relational knowledge (predicate extensions) can flow from external sources
to the logic program
and vice versa, and recursion involving external predicates is allowed under
safety conditions.
This facilitates a variety of applications that require
logic programs to interact with external environments, such as querying
RDF sources using SPARQL~\cite{polleres2007-www},
default rules on ontologies~\cite{hlkh2007,dek2009},
complaint management in e-government~\cite{zy2008-obi},
material culture analysis~\cite{Mosca:Bernini:08},
user interface adaptation~\cite{DBLP:conf/icchp/ZakraouiZ12},
multi-context reasoning~\cite{be2007}, or
robotics and planning~\cite{Schuller2013loitamp,Havur2014},
to mention a few.

Despite the absence of function symbols, an unrestricted use of external
atoms leads to undecidability, as new constants may be introduced from
the sources; in iteration, this can lead to an infinite Herbrand
universe for the program.  However, even under suitable restrictions like liberal
domain-expansion safety~\cite{eite-etal-14a} that avoid this problem,
the efficient evaluation of \hex-programs is challenging, due to aspects
\reva{such as} nonmonotonic atoms and recursive access (e.g., in transitive
closure computations).

Advanced in this regard
was the work by~\citeN{2012_conflict_driven_asp_solving_with_external_sources}, which
fostered an evaluation approach using a traditional LP system. Roughly,
the values of ground external atoms are guessed, model candidates are
computed as answer sets of a rewritten program, and then those discarded
which violate the guess.  Compared to previous approaches such
as the one by~\citeN{eiter-etal-06}, it further exploits conflict-driven techniques
which were extended to external sources.  A generalized notion of
Splitting Set~\cite{lifs-turn-94} was introduced by~\citeN{eiter-etal-06}
for non-ground \hex-programs, which were then split into subprograms
with and without external access, where the former are as large and the
latter as small as possible.
The subprograms
are evaluated with various specific techniques, depending on their
structure~\cite{eiter-etal-06,rs2006}. However, for
real-world applications this approach has severe scalability
limitations, as the number of ground
external atoms may be large, and their combination causes a huge number of model
candidates and memory outage without any answer set output.

To remedy this problem, we reconsider model computation and
make several contributions, which are summarized as follows.
\begin{myitemize}
\item We present a modularity property of \hex-programs based on a novel
generalization of the Global Splitting Theorem~\cite{eiter-etal-06},
which lifted the Splitting Set Theorem~\cite{lifs-turn-94} to \hex-programs.
In contrast to previous
 results, the new result is formulated on a {\em rule splitting set}
 comprising rules that may be non-ground,
 moreover it is based on rule dependencies rather than atom
 dependencies.  This theorem allows for defining answer sets of the
 overall program in terms of the answer sets of program components that may be non-ground.

\item Moreover, we present a generalized version of the new splitting theorem which
allows for sharing constraints
across the split;  %
this helps to prune irrelevant partial models and candidates earlier than in previous
approaches.
As a consequence --- and different from other decomposition approaches---
subprograms for evaluation may overlap and also be non-maximal
(resp.{} non-minimal).

\item Based on \reva{the generalized splitting theorem}, we present an evaluation framework that allows for flexible
evaluation of \hex-programs. It consists of an {\em evaluation graph} and a  {\em model
    graph}; the former captures a modular decomposition and partial
  evaluation order of the program, while the latter comprises for each node collections of sets of input models (which
  need to be combined) and output models to be passed on between components.  This
  structure allows us to realize customized divide-and-conquer
  evaluation strategies.  As the method works on non-ground programs,
  introducing new values by external calculations is feasible, as well as
  applying optimization based on domain splitting~\cite{efk2009-ijcai}.

\item A generic prototype of the evaluation framework has been implemented
 which can be instantiated with different  solvers for Answer Set Programming (ASP)
(in our suite, with \dlv and \clasp). It also features {\em model
 streaming}, i.e., enumeration of the models one by one. In
  combination with early model pruning, this can considerably reduce
  memory consumption and avoid termination without solution output in a
  larger number of settings.

Applying it to ordinary programs (without external functions) allows us to do
parallel solving with a solver software that does not have parallel computing capabilities itself
(\quo{parallelize from outside}).

\end{myitemize}

\reva{This paper,
which significantly extends work in
\cite{2011_pushing_efficient_evaluation_of_hex_programs_by_modular_decomposition} and parts of \cite{ps2012},
}
is organized as follows. In Section~\ref{sec:languageoverview}
we present the \hex-language and consider an example to demonstrate it
in an intuitive way; we will use it as a running example throughout the paper.
In Section~\ref{sec:prelims}
we then introduce necessary restrictions and preliminary concepts that
form dependency-based program evaluation.  After that, we develop in
Section~\ref{sec:ruledepsgeneralizedsplitting} our generalized splitting
theorem, which is applied in Section~\ref{sec:decomposition} to build a
new decomposition framework.  \reva{D}etails about the implementation and
experimental results are given in Section~\ref{sec:heximpl}.
After a discussion including related work in Section~\ref{sec:relatedanddiscussion},
the paper concludes in Section~\ref{sec:conclusion}.
The proofs of all technical results \reva{are given in} \ref{sec:proofs}.

\section{Language Overview}
\label{sec:languageoverview}

In this section, we introduce the syntax and semantics of \hex-programs
as far as this is necessary to explain use cases and basic modeling in the language.
\subsection{\hex Syntax}
\label{sec:prelims_hex_syntax}

Let $\cC$, $\cX$, and $\cG$ be mutually disjoint sets whose elements are called
\emph{constant names}, \emph{variable names}, and \emph{external predicate
names}, respectively.  Unless explicitly specified, elements from $\cX$
(resp., $\cC$) are denoted with first letter in upper case (resp., lower
case), while elements from $\cG$ are prefixed with \quo{\,\&\,}.
Note that constant names serve both as individual and predicate names.

Elements from $\cC\cup\cX$ are called \emph{terms}.
An {\em atom} is a tuple $(Y_0,Y_1,\dots,Y_n)$, where $Y_0,\dots, Y_n$ are
terms; $n\geq 0$ is the \emph{arity} of the atom.
Intuitively, $Y_0$ is the
predicate name, and we thus also use the more familiar notation
$Y_0(Y_1,\dots,Y_n)$.
The atom is {\em ordinary} (resp. \emph{higher-order}), if $Y_0$ is a
constant (resp. a variable).  An atom is {\em ground}, if all its terms
are constants.
Using an auxiliary predicate $\mi{aux}_n$ for each arity $n$,
we can easily eliminate higher-order atoms by rewriting them
to ordinary atoms $\mi{aux}_n(Y_0,\ldots,Y_n)$.
We therefore assume in the rest of this article that programs have no
higher-order atoms.

An {\em external atom}\/ is of the form
\begin{equation}
  \extg[Y_1,\dots,Y_n](X_1,\dots,X_m),
\end{equation}
where $Y_1,\dots,Y_n$ and $X_1,\dots,X_m$ are two lists of terms
(called {\em input} and {\em output} lists, respectively),
and $\extg \in \cG$ is an external predicate name.
We assume that $\extg$ has fixed lengths
$\mi{in}(\extg)=n$ and $\mi{out}(\extg)=m$ for input and output lists, respectively.

Intuitively, an external atom provides a way for deciding the truth value of
an output tuple depending on the input tuple and a given interpretation.

\begin{example}
  \label{ex:eatomintuition}
  $(a,b,c)$, $a(b,c)$, $\mi{node}(X)$, and $D(a,b)$ are atoms;
  the first three are ordinary, where the second atom is a syntactic variant of the first,
  while the last atom is higher-order.

  The external atom
  $\amp{reach}[\mi{edge},a](X)$
  may be devised for computing the nodes which are reachable
  in a graph \reva{represented by atoms of form $\mi{edge}(u,v)$}
  from node $a$.
  We have for the input arity
  $\mi{in}(\amp{reach}) = 2$ and
  for the output arity $\mi{out}(\amp{reach}) = 1$.
  Intuitively, \reva{given an interpretation $I$,}
  $\amp{reach}[\mi{edge},a](X)$ will be true
  for all ground substitutions $X \mapsto b$
  such that $b$ is a node in the graph given by
  \reva{edge list $\{ (u,v) \mids \mi{edge}(u,v) \ins I \}$},
  and there is a path from $a$ to $b$ in that graph.
  \qed
\end{example}

\begin{definition}[rules and \hex programs]
A  {\em rule $r$} is of the form
\begin{equation}
\label{eq:rule}
  \alpha_1\lor\cdots\lor\alpha_k
  \leftarrow
  \beta_1, \dots, \beta_n,
  \naf\, \beta_{n+1},\dots,\naf\,\beta_{m}, \qquad m,k\geq 0,
\end{equation}
where all $\alpha_i$ are atoms
and all $\beta_j$ are either atoms or external atoms.
We let $H(r) = \{ \alpha_1,\ldots,\alpha_k\}$
and $B(r) = B^+(r) \cup B^-(r)$,
where $B^+(r) = \{ \beta_1, \dots, \beta_n\}$ and
$B^-(r) = \{\beta_{n+1}, \dots, \beta_{m} \}$.
Furthermore, a {\em (\hex) program} is a finite set $P$ of rules.
\end{definition}

We denote by $\cons(P)$ the set of constant symbols occurring in a
program $P$.

A rule $r$ is a {\em constraint}, if $H(r) = \emptyset$ and
$B(r)\neq\emptyset$;
a {\em fact}, if $B(r) =\emptyset$ and $H(r)\neq\emptyset$; and
{\em nondisjunctive}, if $|H(r)|\leq 1$.
We call $r$ {\em ordinary}, if it contains only ordinary atoms.
We call a program $P$ \emph{ordinary} (resp., \emph{nondisjunctive}),
if all its rules are ordinary (resp., nondisjunctive).
\reva{Note that facts can be disjunctive, i.e., contain multiple head atoms.}

\def\myneed{\mi{need}}
\def\mytime{\mi{time}}
\def\mymoney{\mi{money}}
\def\myplan{\mi{plan}}
\def\myuse{\mi{use}}
\def\mycost{\mi{rq}}
\def\mychoose{\mi{choose}}
\def\mylocation{\mi{location}}
\def\mygoinout{\mi{swim}}
\def\mygolocation{\mi{goto}}
\def\myngolocation{\mi{ngoto}}
\def\mygosomewhere{\mi{go}}
\def\mypoolm{\mi{margB}}
\def\mypoola{\mi{amalB}}
\def\mypoolg{\mi{gansD}}
\def\mypooln{\mi{altD}}
\def\myindoor{\reva{\mi{ind}}}
\def\myoutdoor{\reva{\mi{outd}}}
\def\myinout{\mi{inoutd}}
\def\myloc{\mi{loc}}
\def\mygoggles{\mi{goggles}}
\def\myyogamat{\mi{yogamat}}
\def\myPswim{\ensuremath{P_\mi{swim}}}
\begin{example}[Swimming Example]
  \label{ex:swimming}
  Imagine Alice wants to go for a swim in Vienna.
  She knows two indoor pools called Margarethenbad and Amalienbad
  (represented by $\mypoolm$ and $\mypoola$, respectively),
  and she knows that outdoor swimming is possible in the river Danube at two locations called
  G\"anseh\"aufel and Alte Donau
  (denoted $\mypoolg$ and $\mypooln$, respectively).%
  \footnote{To keep the example simple, we assume Alice knows
no other possibilities to go swimming in Vienna.}
  She looks up on the Web whether she needs to pay an entrance fee,
  and what additional equipment she will need.
  Finally she has the constraint that she does not want to pay for swimming.

\begin{figure}[tb]
\qquad{$\myPswim^\mi{EDB} =
\left\{ \begin{array}{@{}l@{}}
    \mylocation(\myindoor,\mypoolm),
    \mylocation(\myindoor,\mypoola),\\ \mylocation(\myoutdoor,\mypoolg),
    \mylocation(\myoutdoor,\mypooln)
    \end{array}\right\}$}

\medskip
\qquad{$\myPswim^\mi{IDB} =
\left\{ \begin{array}{r@{:~}r@{~}l@{}}
      r_1 & \mygoinout(\myindoor) \lor \mygoinout(\myoutdoor) \lif &. \\
      r_2 & \myneed(\myinout,C) \lif & \ext{\mycost}{\mygoinout}{C}. \\
      r_3 & \mygolocation(X) \lor \myngolocation(X) \lif & \mygoinout(P), \mylocation(P,X). \\
      r_4 & \mygosomewhere \lif & \mygolocation(X). \\
      r_5 & \myneed(\myloc,C) \lif & \ext{\mycost}{\mygolocation}{C}. \\
      c_6 & \lif & \mygolocation(X), \mygolocation(Y), X \neq Y. \\
      c_7 & \lif & \naf \mygosomewhere. \\
      c_8 & \lif & \myneed(X,\mymoney).
    \end{array}\right\}$}
\caption{Program $\myPswim = \myPswim^\mi{EDB} \cup \myPswim^\mi{IDB}$ to decide swimming location}
\label{fig:Pswim}

\vspace*{-\baselineskip}

\end{figure}

  The \hex program
  $\myPswim = \myPswim^\mi{EDB} \cup \myPswim^\mi{IDB}$ shown in Figure~\ref{fig:Pswim}
  represents Alice's reasoning problem.
  The extensional part $\myPswim^\mi{EDB}$
  contains a set of facts about possible swimming locations
  (where $\myindoor$ and $\myoutdoor$ are short for $\mi{indoor}$ and $\mi{outdoor}$, respectively).
  The intensional part $\myPswim^\mi{IDB}$
  incorporates the web research of Alice
  in an external computation, i.e., using an external atom of the form
  $\ext{\mycost}{\reva{\mi{location\text{-}choice}}}{\reva{\mi{required\text{-}resource}}}$,
  \reva{which intuitively evaluates to true iff
  a given
  $\mi{location\text{-}choice}$
  requires a certain $\mi{required\text{-}resource}$}
  and represents such resources and their origin
  ($\myinout$, or $\myloc$) using predicate $\myneed$.
  Assume Alice finds out that indoor pools in general have an admission fee, and
  that one also has to pay at G\"anseh\"aufel, but not at Alte Donau.
  Furthermore Alice reads some reviews about swimming locations
  and finds out that she will need her Yoga mat for Alte Donau because the ground is so hard,
  and she will need goggles for Amalienbad because there is so much
  chlorine in the water.

  We next explain the intuition behind the rules in $\myPswim$:
  $r_1$ chooses indoor vs.\ outdoor swimming locations,
  and $r_2$ collects requirements that are caused by this choice.
  Rule $r_3$ chooses one of the indoor vs.\ outdoor locations,
  depending on the choice in $r_1$,
  and $r_5$ collects requirements caused by this choice.
  By $r_4$ and $c_7$ we ensure that some location is chosen,
  and by $c_6$ that only a single location is chosen.
  Finally $c_8$ rules out all choices that require money.
  \reva{Note that there is no apparent requirement
  for the first argument of predicate $\myneed$,
  however this argument ensures,
  that $r_2$ and $r_5$ have different heads,
  which becomes important in Example~\ref{ex:swimmingruledepgraph}.}

  The external predicate $\amp{\mycost}$ has \reva{input and output arity}
  $\mi{in}(\amp{\mycost}) \eqs \mi{out}(\amp{\mycost}) \eqs 1$\reva{.
  Intuitively} $\ext{\mycost}{\alpha}{\beta}$
  is true if a resource $\beta$ is required when swimming in a place
  in the extension of predicate $\alpha$.
  For example, $\ext{\mycost}{\mygoinout}{\mymoney}$
  is true if $\mygoinout(\myindoor)$ is true,
  because indoor swimming pool charge money for swimming.
  Note that this only gives an intuitive account of the semantics of $\amp{\mycost}$
  which will formally be defined in \reva{Example~\ref{ex:swimmingrqsemantics}}.
  \qed
\end{example}

\subsection{\hex Semantics}

The semantics of \hex-programs~\cite{eiter-etal-06,rs2006}
generalizes the answer-set semantics~\cite{1991_classical_negation_in_logic_programs_and_disjunctive_databases}.
Let $P$ be a \hex-program.
Then the {\em Herbrand base} of~$P$, denoted~$\HBP$,
is the set of all possible ground
versions of atoms and external atoms occurring in $P$ obtained by
replacing variables with constants from $\cC$.
The grounding of a rule $r$,
$\grnd(r)$,
is defined accordingly,
and the grounding of $P$ is given by
$\grnd(P)=\bigcup_{r\in P}\grnd(r)$.
Unless specified otherwise, %
$\cX$
and $\cG$ are implicitly given by $P$.
Different from the \quo{usual} ASP
setting,
the set of constants $\cC$ used for grounding a program
is only partially given by the program itself;
in \hex, external computations may introduce new constants that are relevant
for semantics of the program.

\begin{example}[ctd.]
  In $\myPswim$ the external atom $\amp{\mycost}$ can introduce constants
  $\myyogamat$ and $\mygoggles$ which are not contained in $\myPswim$,
  but they are relevant for computing answer sets of $\myPswim$.
  \qed
\end{example}

An {\em interpretation relative to} $P$ is any subset
$I \subseteq \HBP$ containing \reva{no external atoms}.
We say that~$I$ is a {\em model} of atom $a\in\HBP$,
denoted $I\,{\models}\, a$, if $a\in I$.

With every external predicate name $\extg \in \cG$,
we associate an $(n{+}m{+}1)$-ary Boolean function \reva{(called \emph{oracle function})}
$f_{\extg}$ assigning each tuple
$(I,y_1\ldots, y_n, x_1, \ldots,x_m)$
either $0$ or $1$,
where $n=\mi{in}(\extg)$, $m=\mi{out}(\extg)$, $I\subseteq \HBP$, and $x_i,y_j\in\cC$.
We say that~$I\subseteq \HBP$ is a {\em model} of a ground
external atom $a$ = $\mi{\&g}[y_1,\dots,y_n](x_1,\dots,x_m)$,
denoted~$I \models a$, if $f_{\extg} (I,y_1\ldots$, $y_n$, $x_1, \ldots,x_m) \,{=}\, 1$.\reva{\footnote{In the implementation,
Boolean functions for defining external sources are realized as plugins to the reasoner which exploit a provided interface and can be written either in Python or C++.}}

Note that this definition of external atom semantics is very general;
indeed an external atom may depend on every part of the interpretation.
Therefore we will later (Section~\ref{sec:extensionalextatomsemantics})
formally restrict external computations
such that they depend only on the extension of those predicates in $I$
which are given in the input list.
All examples and encodings in this work obey this restriction.

\begin{example}[ctd.]
  \label{ex:swimmingrqsemantics}
  The external predicate $\amp{\mycost}$ in $\myPswim$ represents Alice's knowledge
  about %
  swimming locations as follows:
  for any interpretation $I$ and some predicate (i.e., constant) $\alpha$,
  \begin{equation*}
    \begin{array}{l@{~}l@{~}l}
    \reva{I \modelss} \ext{\mycost}{\alpha}{\mymoney} &
      \text{iff } \extfun{\mycost}(I,\alpha,\mymoney) = 1 &
      \text{iff } \alpha(\myindoor) \in I \text{ or } \alpha(\mypoolg) \in I, \\
    \reva{I \modelss} \ext{\mycost}{\alpha}{\myyogamat} &
      \text{iff } \extfun{\mycost}(I,\alpha,\myyogamat) = 1 &
      \text{iff } \alpha(\mypooln) \in I\text{, and} \\
    \reva{I \modelss} \ext{\mycost}{\alpha}{\mygoggles} &
      \text{iff } \extfun{\mycost}(I,\alpha,\mygoggles) = 1 &
      \text{iff } \alpha(\mypoola) \in I.
    \end{array}
  \end{equation*}
  Due to this definition of $\extfun{\mycost}$,
  it holds, e.g., that
  $\{ \mygoinout(\myindoor) \} \models \ext{\mycost}{\mygoinout}{\mymoney}$.
  This matches the intuition about $\amp{\mycost}$
  indicated in the previous example.
  \qed
\end{example}

Let $r$ be a ground rule. Then we say that
\begin{enumerate}[(i)]
\itemsep=0pt
\item
$I$ satisfies the head of $r$, denoted   $I\models H(r)$,
  if  $I \models a$ for some $a \in H(r)$;
\item
$I$ satisfies the body of $r$ ($I\,{\models}\, B(r)$),
  if $I\,{\models}\, a$ for all $a\in B^+(r)$
  and $I\,{\not\models}\, a$ for all $a\in B^-(r)$; and
\item
$I$ satisfies $r$ ($I \models r$),
 if $I\,{\models}H(r)$ whenever $I\,{\models}\, B(r)$.
\end{enumerate}
We say that $I$ is a {\em model} of a \hex-program $P$,
denoted $I\models P$, if $I\,{\models}\, r$ for all $r\in \grnd(P)$.
We call $P$ {\em satisfiable}, if it has some model.

\begin{definition}[answer set]
Given a \hex-program $P$,
the {\em FLP-reduct} of $P$ with respect to $I\subseteq \HBP$,
denoted~$\fP^I$,
is the set of all $r\in \grnd(P)$ such that $I\models B(r)$.
Then $I\subseteq \HBP$ is an \emph{answer set of $P$}
if, $I$ is a minimal model of $\fP^I$.
\reva{We denote by $\AS(P)$ the set of all answer sets of $P$.}
\end{definition}

\begin{example}[ctd.]
  \label{ex:swimanswerset}
  The \hex program $\myPswim$
  with external semantics as given in the previous example
  has a single answer set
  \begin{align*}
  I = \{
    \mygoinout(\myoutdoor),
    \mygolocation(\mypooln),
    \myngolocation(\mypoolg),
    \mygosomewhere,
    \myneed(\myloc,\myyogamat)\}.
  \end{align*}
  (Here, and in following examples,
  we omit $\myPswim^\mi{EDB}$ from all interpretations and answer sets.)
  Under $I$, the external atom $\ext{\mycost}{\mygolocation}{\myyogamat}$ is true
  \reva{and}
  all others
  ($\ext{\mycost}{\mygoinout}{\mymoney}$,
  $\ext{\mycost}{\mygolocation}{\mymoney}$,
  $\ext{\mycost}{\mygoinout}{\myyogamat}$, \ldots)
  are false.
  Intuitively, answer set $I$ tells Alice
  to take her Yoga mat and go for a swim to Alte Donau.
  \qed
\end{example}

\hex programs \cite{eist2005-ijcai} are a conservative extension of disjunctive
(resp., normal)
logic programs under the answer set semantics:
answer sets of \emph{ordinary nondisjunctive \hex programs}
coincide with stable models of logic programs
as
proposed by \citeN{gelf-lifs-88},
and answer sets of \emph{ordinary \hex programs} coincide with
stable models of disjunctive logic programs~\cite{Przymusinski91stablesemantics,%
1991_classical_negation_in_logic_programs_and_disjunctive_databases}.

\revam{
The FLP-reduct as used in the \hex-semantics is equivalent to the GL-reduct,
which removes the default-negated part from the remaining rules and is used for ordinary ASP programs, but the former is superior
for programs with aggregates as it eliminates unintuitive answer sets.
To this end, consider the following example.

\begin{example}
\label{ex:flpsuperior}
Let $P$ be the \hex-program
\begin{align*}
   p(a) &\leftarrow \naf \ext{\mathit{not}}{p}{a} \\
   f &\leftarrow \naf p(a), \naf f
\end{align*}
where $f_{\amp{\mathit{not}}}(I,p,a) = 1$ if $p(a) \not\in I$
and $f_{\amp{\mathit{not}}}(I,p,a) = 0$ otherwise.

The program has the answer set candidates $I_1 = \{ p(a) \}$, $I_2 = \{ p(a), f \}$, $I_3 = \emptyset$ and $I_4 = \{ f \}$.
Under the GL-reduct, we have $P^{I_1} = P^{I_2} = \{ p(a) \leftarrow
\}$, $P^{I_3} = \{ f \leftarrow \}$ and $P^{I_4} = \emptyset$.
As $I_1$ is a minimal model of $P^{I_1}$,
it is a GL-answer set of $P$; no other candidate
is a GL-answer set. However, it is not intuitive that $I_1$ is an answer set as $p(a)$ supports itself.
Using the FLP-reduct, we get $f P^{I_1} = \{\, p(a)
\leftarrow \naf \ext{\mathit{not}}{p}{a} \,\}$.  But now $I_1$ is not a
minimal model of $f P^{I_1}$, as $I_3$ is also a model of $f
P^{I_1}$ and $I_3 \subsetneq I_1$. Similarly, one can check that
$I$  is not a minimal model of $f P^{I}$ for each other candidate $I$;
thus under the FLP-reduct, every interpretation fails to be an answer set.
\end{example}

In the previous example, all answer sets of a \hex program $P$ under the
FLP-reduct are in fact minimal models of $P$; this is not a coincidence
but holds in general.  For a study of properties of \hex-programs, we refer to
\cite{eist2005-ijcai,rs2006,DBLP:journals/tcs/WangYYSZ12} and~\cite{DBLP:journals/ai/ShenWEFRKD14},
where also
variants and refinements of the FLP-semantics are considered, as well as
a particular class of \hex-programs called description logic programs
(see Section~\ref{sec:languageoverview:using}).
 } %

\subsection{Using \hex-Programs for Knowledge Representation and Reasoning}
\label{sec:languageoverview:using}

While ASP is well-suited for many problems in artificial intelligence
and was successfully applied to a range of applications (cf.\ e.g.\
\reva{\cite{DBLP:journals/cacm/BrewkaET11}}),
modern trends computing, for instance in distributed systems and the World Wide Web, require accessing
other sources of computation as well.
\hex-programs cater for this need by its external atoms which provide a bidirectional
interface between the logic program and other sources.

One can roughly distinguish between two main usages of external sources,
which we will call
\emph{computation outsourcing}, \emph{knowledge outsourcing},
and combinations thereof.
However, we emphasize that this distinction concerns the usage in an application
but both are based on the same syntactic and semantic language constructs.
For each of these groups we will describe some typical use cases
which serve as usage patterns for external atoms when writing \hex-programs.

\subsubsection{Computation Outsourcing}

Computation outsourcing means to send the definition of a subproblem to an external source
and retrieve its result.
The input to the external source uses predicate extensions and constants to define the problem at hand
and the output terms are used to retrieve the result, which can in simple cases also be a \reva{B}oolean decision.

\paragraph{On-demand Constraints}

A special case of the latter case are on-demand constraints of type $\leftarrow \amp{forbidden}[p_1, \ldots, p_n]()$
which eliminate certain extensions of predicates $p_1, \ldots, p_n$.
Note that the external evaluation of such a constraint can also return reasons for conflicts to the reasoner in order to
restrict the search space and avoid reconstruction of the same conflict~\cite{2012_conflict_driven_asp_solving_with_external_sources}.
This is similar to the CEGAR approach in model checking~\cite{cgjlv2003}
\reva{and can be helpful for reducing the size of the ground program:
constraints do not need to be grounded
but they are outsourced into an external atom of the above form,
which then returns violated constraints as nogoods to the solver.
This technique has been used for efficient planning in robotics
where external atoms verify the feasibility of a 3D motion
\cite{Schuller2013loitamp}}.

\paragraph{Computations which cannot (easily) be Expressed by Rules}

Outsourcing computations also allows for including algorithms which
cannot easily or efficiently be expressed \reva{as a} logic program, e.g.,
because they involve floating-point numbers.  As a concrete example, an
artificial intelligence agent for the skills and tactics game
\emph{AngryBirds} needs to perform physics
simulations~\cite{cfgirw2013-pai}. As this requires floating point
computations which can practically not be done by rules as this would
either come at the costs of very limited precision or a blow-up of the
grounding, \hex-programs with access to an external source for physics
simulations are used.

\paragraph{Complexity Lifting}

External atoms can realize computations with a complexity higher than
the complexity of ordinary ASP programs.  The external atom serves than
as an \quo{oracle} for deciding subprograms.  While for the purpose of
complexity analysis of the formalism it is often assumed that external
atoms can be evaluated in polynomial
time~\cite{Faber04recursiveaggregates}\footnote{Under this assumption,
  deciding the existence of an answer set of a propositional
  \hex-program is $\Sigma^P_2$-complete.}, as long as external sources
are decidable there is no practical reason for limiting their complexity
(but of course a computation with greater complexity than polynomial
time lifts the complexity results of the overall formalism as well).  In
fact, external sources can be other ASP- or \hex-programs.
This allows for encoding other formalisms of higher complexity in
\hex-programs, \reva{e.g.,}~\emph{abstract argumentation
  frameworks}~\cite{dung1995-aij}.

\subsubsection{Knowledge Outsourcing}
\label{secKnowledgeOutsourcing}

In contrast, knowledge outsourcing refers to external sources which store information which
needs to be imported, while reasoning itself is done in the logic program.

A typical example can be found in Web resources which provide
information for import, \reva{e.g.,}~\emph{RDF triple stores}~\cite{rdf1999}
or \emph{geographic data}~\cite{Mosca:Bernini:08}.  More advanced use
cases are \emph{multi-context systems}, which are systems of
knowledge-bases ({\em contexts}) that are abstracted to acceptable
belief sets (roughly speaking, sets of atoms) and interlinked by
{\em bridge rules}\/ that range across knowledge
bases~\cite{be2007}; access to individual contexts has been
provided  through external atoms~\cite{befs2010}.  Also sensor data, as often
used when planning and executing actions in an environment, is a form of
knowledge outsourcing (cf.~\acthex{}~\cite{befi2010}).

\subsubsection{Combinations}

It is also possible to combine the outsourcing of computations and of knowledge. A typical example
are logic programs with access to description logic knowledge bases (DL KBs), called \emph{DL-programs}~\cite{2008_combining_answer_set_programming_with_description_logics_for_the_semantic_web}.
A DL KB does not only store information,
but also provides a reasoning mechanism.
This allows the logic program for formalizing queries which initiate external computations based on external knowledge
and importing the results.

\section{Extensional Semantics and Atom Dependencies}
\label{sec:prelims}
We now introduce additional important notions related to \hex-programs.
Some of the following concepts are needed to make the formalism decidable,
others prepare the basic evaluation techniques presented in later sections.

\subsection{Restriction to Extensional Semantics for \hex External Atoms}
\label{sec:extensionalextatomsemantics}
To make \hex programs computable in practice,
it is useful to restrict external atoms,
such that their semantics depends only on extensions of predicates given in the input tuple~\cite{eiter-etal-06}.
This restriction is relevant for all subsequent considerations.

\paragraph{Syntax}
Each $\extg$ is associated with an
input type signature $t_1,\ldots,t_n$ such that
every $t_i$ is the type of input $Y_i$ at position $i$ in the input list of $\extg$.
A {\em type} is either $\inpconst$ or a non-negative integer.

Consider $\extg$, its type signature $t_1,\ldots,t_n$,
and a ground external atom $\extg[y_1,\dots,$ $y_n](x_1,$ $\dots,x_m)$.
Then, in this setting,
the signature of $\extg$ enforces certain constraints on
$f_{\extg}(I,y_1,\dots,$ $y_n,x_1,\dots,x_m)$ such that
its truth value depends only on
\begin{enumerate}[(a)]
\itemsep=0pt
\item
  the constant value of $y_i$ whenever $t_i = \inpconst$, and
\item
  the extension of predicate $y_i$, of arity $t_i$, in $I$ whenever $t_i \in \bbN$.
\end{enumerate}

\reva{Note that parameters of type {\bf const} are different from
parameters of type $0$. In the former case, a parameter is interpreted
as a constant that is passed to the external source (essentially as string $``p"$), while a parameter $p$ with a non-negative integer as type is interpreted as predicate whose extension is passed;
in the special case of type $0$, the extension reduces to the truth value of the propositional atom $p$.}

\begin{example}[ctd.]
  Continuing Example~\ref{ex:eatomintuition}, for $\amp{reach}[\mi{edge},a](x)$,
  we have $t_1 = 2$ and $t_2 = \inpconst$.
  Therefore the truth value of $\amp{reach}[\mi{edge},a](x)$
  depends on the extension of binary predicate $\mi{edge}$,
  on the constant $a$,
  and on $x$.

  Continuing Example~\ref{ex:swimmingrqsemantics},
  the external predicate $\amp{\mycost}$ has $t_1 = 1$,
  therefore the truth value of $\ext{\mycost}{\mygoinout}{x}$ for various $x$
  wrt.\ an interpretation $I$
  depends on the extension of the unary predicate $\mygoinout$ in the input list.
  \qed
\end{example}

Note that the truth value of an external atom with only constant input
terms, i.e., $t_i = \inpconst$, $1 \leq i \leq n$, is independent of
$I$.

Semantic constraints enforced by signatures are formalized next.

\paragraph{Semantics}
Let $a$ be a type, $I$ be an interpretation and $p \in \cC$.
The {\em projection function} $\Pi_a(I,p)$ is the binary
function such that $\Pi_{\inpconst}(I,p) = p$ for $a = \inpconst$, and
$\Pi_{a}(I,p) = \{ (\reva{p,}x_1,\ldots,x_a ) \mid p\,(x_1,\ldots,x_a) \in I \}$  for $a \in
\bbN$. Recall that atoms $p(x_1,\ldots,x_a)$ are tuples
$(p,x_1,\ldots,x_a)$\reva{.
The codomain $D_a$ of $\Pi_a(I,p)$
is }$D_a := \cC^{a+1}$ for $a \ins \bbN$,
i.e., the $a{+}1$-fold cartesian product of $\cC$,
which contains all syntactically
possible atoms with $a$ arguments;
furthermore we let $D_{\inpconst}:=\cC$.

\begin{definition}[extensional evaluation function]
\label{def:extensionalevaluationfunction}
Let $\extg$ be an external predicate with oracle function $f_{\extg}$,
$\mi{in}(\extg) = n$, $\mi{out}(\extg) = m$, and type signature $t_1,\ldots,t_n$.
Then the \emph{extensional evaluation function}
$\extFun{g} : D_{t_1} \times \cdots \times D_{t_n} \to 2^{\cC^m}$
of $\extg$ is defined such that for every $\vec{a}= (a_1,\ldots,a_m)$
\begin{align*}
\vec{a} \in \extFun{g}(\Pi_{t_1}(I,p_1),\ldots,\Pi_{t_n}(I,p_n))
  \text{ iff }
  \extfun{g}(I,p_1,\ldots,p_n,a_1,\ldots,a_m) = 1.
\end{align*}
\end{definition}
Note that $\extFun{g}$ makes the possibility of new constants
in external atoms more explicit:
tuples returned by $\extFun{g}$ may contain constants
that are not contained in $P$.
Furthermore, $\extFun{g}$ is well-defined only under the assertion
at the beginning of this section.

\begin{example}[ctd.]
  \label{ex:swimmingextsem}
For $I$ from Example~\ref{ex:swimanswerset},
we have
$\Pi_1(I,\mygoinout) = \{ (\mygoinout,\myoutdoor) \}$
\text{ and}
\\
$\Pi_1(I,\mygolocation) = \{ (\mygolocation,\mypooln) \}.$
The extensional evaluation function of $\amp{\mycost}$
is
  \begin{align*}
  \begin{array}{@{}r@{}l@{}l@{}}
  \extFun{\mycost}(U) \eqs
    & \{ (\mymoney)   & \mids (X,\myindoor) \in U \text{ or } (X,\mypoolg) \in U \} \cups \\
    & \{ (\myyogamat) & \mids (X,\mypooln) \in U \} \cups \{ (\mygoggles) \mids (X,\mypoola) \in U \}
  \end{array}
  \end{align*}
  Observe that none of the constants $\myyogamat$ and $\mygoggles$
  occurs in $P$
  (we have that $\cons(P) = \{
    \mygoinout,$ $\mygolocation,$ $\myngolocation,$ $\myneed,$ $\mygosomewhere,$ $\myinout,
    \myloc,$ $\myindoor,$ $\myoutdoor,
    \mypoola,$ $\mypoolg,$ $\mypooln,$ $\mypoolm,
    \mymoney,$ $\mylocation \}$).
  \reva{These constants} are introduced by the external atom semantics.
  Note that $(\mymoney)$ is a unary tuple,
  as $\amp{\mycost}$ has a unary output list.
  \qed
\end{example}

\subsection{Atom Dependencies}
To account for dependencies
between heads and bodies
of rules is a common
approach for realizing semantics of ordinary logic
programs, as done, e.g., by
means of the notions of
\emph{stratification} and its refinements like \emph{local
  stratification} \cite{przy-88} or
\emph{modular stratification} \cite{ross-94jacm},
or by \emph{splitting sets} \cite{lifs-turn-94}.
In \hex programs, head-body dependencies
are not the only possible source of predicate interaction.
Therefore new types of (non-ground) dependencies were considered by~\citeN{eiter-etal-06} and~\citeN{rs2006}.
In the following we recall these definitions
but slightly reformulate and extend them,
to prepare for the following sections
where we lift atom dependencies to rule dependencies.

In contrast to the traditional notion of dependency,
which in essence hinges on propositional programs,
we must consider
non-ground atoms;
such atoms $a$ and $b$ clearly depend on each other if they unify,
which we denote by $a \sim b$.

For analyzing program properties it is relevant whether a dependency
is positive or negative.
Whether the value of an external atom $a$ depends on the presence
of an atom $b$ in an interpretation $I$ depends in turn on the
oracle function $\extfun{g}$ that is associated with the external
predicate $\extg$ of $a$.
Depending on other atoms in $I$,
in some cases the \emph{presence} of $b$
might make $a$ true, in some cases its \emph{absence}.
Therefore we will
not speak of \emph{positive} and \emph{negative} dependencies, as by~\citeN{2011_pushing_efficient_evaluation_of_hex_programs_by_modular_decomposition},
but
more adequately of
\emph{monotonic} and \emph{nonmonotonic} dependencies, respectively.%
\footnote{%
Note that anti-monotonicity
(i.e., a larger input of an external atom can only make the external atom false, but never true)
could be a third useful distinction that was exploited in \cite{2012_conflict_driven_asp_solving_with_external_sources}.
We here only
distinguish
monotonic from nonmonotonic external atoms
and
classify antimonotonic external atoms as nonmonotonic.
}

\begin{definition}
\label{def:eatommonotonicity}
An external predicate $\amp{g}$ is \emph{monotonic},
if
for all interpretations $I,I'$ such that $I \subseteq I'$
and all \reva{tuples $\vec{X}$ of constants},
$\extfun{g}(I,\vec{X}) = 1$
implies
$\extfun{g}(I',\vec{X}) = 1$;
otherwise  $\amp{g}$ is
\emph{nonmonotonic}.
Furthermore,
a ground external atom $a$ is {\em monotonic}, if for all
interpretations $I, I'$ such that $I\subseteqs I'$ we have
$I \models a$ implies $I' \models a$;
a non-ground external atom is {\em monotonic}, if each of its ground
instances is monotonic.
\end{definition}

Clearly, each external atom that involves a monotonic external
predicates is monotonic, but not vice versa; thus monotonicity of
external atoms is more fine-grained.
In the \reva{following formal definitions,
for simplicity we only consider external predicate monotonicity
and disregard external atom monotonicity.
However} the extension to arbitrary monotonic
external atoms is straightforward.

\begin{example}[ctd.]
  \label{ex:swimmingmonotonic}
  Consider $\extFun{\mycost}(U)$ in Example~\ref{ex:swimmingextsem}:
  adding tuples to $U$
  cannot remove tuples from $\extFun{\mycost}(U)$,
  therefore $\amp{\mycost}$ is a monotonic external predicate.
\qed
\end{example}

Next we define relations for dependencies from external atoms to other atoms.
\begin{definition}[External Atom Dependencies]
  \label{def:eatomdependencies}
  Let $P$ be a \hex program,
  let $a= \ext{g}{X_1,\dots,X_\reva{k}}{\vec Y}$ in $P$
  be an external atom with the type signature $t_1,\dots,t_\reva{k}$
  and let $b = p(\vec{Z})$ be an atom in the head of a rule in $P$.
  Then $a$ \emph{depends external monotonically (resp., nonmonotonically) on} $b$,
denoted $a \dependsextmon b$ (resp., $a \dependsextnmon b$),
  if
$\amp{g}$ is monotonic (resp., nonmonotonic),
  \reva{and for some $i \ins \{ 1, \ldots, k \}$ we have that}
$\vec{Z}$ has arity $t_i \reva{\ins \bbN}$%
  and %
    $X_i=p$.
We define that $a \dependsext b$ if $a \dependsextmon b$ or $a \dependsextnmon b$.
\end{definition}

\begin{example}[ctd.]
  In our %
  example we have the three external dependencies
  $\ext{\mycost}{\mygoinout}{C} \dependsextmon \mygoinout(\myindoor)$,
  $\ext{\mycost}{\mygoinout}{C} \dependsextmon \mygoinout(\myoutdoor)$, and
  $\ext{\mycost}{\mygolocation}{C} \dependsextmon \mygolocation(X)$.
  \qed
\end{example}

As in ordinary ASP,
atoms in \hex programs
may depend on each other because of rules in the program.
\begin{definition}
\label{def:ruleatomdependencies}
For a \hex-program $P$ and atoms $\alpha$, $\beta$ occurring in $P$,
we say that
\begin{enumerate}[(a)]
\item
$\alpha$ depends monotonically on $\beta$
    ($\alpha \dependsmon \beta$),
    if one of the following holds:
\begin{compactenum}[(i)]
\item
some rule $r \in P$ has $\alpha \in H(r)$ and $\beta \in B^+(r)$;
\item
there are rules $r_1,r_2 \in P$ such that
$\alpha \in B(r_1)$, $\beta \in H(r_2)$, and $\alpha \sim \beta$; or
\item
some rule $r \in P$ has $\alpha \in H(r)$ and $\beta \in H(r)$.
\end{compactenum}
\item
$\alpha$ depends nonmonotonically on $\beta$ ($\alpha \dependsnmon \beta$),
if there is some rule $r \in P$ such that
$\alpha \in H(r)$ and $\beta \in B^-(r)$.
\end{enumerate}
\end{definition}
Note that combinations of
Definitions~\ref{def:eatomdependencies} and~\ref{def:ruleatomdependencies}
were already introduced by~\citeN{rs2006} and~\citeN{efk2009-ijcai};
however these \reva{papers} represent nonmonotonicity of external atoms within rule body dependencies
and use a single \quo{external dependency} relation
that does not contain information about monotonicity.
In contrast,
we represent nonmonotonicity of external atoms
where it really happens, namely in dependencies from external atoms to ordinary atoms.
We therefore obtain a simpler dependency relation between rule bodies and heads.

We say that atom $\alpha$ \emph{depends}\/ on atom $\beta$, denoted
$\alpha \depends \beta$, if either  $\alpha\dependsmon\beta$,
$\alpha\dependsnmon\beta$, or  $\alpha\dependsext\beta$; that is,
$\depends$ is the union of the relations $\dependsmon$, $\dependsnmon$, and $\dependsext$.

We next define the atom dependency graph.
\begin{definition}
  \label{def:atomdependencygraph}
  For a \hex-program $P$,
  the \emph{atom dependency graph}
  $\mi{ADG}(P) = (V_{A},E_{A})$ of $P$
  has as vertices $V_{A}$ the (\reva{possibly} non-ground) atoms occurring in non-facts of $P$
  and as edges $E_{A}$ the dependency relations
  $\dependsmon$, $\dependsnmon$, $\dependsextmon$, and $\dependsextnmon$
  between %
  them in $P$.
\end{definition}

\myfigureAtomDepGraph{tbp}

\begin{example}[ctd.]
  Figure~\ref{fig:swimmingatomdepgraph}
  shows
$\mi{ADG}(\myPswim)$.
  \reva{Recall that $c_7$ is \quo{$\lifs \naf \mygosomewhere$}.}
  Note that the nonmonotonic body literal in $c_7$
  does not show up as a nonmonotonic dependency,
  as $c_7$ has no head atoms.
  (The rule dependency graph %
  in Section~\ref{sec:ruledepsgeneralizedsplitting}
  will make this negation apparent.)
  \qed
\end{example}

Next we use the dependency notions to define safety conditions on \hex programs.

\subsection{Safety Restrictions}

To  make reasoning tasks on \hex programs decidable (or more efficiently computable),
the following potential restrictions were formulated.

\paragraph{Rule safety}
This is a restriction well-known in logic programming,
and it is required to ensure finite grounding of a non-ground program.
A rule is safe, if all its variables are safe,
and a variable is safe if it is contained in a positive body literal.
Formally a rule $r$ is safe iff
variables in $H(r) \cup B^-(r)$ are a subset of variables in $B^+(r)$.

\paragraph{Domain-expansion safety}
In an ordinary logic program $P$,
we usually assume that the set of constants $\cC$
is implicitly given by $P$.
In a \hex program, external atoms may invent new constant values
in their output tuples.
We therefore must relax this to \quo{$\cC$ is countable and partially given by $P$},
as shown by the following example.
\begin{example}

In the Swimming Example, grounding $\myPswim$ with $\cons(\myPswim)$
is not sufficient.
Further
constants
`generated' by external atoms
must be considered.
For example $\myyogamat \notin \cons(\myPswim)$
 and $I \models \ext{\mycost}{\mygolocation}{\myyogamat}$,
hence we must ground
  \begin{align*}
    \myneed(\myloc,C) \lif \ext{\mycost}{\mygolocation}{C}
  \end{align*}
with $C=\myyogamat$  to obtain the correct answer set.
  \qed
\end{example}

Therefore grounding $P$ with $\cons(P)$ can lead to incorrect results.
Hence we want to obtain new constants during evaluation of external atoms,
and we must use these constants to evaluate the remainder of a given \hex program.
However, to ensure decidability,
this process of obtaining new constants must always terminate.

Hence, we require programs to be \emph{domain-expansion safe}~%
\cite{eiter-etal-06}:
there must not be a cyclic dependency between rules and external atoms
such that an input predicate of an external atom
depends on a variable output of that same external atom,
if the variable is not guarded by a domain predicate.

With \hex we need the usual notion of rule safety,
i.e., a syntactic restriction which ensures
that each variable in a rule only has a finite set of relevant constants for grounding.

We first recall the definition of safe variables and safe rules for
\hex.
\begin{definition}[Def.~5 by~\citeN{eiter-etal-06}]
  \label{def:hexsafety}
The \emph{safe variables} of a rule $r$
is the smallest set of variables $X$
that occur either
\begin{inparaenum}[(i)]
  \item
in some ordinary atom $\beta \in B^+(r)$, or
  \item
in the output list $\vec{X}$
of an external atom $\ext{g}{Y_1,\ldots,Y_n}{\vec{X}}$
in $B^+(r)$ where all $Y_1,\ldots,Y_n$ are safe.
\end{inparaenum}
  A rule $r$ \emph{is safe}, if each variable in $r$ is safe.%
  \footnote{This is stated by~\citeN{eiter-etal-06}
  as \quo{if each variable appearing in a negated atom and in any input list is safe,
  and variables appearing in $H(r)$ are safe}, which is equivalent.
}
\end{definition}

However, safety alone does not guarantee finite grounding of \hex programs,
because an external atom might create new constants,
i.e., constants not part of the program itself,
in its output list (see Example~\ref{ex:swimmingextsem}).
These constants can become part of the extension of an atom in the rule head,
and by grounding and evaluation of %
other rules become part of the extension of a predicate
which is an input to the very same external atom.
\begin{example}[adapted from~\citeN{rs2006}]
  The following \hex program is safe according to Definition~\ref{def:hexsafety}
  and nevertheless cannot be finitely grounded:
  \begin{align*}
    \begin{array}{@{}r@{}l@{}}
    \mi{source}(``\mathtt{http://some\_url}") &\lifs. \\
    \mi{url}(X) &\lifs
      \ext{rdf}{\mi{source}}{X,\mi{``rdf{:}subClassOf"},C}. \\
    \mi{source}(X) &\lifs \mi{url}(X).
    \end{array}
  \end{align*}
  Suppose the $\ext{rdf}{\mi{source}}{S,P,O}$ atom retrieves all triples $(S,P,O)$
  from all RDF triplestores specified in the extension of $\mi{source}$,
  and suppose that each triplestore contains a triple
  with a URL $S$ that does not show up in another triplestore.
  As a result, all these URLs are collected in the extension of $\mi{source}$
  which leads to even more URLs being retrieved and a potentially infinite grounding.

  However, we could change the rule with the external atom to
  \begin{align}
    \label{eqn:savedcycle}
    \mi{url}(X) \lif
      \ext{rdf}{\mi{source}}{X,\mi{``rdf{:}subClassOf"},C},
        \mi{limit}(X) %
  \end{align}
  and add an appropriate set of $\mi{limit}$ facts.
  This addition of a range predicate $\mi{limit}(X)$
  which does not depend on the external atom output
  ensures a finite grounding.
  \qed
\end{example}

To obtain a syntactic restriction that ensures finite grounding for \hex,
so called \emph{strong safety} has been
introduced for the \hex programs~\cite{eiter-etal-06}.
Intuitively, this concept requires all output variables of cyclic external atoms (using
the dependency notion from Definition~\ref{def:atomdependencygraph})
to be bounded by ordinary body atoms of the same rule which are not part of the cycle.
However,
this condition is unnecessarily restrictive, and
\reva{therefore},
the extensible notion of \emph{liberal
  domain-expansion safety (lde-safety)} was introduced by~\citeN{eite-etal-14a},
which we will use in the following.
For the purpose of this article, we may omit
the formal details of lde-safety (see \citeN{eite-etal-14a} and
\ref{sec:liberalsafety} for an outline);
it is sufficient to know that every lde-safe program has
a finite grounding that has the same answer sets as the original
program.

\section{Rule Dependencies and Generalized Rule Splitting Theorem}
\label{sec:ruledepsgeneralizedsplitting}

In this section, we \reva{first} introduce a new notion of dependencies in \hex-programs,
namely between non-ground \emph{rules} in a program
\reva{(Section~\ref{sec:ruledependencies})}.
Based on this notion, we then present a modularity property of \hex-programs
that allows us to obtain answer sets of a program from the answer sets
of its components \reva{(Section~\ref{sec:splittings})}.
The property is formulated as a splitting theorem based on dependencies among
rules and lifts a similar result for dependencies among atoms,
viz.\ the Global Splitting Theorem~\cite{eiter-etal-06}, to this
setting, and it generalizes and improves it.
This result is
exploited in a more efficient \hex-program evaluation algorithm, which we show
in Section~\ref{sec:decomposition}.

\subsection{Rule Dependencies}
\label{sec:ruledependencies}

We define rule dependencies as follows.
\begin{definition}[Rule dependencies]
\label{def:dependencies}
Let $P$ be a program and $a,b$ atoms occurring in distinct
rules $r,s\in P$.
Then $r$ depends on $s$ according to the following cases:
    \begin{compactenum}[(i)]
    \item\label{itm:pos}
      if $a \sim b$, $a \in B^+(r)$, and $b \in H(s)$, then
      $r \dependsmon s$;
    \item\label{itm:neg}
      if $a \sim b$, $a \in B^-(r)$, and  $b \in H(s)$, then
      $r \dependsnmon s$;
    \item\label{itm:sim}
      if  $a \sim b$, $a \in H(r)$, and $b \in H(s)$, then
      both $r \dependsmon s$ and $s \dependsmon r$;
    \item\label{itm:ext}
      if $a \dependsext b$, $a \in B(r)$ is an external atom, and $b \in H(s)$,
      then
      \begin{compactitem}
      \item
        $r \dependsmon s$
        if $a \in B^+(r)$ and $a \dependsextmon b$,
        and
      \item
        $r \dependsnmon s$
        otherwise.
      \end{compactitem}
    \end{compactenum}
\end{definition}
Intuitively,
conditions~\eqref{itm:pos} and~\eqref{itm:neg}
reflect the fact that the applicability of a rule $r$
depends on the applicability of a rule $s$ with a head
that unifies with a literal in the body of rule $r$;
condition~\eqref{itm:sim} exists because $r$ and $s$
cannot be evaluated independently
if they share a common head atom
(e.g., $u \lor v \leftarrow$ cannot be evaluated independently from $v \lor w \leftarrow$);
and~\eqref{itm:ext} defines dependencies due to predicate inputs of external atoms.

In the sequel, we let $\dependsmn \eqs \dependsmon \cups \dependsnmon$
be the union of monotonic and non\-mono\-tonic rule dependencies.
We next define graphs of rule dependencies.
\begin{definition}
  \label{def:ruledepgraph}
  Given a \hex-program $P$,
  the \emph{rule dependency graph} $DG(P) = (V_D,E_D)$ of $P$ is the labeled graph
  with vertex set $V_D = P$
  and edge set $E_D \eqs \dependsmn$.
\end{definition}

\myfigureRuleDepGraph{tbp}
\begin{example}[ctd.]
  \label{ex:swimmingruledepgraph}
  Figure~\ref{fig:swimmingruledepgraph}
  depicts the rule dependency graph of our running example.
  According to Definition~\ref{def:dependencies},
  we have the following rule dependencies in $\myPswim^\mi{IDB}$:
  \begin{compactitem}
  \item
    due to (i) we have
    $r_3 \dependsmon r_1$,
    $r_4 \dependsmon r_3$,
    $c_6 \dependsmon r_3$,
    $c_8 \dependsmon r_2$, and
    $c_8 \dependsmon r_5$;
  \item
    due to (ii) we have
    $c_7 \dependsnmon r_4$;
  \item
    due to (iii) we have no dependencies; and
  \item
    due to (iv) we have
    $r_2 \dependsmon r_1$ and
    $r_5 \dependsmon r_3$.
  \end{compactitem}
  \reva{Note that if we would omit the first argument of predicate $\myneed$,
  we would have in addition
  $r_2 \dependsmon r_5$ and $r_5 \dependsmon r_2$ due to (iii).}
  Also note that $\amp{\mycost}$ is monotonic
  (see Example~\ref{ex:swimmingmonotonic}).
  \qed
\end{example}

\subsection{Splitting Sets and Theorems}
\label{sec:splittings}

Splitting sets are a notion that allows for
describing how a program can be decomposed into parts
and how semantics of the overall program
can be obtained from semantics of these parts
in a divide-and-conquer manner.

We lift the original \hex splitting theorem~\cite[Theorem 2]{eiter-etal-06}
and the according definitions of global splitting set, global bottom,
and global residual~\cite[Definitions 8 and 9]{eiter-etal-06}
to our new definition of dependencies among rules.

A \emph{rule splitting set} is a part of a (non-ground) program
that does not depend on the rest of the program.
This corresponds in a sense with global splitting sets by~\citeN{eiter-etal-06}.
\begin{definition}[Rule Splitting Set]
  A \emph{rule splitting set} $R$ for a \hex-program $P$
  is a set $R \subseteq P$ of rules such that
  whenever $r \in R$, $s \in P$, and $r \dependsmn s$,
  then $s \in R$ holds.
\end{definition}
\begin{example}[ctd.]
  \label{ex:swimmingrulesplittingsets}
  The following are some rule splitting sets of $\myPswim$:
  $\{r_1\}$, $\{r_1,r_2\}$, $\{r_1,r_3\}$, $\{r_1,r_2,r_3\}$, $\{r_1,r_2,r_3,r_5,c_8\}$.
  The set $R=\{r_1,r_2,c_8\}$ is not a rule splitting set,
  because $c_8 \dependsmon r_5$ but $r_5\notin R$.
  \qed
\end{example}

Because of possible constraint duplication,
we no longer partition the input program,
and the customary notion of splitting set, bottom, and residual,
is not appropriate for sharing constraints between bottom and residual.
Instead, we next define a \emph{generalized bottom} of a program,
which splits a non-ground program into two parts
which may share certain constraints.
\begin{definition}[Generalized Bottom]
  \label{def:genbottom}
  Given a rule splitting set $R$ of a \hex-program $P$,
  a \emph{generalized bottom} $B$ of $P$ wrt.\ $R$
  is a set $B$ with $R \subseteq B \subseteq P$
  such that all rules in $B \setminus R$
  are constraints that do not depend nonmonotonically on any rule in $P \setminus B$.
\end{definition}
\begin{example}[ctd.]
  A rule splitting set $R$ of $\myPswim$
  (e.g., those given in Example~\ref{ex:swimmingrulesplittingsets})
  is also a generalized bottom of $\myPswim$ wrt.\ $R$.
  The set $\{r_1,r_2,c_8\}$ is not a rule splitting set,
  but it is a generalized bottom of $\myPswim$
  wrt.\ the rule splitting set $\{r_1,r_2\}$,
  as $c_8$ is a constraint that depends only monotonically
  on rules in $\myPswim \setminus \{r_1,r_2,c_8\}$.
  \qed
\end{example}

Next, we describe how interpretations of a generalized bottom $B$ of a program $P$
lead to interpretations of $P$ without re-evaluating rules in $B$.
Intuitively, this is a relaxation of the previous non-ground \hex
splitting theorem:
a constraint may be put both
in the bottom and in the residual if it has no nonmonotonic
dependencies to the residual.
The benefit of such constraint sharing is a smaller number of answer
sets of the bottom, and hence of fewer evaluations of the residual program.

\newcommand{\gh}[1]{{\mi{gh}(#1)}}

\smallskip

\noindent{\bf Notation.} For any set $I$ of ground ordinary atoms,
we denote by $\facts(I)$ the corresponding set of ground facts;
furthermore, for any set  $P$ of rules,
we denote by $\gh{P}$ \label{pos:ghdefinition}
the set of ground head atoms occurring in $\grnd(P)$.
\begin{theorem}[Splitting Theorem]
  \label{thm:hexsplitting}
  Given a \hex-program $P$ and a rule splitting set $R$ of $P$,
  $M \in \AS(P)$
  iff
  $M \in \AS(P \setminus R \cup \facts(X))$
  with
  $X \in \AS(R)$.
\end{theorem}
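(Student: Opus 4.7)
The plan is to reduce the Rule Splitting Theorem to the previously known Global Splitting Theorem of Eiter et al.\ (2006), which is formulated in terms of atom splitting sets. Since $\AS(P) = \AS(\grnd(P))$, I would pass to the grounding and identify a ground atom splitting set $U$ of $\grnd(P)$ whose associated bottom coincides exactly with $\grnd(R)$; the atom-level result then gives the claim after a short rewriting step.

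I would take $U$ to be $\gh{R}$, together with all ground atoms appearing in bodies of rules in $\grnd(R)$. The key technical step is to show $U$ is closed under the atom-dependency relation $\depends$: whenever $\alpha \in U$ and $\alpha \depends \beta$ in $\grnd(P)$, then $\beta \in U$. Each atom-dependency from Definitions~\ref{def:eatomdependencies} and~\ref{def:ruleatomdependencies} is witnessed by a pair of ground rules $r', s'$ that are instances of non-ground rules $r, s \in P$. A case analysis on the type of dependency (positive body, negative body, head overlap, or external predicate input) shows that the corresponding rule-level relation $r \dependsmn s$ holds by Definition~\ref{def:dependencies}. Since $R$ is a rule splitting set and $r \in R$, it follows that $s \in R$ and therefore $\beta \in U$. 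This correspondence is the only real technical content; once $U$ is shown to be an atom splitting set and the bottom of $\grnd(P)$ with respect to $U$ is verified to equal $\grnd(R)$, the theorem follows.

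The main obstacle I foresee is handling case~(iv) of Definition~\ref{def:dependencies}, which covers dependencies mediated by external atoms: an external atom $a$ in the body of $r$ depending externally on a head atom $b$ of $s$ must induce a rule dependency $r \dependsmn s$ in a way that correctly tracks monotonic vs.\ nonmonotonic behaviour. Here one has to check that whenever $a \dependsextmon b$ or $a \dependsextnmon b$ is present, the corresponding rule-level condition in Definition~\ref{def:dependencies}(iv) is triggered so that the closure argument above still carries through; in particular, the fact that rule dependencies only distinguish monotonic from nonmonotonic (and not positive from negative body occurrence of external atoms) is immaterial for the splitting argument because the atom-level theorem only requires closure under $\depends$. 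Once this is established, the conclusion of the Global Splitting Theorem---that $M \in \AS(\grnd(P))$ iff there is $X \in \AS(\grnd(R))$ with $M$ an answer set of the residual program obtained by solving the bottom and substituting $X$ as facts---unpacks directly to $M \in \AS(P \setminus R \cup \facts(X))$ with $X \in \AS(R)$, since adding $\facts(X)$ realises exactly the effect of residualising $\grnd(P) \setminus \grnd(R)$ with respect to $U$, and non-ground rules in $P \setminus R$ ground identically before and after.
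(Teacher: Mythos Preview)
Your approach is quite different from the paper's: the paper gives a \emph{direct} proof, working with the FLP-reduct and showing minimality by hand in both directions (projecting $M$ to $\gh{R}$ and arguing that a smaller model of the reduct would contradict $M \in \AS(P)$). It never invokes the Global Splitting Theorem.

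Your reduction has two real gaps. First, the bottom of $\grnd(P)$ with respect to your $U$ need \emph{not} coincide with $\grnd(R)$: a constraint $c \in P \setminus R$ whose body atoms all lie in $U$ (e.g.\ all unify with heads of rules in $R$) will belong to the bottom but not to $\grnd(R)$, since the rule splitting set condition is a closure condition on $R$, not on $P\setminus R$. So your identification of the atom-level bottom fails in general. Second, and more seriously, the residual program in the Global Splitting Theorem of~\citeN{eiter-etal-06} is \emph{not} $P \setminus R \cup \facts(X)$: it replaces external atoms in the top part by partially-evaluated ``replacement atoms'' and adds extra facts $D$ that must be stripped from the resulting answer sets (see the discussion in Section~\ref{secSplittingTheorems}). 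Your claim that ``adding $\facts(X)$ realises exactly the effect of residualising'' is precisely the nontrivial point; establishing that the replacement-atom residual has the same answer sets as the plain $P \setminus R \cup \facts(X)$ requires an argument about how external atoms in the top see the bottom interpretation, which is essentially the content of the direct proof. In other words, the ``short rewriting step'' you defer to the end is where the actual work lives, and the paper chose to do that work directly rather than route through the older theorem.
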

\newcommand\myproofhexsplitting{
 \def\MghR{\ensuremath{{M|_R}}}
 Given a set of ground atoms $M$ and a set of rules $R$,
 we denote by $M|_R = M \cap \gh{R}$
 the projection of $M$ to ground heads of rules in $R$.

  ($\Rightarrow$)
  Let $M \in \AS(P)$.
  We show that (1) $\MghR \in \AS(R)$
  and that (2) $M \in \AS(P\setminus R \cup \facts(\MghR))$.

As for (1), we first show that
  $\MghR$ satisfies the reduct $fR^\MghR$,
  and then that  $\MghR$ is indeed a minimal model of $fR^\MghR$.
  $M$ satisfies $fP^M$ and $R \subseteq P$.
  Observe that, by definition of FLP reduct,
  $fR^M \subseteq fP^M$.
  By definition of rule splitting set,
  satisfiability of rules in $R$ does not depend on heads of rules
  in $P \setminus R$
  (due to the restriction of external atoms to extensional semantics,
  this is in particular true for external atoms in %
  $R$).
  Therefore $fR^\MghR = fR^M$,
  $M$ satisfies $fR^\MghR$,
  and $\MghR$ satisfies $fR^\MghR$.
  For showing $\MghR \in \AS(R)$, it remains to show that
  $\MghR$ is a minimal model of $fR^\MghR$.

  Assume towards a contradiction that some $S \subset \MghR$
  is a model of $fR^\MghR$.
  Then there is a nonempty set $A  = \MghR \setminus S$ of atoms
  with $A \subseteq \gh{R}$.
  Let $M^\star = M \setminus A$.
  We next show that
  $M^\star$ is a model of $fP^M$,
  which implies that $M \notin \AS(P)$.
  Assume on the contrary that $M^\star$ is not a model of $fP^M$.
  Hence there exists some rule $r \in fP^M$
  such that $H(r) \cap M^\star = \emptyset$,
  $B^+(r) \subseteq M^\star$,
  $B^-(r) \cap M^\star = \emptyset$
  and external atoms in $B^+(r)$ (resp., $B^-(r)$)
  evaluate to true (resp., false)
  wrt.\ $M^\star$.
  $S$ agrees with $M^\star$ on atoms from $\gh{R}$,
  and $S$ satisfies $fR^\MghR$.
  The truth values of external atoms in bodies of rules in $R$
  depends only on atoms from $\gh{R}$,
  therefore external atoms in $R$ evaluate to the same truth value wrt.\ $S$ and $M^\star$.
  Therefore $r \notin fR^\MghR$
  and $r \in f(P \setminus R)^M$.
  Since $r \in P \setminus R$,
  $H(r) \subseteq \gh{P \setminus R}$,
  and because $M$ and $M^\star$ agree on atoms from $\gh{P \setminus R}$,
  $H(r) \cap M^\star = \emptyset$ from above implies that
  $H(r) \cap M = \emptyset$.
  Because $r \in fP^M$,
  its body is satisfied in $M$,
  and since its head has no intersection with $M$,
  we get that $fP^M$ is not satisfied by $M$,
  which is a contradiction.
  Therefore $M^\star$ is a model of $fP^M$.
  As $M^\star \subsets M$,
  this contradicts our assumption that $M \in \AS(P)$.
  Therefore $S = \MghR = X$ is a minimal model of $fR^M$.

  We next show that
  $M$ satisfies the reduct $f(P \setminus R \cup \facts(\MghR))^M$,
  and then that it is indeed a minimal model of the reduct.
  By the definition of reduct,
  $f(P \setminus R \cup \facts(\MghR))^M =
   f(P \setminus R)^M \cup \facts(\MghR)$.
  $M$ satisfies $\facts(\MghR)$ because $\MghR \subseteq M$.
  Furthermore $f(P \setminus R)^M \subseteq fP^M$,
  hence $M$ satisfies $f(P \setminus R)^M$.
  Therefore $M$ satisfies $f(P \setminus R \cup \facts(\MghR))^M$.

  To show that $M$ is a minimal model of
  $f(P \setminus R \cup \facts(\MghR))^M$,
  assume towards a contradiction
  that some $S \subsets M$ is a model of
  $f(P \setminus R \cup \facts(\MghR))^M$.
  Since $\facts(\MghR)$ is part of the reduct,
  $\MghR \subseteq S$,
  therefore
  $S|_\gh{R} = \MghR$.
  By definition of rule splitting set,
  satisfiability of rules in $R$ does not depend on heads of rules
  in $P \setminus R$,
  hence $S$ satisfies $fR^M$.
  Because $S$ satisfies
  $f(P \setminus R \cup \facts(\MghR))^M =
   f(P \setminus R)^M \cup \facts(\MghR)$,
  it also satisfies $f(P \setminus R)^M$.
  Since $S$ satisfies both $fR^M$,
  $S$ satisfies $fP^M = f(P \setminus R)^M \cup fR^M$.
  This is a contradiction to $M \in \AS(P)$.
  Therefore $S = M$ is a minimal model of
  $f(P \setminus R \cup \facts(\MghR))^M$.

  ($\Leftarrow$)
  Let $M \in \AS(P \setminus R \cup \facts(X))$ and let $X \in \AS(R)$.
  We first show that $M$ satisfies $fP^M$,
  and then that it is a minimal model of $fP^M$.

  As facts $X$ are part of the program $P \setminus R \cup \facts(X)$,
  and by definition of rule splitting set,
  $P \setminus R$ contains no rule heads unifying with $\gh{R}$,
  hence we have $X = \MghR$.
  Furthermore
  $f(P \setminus R \cup \facts(X))^M \setminus \facts(X) \cup fR^M = fP^M$,
  and as $M$ satisfies the left side, it satisfies the right side.
  To show that $M$ is a minimal model of $fP^M$,
  assume $S \subsets M$ is a smaller model of $fP^M$.
  By definition of reduct, $S$ also satisfies
  $f(P \setminus R)^M$ and $fR^M$.
  Since $R$ is a splitting set,
  satisfiability of rules in $R$ does not depend on heads of rules
  in $P \setminus R$,
  therefore $fR^M = fR^\MghR = fR^X$
  and $S|_{\gh{R}}$ satisfies $fR^X$.
  Since $S \subset M$, we have $S|_{\gh{R}} \subseteq X$.
  Because $X$ is a minimal model of $fR^X$,
  $S|_{\gh{R}} \subset X$ is impossible
  and $S|_{\gh{R}} = X$.
  Therefore $S|_{\gh{P \setminus R}} \subset M|_{\gh{P \setminus R}}$.
  Because $S$ satisfies $f(P \setminus R)^M$
  and $S|_{\gh{R}} = X$,
  $S$ also satisfies $f(P \setminus R \cup \facts(X))^M$.
  Since $S \subset M$, this contradicts the fact that
  $M$ is a minimal model of $P \setminus R \cup \facts(X)$.
  Therefore $S = M$ is a minimal model of $fP^M$.
}%
\myinlineproof{%
  \begin{proof}
    \myproofhexsplitting
  \end{proof}
}
Using the definition of generalized bottom,
we generalize the above theorem.
\begin{theorem}[Generalized Splitting Theorem]
  \label{thm:hexgensplitting}
  Let $P$ be a \hex-program,
  let $R$ be a rule splitting set of $P$,
  and let $B$ be a generalized bottom of $P$ wrt.\ $R$.
  Then
  \begin{equation*}
  \text{
  $M \ins \AS(P)$
  iff
  $M \ins \AS(P \setminuss R \cups \facts(X))$
  where
  $X \ins \AS(B)$.
  }
  \end{equation*}
\end{theorem}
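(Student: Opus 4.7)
The plan is to reduce Theorem~\ref{thm:hexgensplitting} to Theorem~\ref{thm:hexsplitting} via a \emph{constraint lemma} describing how the constraints in $B \setminus R$ factor through answer-set computation. Since every rule in $B \setminus R$ has empty head, $\gh{B} = \gh{R}$, so enlarging $R$ to $B$ does not broaden the Herbrand base relevant for minimality; it can only filter answer sets of $R$. Note also that the residual program $P \setminus R \cup \facts(X)$ in the theorem statement (using $P \setminus R$, not $P \setminus B$) retains the constraints of $B \setminus R$, so no re-verification of those constraints is required after the split.

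Concretely, I will first establish the lemma: for any $X$, $X \in \AS(B)$ iff $X \in \AS(R)$ and $X$ satisfies every $c \in B \setminus R$. The FLP-reduct decomposes as $fB^X = fR^X \cup \{c \in \grnd(B \setminus R) \mid X \models B(c)\}$, whose second part consists solely of empty-head rules. Hence $X$ models $fB^X$ iff that second part is empty (equivalently, $X$ satisfies all constraints) and $X$ models $fR^X$; under this condition the two reducts coincide, and because $\gh{B} = \gh{R}$, minimality of $X$ transfers between them. With the lemma in hand, direction ($\Leftarrow$) is immediate: $X \in \AS(B)$ implies $X \in \AS(R)$, and Theorem~\ref{thm:hexsplitting} then gives $M \in \AS(P)$. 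For ($\Rightarrow$), Theorem~\ref{thm:hexsplitting} yields some $X \in \AS(R)$ (in fact $X = M \cap \gh{R}$) with $M \in \AS(P \setminus R \cup \facts(X))$; it remains to verify $X \models c$ for each $c \in B \setminus R$, which by the lemma promotes $X \in \AS(R)$ to $X \in \AS(B)$.

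The main obstacle is this last verification. I expect to argue the contrapositive: if $X \models B(c)$ then $M \models B(c)$, which contradicts $M \models c$. I will case-split on the body literals of $c$, using the defining property of a generalized bottom that $c$ has no nonmonotonic rule dependency on any rule in $P \setminus B$. For positive ordinary atoms and monotonic external atoms in $B^+(c)$, truth lifts from $X$ to $M$ by $X \subseteq M$ together with monotonicity. For negated body atoms $\naf \beta$ (clause (ii) of Definition~\ref{def:dependencies}) and for external atoms triggering clause (iv) in nonmonotonic fashion, the absence of nonmonotonic dependencies into $P \setminus B$ forces the relevant predicates to occur as heads only within $R$; since atoms in the answer set $M$ must be justified by some rule head, the relevant ground atoms lie in $M$ iff they lie in $X = M \cap \gh{R}$, so these literals agree on $X$ and $M$. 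Combining the monotone-lift and agreement cases, $X \models B(c)$ propagates to $M \models B(c)$, completing the case analysis. The delicate point is matching each body-literal type precisely against clauses (ii) and (iv) of Definition~\ref{def:dependencies}, particularly for external atoms in negative position, where both monotone and nonmonotone input dependencies induce a nonmonotonic rule dependency and must be handled uniformly.
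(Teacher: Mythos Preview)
Your proposal is correct and follows essentially the same approach as the paper: both reduce to Theorem~\ref{thm:hexsplitting}, observe that $\AS(B)\subseteq\AS(R)$ (your constraint lemma), and for the $(\Rightarrow)$ direction establish that a constraint body satisfied by $X=M\cap\gh{R}$ is also satisfied by $M$, using the absence of nonmonotonic dependencies from $B\setminus R$ into $P\setminus B$. Your case analysis over body-literal types is more explicit than the paper's terse treatment, but the underlying argument is the same.
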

\newcommand\myproofhexgensplitting{
  \def\MghR{\ensuremath{{M|_{\gh{R}}}}}
  \def\MghB{\ensuremath{{M|_{\gh{B}}}}}
  By definition of generalized bottom,
  the set $C = B \setminus R$ contains only constraints,
  therefore $\gh{B}=\gh{R}$ and $\MghB = \MghR$.
  As $R \subseteq B$ and $B \setminus R$ contains only constraints,
  $\AS(B) \subseteq \AS(R)$.
  The only difference between Theorem~\ref{thm:hexsplitting}
  and Theorem~\ref{thm:hexgensplitting} is,
  that for obtaining $X$,
  the latter takes additional constraints into account.

  ($\Rightarrow$)
  It is sufficient to show that
  $\MghB$ does not satisfy the body of any constraint in $C \subseteq P$
  if $M$ does not satisfy the body of any constraint in $P$.
  Since $B$ is a generalized bottom,
  no negative dependencies of constraints $C$ to rules in $P \setminus B$
  exist;
  therefore if the body of a constraint $c \in C$ is not satisfied by $M$,
  the body of $c$ is not satisfied by $\MghB$.
  As $M$ satisfies $P$,
  it does not satisfy any constraint body in $P$,
  hence the projection $\MghB$
  does not satisfy any constraint body in $B \setminus R$.

  ($\Leftarrow$)
  It is sufficient to show that
  an answer set of $R$
  that satisfies a constraint body in $C$
  also satisfies that constraint body in $P$, which raises a contradiction.
  As constraints in $C$
  have no negative dependencies to rules in $P \setminus B$,
  a constraint with a satisfied body in $\MghR$
  also has a satisfied body in $M$,
  therefore the result follows.
}%
\myinlineproof{%
  \begin{proof}
    \myproofhexgensplitting
  \end{proof}
}
Note that $B \setminus R$ contains shareable constraints
that are used twice in the Generalized Splitting Theorem, viz.\ in
computing $X$ and in computing $M$.

The Generalized Splitting Theorem is useful
for early elimination of answer sets of the bottom
thanks to constraints
which depend on it
but also on rule heads outside the bottom.
Such constraints can be shared
between the bottom and the remaining program.
\begin{example}[ctd.]
  We apply Theorems~\ref{thm:hexsplitting} and~\ref{thm:hexgensplitting}
  to $\myPswim$ and compare them.
  Using the rule splitting set $\{r_1,r_2\}$,
  we can obtain $\AS(\myPswim)$ by
  first computing $\AS(\{r_1,r_2\}) = \{ I_1, I_2\}$ where $I_1 = \{
  \mygoinout(\myindoor), \myneed(\myinout,\mymoney) \},$ $I_2 = \{ \mygoinout(\myoutdoor)\}$,
  and by then using Theorem~\ref{thm:hexsplitting}:
  $X\in \AS(\myPswim)$
  iff it holds that
  $X \in \AS(\{r_3,r_4,r_5,c_6,c_7,c_8\} \cup \facts(I_1))$
  or
  $X \in \AS(\{r_3,r_4,r_5,c_6,c_7,c_8\} \cup \facts(I_2))$.
  Note that the computation with $I_1$ yields no answer set, as
 $\myneed(\myinout,\mymoney)\in I_1$ satisfies the body of $c_8$ and \quo{kills} any model candidate.
  In contrast, if we use the generalized bottom $\{r_1,r_2,c_8\}$,
  we have $\AS(\{r_1,r_2,c_8\}) = \big\{ \{ \mygoinout(\myoutdoor) \} \big\}$
  and can use Theorem~\ref{thm:hexgensplitting}
  to obtain $\AS(\myPswim)$ with only one further answer set computation:
  $X\in \AS(\myPswim)$
  iff
  $X \in \AS(\{r_3,r_4,r_5,c_6,$ $c_7,c_8\} \cup \{ \mygoinout(\myoutdoor) \lif \})$.
  Note that we use $c_8$ in both computations,
  i.e., $c_8$ is shared between the generalized bottom and the remaining computation.
  \qed
\end{example}

Armed with the results of this section, we proceed to program evaluation
in the next section. A discussion of the new splitting theorems that
compares them to previous related theorems and argues for their
advantage is given in Section~\ref{sec:related-work}.

\section{Decomposition and Evaluation Techniques}
\label{sec:decomposition}

We now introduce our new \hex evaluation framework, which is
based on selections of sets of rules of a program
that we call
\emph{evaluation units} (or briefly {\em units}).

The traditional \hex evaluation algorithm
\cite{eiter-etal-06}  uses a
dependency graph over (non-ground) atoms, and gradually evaluates
sets of rules (the \quo{bottoms} of
a program) that are chosen based on this graph.  In contrast
our new evaluation algorithm
exploits the rule-based modularity results for \hex-programs
in Section~\ref{sec:ruledepsgeneralizedsplitting}.

While %
previously a constraint can only kill models once all its dependencies on rules
are fulfilled, the new algorithm increases evaluation efficiency by
sharing non-ground constraints, such that they may kill models earlier;
this is safe if all their nonmonotonic dependencies are fulfilled.
Moreover, %
units no longer must be maximal.
Instead, we require that partial models of units,
i.e., atoms in heads of their rules,
do not interfere with those of other units.
This allows for independence, efficient storage,
and easy composition of partial models of distinct units.

\reva{
In the following,
we first define a decomposition of a \hex-program
into evaluation units
that are organized in an evaluation graph
(Section~\ref{sec:evalgraph}).
Then we define an interpretation graph which contains
input and output interpretations of each evaluation unit
(Section~\ref{sec:interpretationgraph}).
We next extend this definition to answer set graphs
which are related with answer sets of the program
(Section~\ref{sec:answersetgraph}).
Finally Section~\ref{sec:evaluation}
uses these definitions in an algorithm
for enumerating answer sets of the \hex-program.
}

\subsection{Evaluation Graph}
\label{sec:evalgraph}
Using
rule dependencies, we next define the notion of evaluation graph
on evaluation units.
We then relate evaluation graphs to
splitting sets \cite{lifs-turn-94}
and show how
to use them
to evaluate \hex-programs
by evaluating
units and combining the results.

We define evaluation units as follows.

\begin{definition}
  \label{def:evalunit}
An \emph{evaluation unit} (in short \quo{unit})
is any lde-safe \hex-program. %
\end{definition}
The formal definition of lde-safety (see \ref{sec:liberalsafety} and
\citeN{eite-etal-14a}) is not crucial here, merely the property that a
unit has a finite grounding with the same answer sets as the original
unit which can be effectively computed; lde-safe \hex-programs are
the most general class of \hex-programs with this property and
computational support.

An important point of the
notion of evaluation graph
is that rule dependencies $r \rightarrow_x s$ lead to different
edges, i.e., unit dependencies,
depending on the dependency type $x \in \{n,m\}$ and
whether $r$ resp.\ $s$ is a constraint;
constraints cannot (directly) make atoms true,
hence they can be shared between units in certain cases,
while sharing non-constraints could violate modularity.

Given a rule $r \in P$ and a set $U$ of units,
we denote by $U|_r= \{ u \in U \mid r \in u \}$
the set of units that contain rule $r$.
\begin{definition}[Evaluation graph]
  \label{def:evalgraph}
An \emph{evaluation graph} $\cE = (U,E)$ of a program~$P$
is a directed acyclic graph whose vertices $U$ are evaluation units and
which fulfills the following properties:
  \begin{enumerate}[(a)]
  \item
    \label{itm:coverprog}
    $P = \bigcup_{u \ins U} u$,
    i.e., every rule $r \in P$ is contained in at least one unit;
  \item
    \label{itm:ruleunique}
    every non-constraint $r \in P$ is contained in exactly one unit,
    i.e., $\big| U|_r \big| = 1$;
  \item
    \label{itm:covernonmon}
    for each nonmonotonic dependency $r \dependsnmon s$ between rules $r$, $s \in P$
    and for all $u \in U|_r$, $v \in U|_s$, $u \neq v$,
    there exists an edge $(u,v) \in E$
    (intuitively,
    nonmonotonic dependencies between rules
    have corresponding edges everywhere in $\cE$); and
  \item
    \label{itm:covermon}
    for each monotonic dependency $r \dependsmon s$ between rules $r$, $s \in P$,
    there exists some
$u \in U|_r$ such that $E$
    contains all edges $(u,v)$ with $v \in U|_s$ for $v \neq u$
    (intuitively,
    for each rule $r$ there is (at least) one unit in $\cE$
    where all monotonic dependencies from $r$
    to other rules have corresponding outgoing edges in $\cE$).
  \end{enumerate}
\end{definition}

We remark that
\citeN{2011_pushing_efficient_evaluation_of_hex_programs_by_modular_decomposition}
and \citeN{ps2012} defined evaluation units as \emph{extended
  pre-groundable} \hex-programs; later, \citeN{r2014-phd} and
\citeN{eite-etal-14a} defined {\em generalized evaluation units} as
lde-safe \hex-programs, which subsume \emph{extended pre-groundable}
\hex-programs, and \emph{generalized evaluation graphs} on top as in
Definition~\ref{def:evalgraph}. As more the grounding properties of
units matter than the precise fragment, we dropped here \quo{generalized} to
avoid complex terminology.

As a non-constraint can occur only in a single unit,
the above definition implies that
all dependencies of non-constraints have corresponding edges in $\cE$,
which is formally expressed in the following proposition.
\ifrevisionmarkers %
\else
\newpage
\fi
\begin{proposition}
  \label{thm:evalgraphruledeps}
Let $\cE=(U,E)$ be an evaluation graph of a program $P$, and
assume $r \dependsmn s$ is a dependency between a non-constraint $r \in P$ and a rule $s \in P$.
Then $\{ (u,v) \mid u \in U|_r, v \in U|_s\} \subseteq E$ holds.
\end{proposition}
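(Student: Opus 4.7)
The plan is to leverage property~(\ref{itm:ruleunique}) of Definition~\ref{def:evalgraph}, which, because $r$ is a non-constraint, forces $r$ to lie in exactly one unit; call it $u$, so that $U|_r=\{u\}$. The desired inclusion then reduces to showing $(u,v)\in E$ for every $v\in U|_s$ with $v\neq u$ (edges $(u,u)$ are automatically excluded, since $\cE$ is a DAG, matching the $u\neq v$ conventions in clauses (\ref{itm:covernonmon}) and (\ref{itm:covermon})).

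I would then split on the type of the dependency $r\dependsmn s$. If $r\dependsnmon s$, clause~(\ref{itm:covernonmon}) directly yields $(u,v)\in E$ for every $v\in U|_s$ with $v\neq u$. If instead $r\dependsmon s$, clause~(\ref{itm:covermon}) only asserts the existence of \emph{some} $u'\in U|_r$ from which all edges $(u',v)$ with $v\in U|_s$, $v\neq u'$, belong to $E$. But since $U|_r=\{u\}$ by property~(\ref{itm:ruleunique}), the only possible witness is $u'=u$, and the required edges follow.

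The only conceptual point — and hence the main (mild) obstacle — is recognizing that the asymmetry in clause~(\ref{itm:covermon}) (existential quantification over $U|_r$ versus the universal quantification in~(\ref{itm:covernonmon})) collapses to a universal statement precisely when $r$ is a non-constraint, because then $U|_r$ is a singleton. Everything else is bookkeeping against the four clauses of Definition~\ref{def:evalgraph}.
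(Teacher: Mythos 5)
Your proof is correct and follows essentially the same route as the paper's: both arguments use clause~(b) of Definition~\ref{def:evalgraph} to collapse $U|_r$ to a singleton, then split on whether the dependency is nonmonotonic (clause~(c) applies universally) or monotonic (the existential witness in clause~(d) is forced to be the unique unit containing $r$). The paper phrases this as a proof by contradiction and additionally remarks that $s$ must be a non-constraint (so $U|_s$ is also a singleton), but that observation is not essential and your direct formulation captures the same key idea.
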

\newcommand\myproofEvalGraphRuleDeps{
  Assume towards a contradiction that there exist a non-constraint $r \in P$,
  a rule $s \in P$ with $r \dependsmn s$,
  and $u' \in U|_r$,
  $v' \in U|_s$ such that $(u',v') \notin E$.
  Due to Definition~\ref{def:dependencies},
  $r \dependsmn s$ implies that $s$ has
  $H(s) \neq \emptyset$ and therefore that $s$ is a non-constraint.
  Definition~\ref{def:evalgraph}~\eqref{itm:ruleunique} then implies that
  $U|_r = \{ u' \}$ and $U|_s = \{ v' \}$
  (non-constraints are present in exactly one unit).

  Case (i):
  for $r \dependsnmon s$,
  Definition~\ref{def:evalgraph}~\eqref{itm:covernonmon} specifies that
  for all $u \in U|_r$ and $v \in U|_s$ there exists an edge $(u,v) \in E$,
  therefore also $(u',v') \in E$, which is a contradiction.

  Case (ii):
  for $r \dependsmon s$,
  Definition~\ref{def:evalgraph}~\eqref{itm:covermon} specifies that
  some $u \in U|_r$ exists such that
  for every $v \in U|_s$ there exists an edge $(u,v) \in E$;
  since $U|_r = \{ u'\}$ and $U|_s = \{ v' \}$,
  it must hold that $(u',v') \in E$,
  which is a contradiction.
}%
\myinlineproof{%
  \begin{proof}[Proof of Proposition~\ref{thm:evalgraphruledeps}]
    \myproofEvalGraphRuleDeps
  \end{proof}
}

\myfigureOldEvalGraph{tb}
\myfigureBetterEvalGraph{tb}

\begin{example}[ctd.]
  Figures~\ref{fig:exOldEvalStrat} and~\ref{fig:exBetterEvalStrat}
  show two possible evaluation graphs for our running example.
The evaluation graph  $\cE_1$
contains every rule of $\myPswim$ in exactly one unit.
  In contrast,
  $\cE_2$ contains $c_8$ both in $u_2$ and in $u_4$.
  Condition~\eqref{itm:covermon}  of Definition~\ref{def:evalgraph}
  is particularly interesting for these two graphs;
  it is fulfilled as follows.
  Graph $\cE_1$ can be obtained by contracting rules in the rule
  dependency graph $DG(\myPswim)$ into units, i.e., $\cE_1$ is a (graph)
  minor of $DG(\myPswim)$ and therefore all rule dependencies are
  realized as unit dependencies and Conditions~\eqref{itm:covernonmon}
  and~\eqref{itm:covermon} are satisfied.  In contrast, $\cE_2$ is not a
  minor of $DG(\myPswim)$ because dependency $c_8 \dependsmon r_5$ is
  not realized as a dependency from $u_2$ to $u_4$. Nonetheless, all
  dependencies from $c_8$ are realized at $u_4$ and thus
  $\cE_2$ conforms with condition~\eqref{itm:covermon}, which
 merely requires that rule dependencies have edges corresponding to all
 monotonic rule dependencies at \emph{some} unit of the evaluation  graph.  \qed
\end{example}

Evaluation graphs have the important property
that partial models of evaluation units do not intersect,
i.e., evaluation units do not mutually depend on each other.
This is achieved by
acyclicity and
because rule dependencies are covered in the graph.

In fact, due to acyclicity,
mutually dependent rules of a program are contained in the same unit;
thus each strongly connected component of the program's dependency graph
is fully contained in a single unit.
Furthermore,
a unit can have in its rule heads only
atoms that do not unify with atoms in the rule heads of other units,
as rules which have unifiable heads mutually depend on one another.
This ensures that under any grounding,
the following property holds.
\begin{proposition}[Disjoint unit outputs]
  \label{thm:disjointunitoutputs}
Let $\cE \eqs (U,E)$ be an evaluation graph of a program $P$. Then
for each distinct units $u_1,u_2 \ins U$, it holds that $\gh{u_1} \caps \gh{u_2} \eqs \emptyset$.%
  \footnote{See page~\pageref{pos:ghdefinition} for the definition of notation $\gh{P}$.}
\end{proposition}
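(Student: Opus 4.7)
My strategy is to argue by contradiction using the acyclicity of $\cE$ together with the rule-uniqueness property (Definition~\ref{def:evalgraph}~\eqref{itm:ruleunique}) and the mutual monotonic dependency induced by unifiable head atoms (Definition~\ref{def:dependencies}~\eqref{itm:sim}). So suppose some distinct $u_1,u_2 \ins U$ share a ground head atom $a \ins \gh{u_1} \caps \gh{u_2}$. By the definition of $\gh{\cdot}$, there exist rules $r_1 \ins u_1$, $r_2 \ins u_2$ and ground instances $r_1' \ins \grnd(r_1)$, $r_2' \ins \grnd(r_2)$ with $a \ins H(r_1') \caps H(r_2')$.

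First I dispose of the case $r_1 \eqs r_2$: since $a \ins H(r_1')$ implies $H(r_1) \neq \emptyset$, the rule $r_1$ is a non-constraint. By Definition~\ref{def:evalgraph}~\eqref{itm:ruleunique} it lies in exactly one unit, contradicting $r_1 \ins u_1 \caps u_2$ with $u_1 \neq u_2$. So $r_1 \neq r_2$. Both are non-constraints (each has a head atom grounding to $a$), and by uniqueness $U|_{r_1} = \{u_1\}$ and $U|_{r_2} = \{u_2\}$.

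Next, let $h_1 \ins H(r_1)$ and $h_2 \ins H(r_2)$ be the (non-ground) head atoms whose ground instances give $a$; then $h_1 \sim h_2$ since $a$ is a common instance. By Definition~\ref{def:dependencies}~\eqref{itm:sim}, both $r_1 \dependsmon r_2$ and $r_2 \dependsmon r_1$ hold. Applying Proposition~\ref{thm:evalgraphruledeps} to the non-constraint $r_1$ yields $(u_1,u_2) \ins E$, and applying it to the non-constraint $r_2$ yields $(u_2,u_1) \ins E$, producing a cycle in $\cE$. This contradicts the acyclicity required by Definition~\ref{def:evalgraph}, so no such shared atom $a$ can exist. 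The only subtle point is making sure that the unification witness comes from a shared \emph{ground} instance, which is immediate once we pick the syntactic head atoms of $r_1$ and $r_2$ that ground to the same $a$; everything else reduces to a mechanical application of the definitions and of Proposition~\ref{thm:evalgraphruledeps}.
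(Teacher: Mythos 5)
Your proof is correct and follows essentially the same route as the paper's: a shared ground head atom forces unifiable head atoms, hence mutual monotonic dependencies by Definition~\ref{def:dependencies}~\eqref{itm:sim}, hence edges in both directions by Proposition~\ref{thm:evalgraphruledeps}, contradicting acyclicity. Your explicit disposal of the case $r_1 = r_2$ via rule-uniqueness is a small extra care the paper's proof leaves implicit (a rule shared between two units would have to be a constraint and thus contributes nothing to $\gh{\cdot}$), but it does not change the argument.
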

\newcommand\myproofDisjointOutputModels{
  Given two distinct units $u_1, u_2 \in U$,
  assume towards a contradiction that some
  $\gamma \in \gh{u_1} \cap \gh{u_2}$  exists.
  Then there exists some $r \in u_1$
  with $\alpha \in H(r)$ and $\alpha \sim \gamma$,
  and there exists some $s \in u_2$
  with $\beta \in H(s)$ and $\beta \sim \gamma$.
  As $\alpha \sim \gamma$ and $\beta \sim \gamma$ and $\gamma$ is ground,
  we obtain $\alpha \sim \beta$;
  hence, by Definition~\ref{def:dependencies}~\eqref{itm:sim}
  we have $r \dependsmon s$ and $s \dependsmon r$.
  As $r$ and $s$ have nonempty heads, they are non-constraints.
  Thus by Proposition~\ref{thm:evalgraphruledeps},
  there exist edges $(u_1,u_2), (u_2,u_1) \in E$.
  As an evaluation graph is acyclic, it follows $u_1 = u_2$; this is a contradiction.
}%
\myinlineproof{%
  \begin{proof}
    \myproofDisjointOutputModels
  \end{proof}
}
\begin{example}[ctd.]
  Figures~\ref{fig:exOldEvalStrat} and~\ref{fig:exBetterEvalStrat}
  show for each unit which atoms can become true
  due to rule heads in them, denoted as \quo{derived} atoms.
  Observe
  that both graphs have
  strictly non-intersecting atoms in rule heads of distinct units.
  \qed
\end{example}
As units of evaluation graphs can be arbitrary lde-safe programs, we
clearly have the following property.
\begin{proposition}
  \label{thm:evalgraphcoversdomainexpansionsafe}
  For every lde-safe \hex program $P$,
  some evaluation graph $\cE$ exists.
\end{proposition}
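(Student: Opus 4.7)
The plan is to give a direct existence proof by exhibiting a trivial evaluation graph, namely the one-unit graph $\cE = (\{P\}, \emptyset)$. Since $P$ is assumed to be lde-safe, Definition~\ref{def:evalunit} tells us that $P$ itself qualifies as an evaluation unit, so this graph is well-formed as a directed graph of evaluation units. Acyclicity is immediate because there are no edges at all.

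It then remains to verify the four conditions of Definition~\ref{def:evalgraph}. Conditions~(\ref{itm:coverprog}) and~(\ref{itm:ruleunique}) are immediate: the single unit $P$ contains every rule of $P$, and in particular every non-constraint is contained in exactly one unit. For conditions~(\ref{itm:covernonmon}) and~(\ref{itm:covermon}), note that for any rule $r \in P$ we have $U|_r = \{P\}$, so for any dependency $r \dependsmn s$ in $P$ both $U|_r$ and $U|_s$ equal $\{P\}$. Thus there are no pairs $(u,v)$ with $u \in U|_r$, $v \in U|_s$, and $u \neq v$, and both conditions are vacuously satisfied (for~(\ref{itm:covermon}) the single unit $u = P$ trivially witnesses the existential, since there is nothing to be covered).

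I do not anticipate a genuine obstacle: the only thing one has to be careful about is not to read too much into the definition of an evaluation graph. In particular, one might be tempted to build a more refined graph based on the rule dependency graph $DG(P)$ (e.g., by contracting strongly connected components as in the discussion of $\cE_1$ following Definition~\ref{def:evalgraph}), but such a construction is not needed for mere existence, and would additionally require an argument that each resulting unit remains lde-safe, which is nontrivial. The single-unit construction sidesteps this entirely, because lde-safety of the sole unit is exactly the hypothesis on $P$.
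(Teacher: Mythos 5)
Your proof is correct and takes exactly the same route as the paper, which simply observes that $\cE = (\{P\},\emptyset)$ is a valid evaluation graph; your version just spells out the (vacuous) verification of the conditions of Definition~\ref{def:evalgraph} in more detail.
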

\newcommand\myproofEvalGraphCoversDExpansionSafe{
  For an lde-safe program $P$,
  the graph $\cE = (\{P\},\emptyset)$ is a valid evaluation graph.
}%
\myinlineproof{%
  \begin{proof}[Proof of Proposition~\ref{thm:evalgraphcoversdomainexpansionsafe}]
    \myproofEvalGraphCoversDExpansionSafe
  \end{proof}
}
Indeed, we can simply put $P$ into a single unit to obtain a valid
evaluation graph. Thus the \hex evaluation approach based on evaluation
graphs is applicable to all domain-expansion safe \hex programs.

\subsubsection{Evaluation Graph Splitting}

We next show that units and their predecessors in an evaluation graph
correspond to generalized bottoms.
We then use this property to formulate an algorithm
for unit-based, efficient evaluation of \hex-programs. %

Given an evaluation graph $\cE = (U,E)$, we write
$u < w$, if a path from $u$ to $w$ exists in $\cE$, and
$u \leq w$ if either $u < w$ or $u = w$.

For a unit $u \in U$, we denote by $\myinputs(u) = \{ v \in U \mid (u,v)\in E \}$
the set of units on which $u$ (directly) depends and by
$u^< = \bigcup_{w \in U, u < w} w$
the set of rules in all units on which $u$ transitively depends; furthermore,
we let $u^\leq = u^< \cup u$.
Note that for a leaf unit $u$ (i.e., $u$ has no predecessors)
we have $\myinputs(u)=u^< = \emptyset$ and $u^\leq = u$.
\begin{theorem}
  \label{thm:evalgraphprecbottom}
For every evaluation graph $\cE=(U,E)$ of a \hex-program $Q$
and unit $u \in U$, it holds that $u^<$
is a generalized bottom of $u^\leq$
wrt.\
$R = \{ r \in u^<
\mid  \reva{H}(r)\neq \emptyset \}$. %
\end{theorem}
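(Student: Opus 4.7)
The plan is to verify the two parts of Definition~\ref{def:genbottom} for the triple $(u^\leq, R, u^<)$: first, that $R \eqs \{r \ins u^< \mids H(r) \neqs \emptyset\}$ is a rule splitting set of $u^\leq$; second, that every rule in $u^< \setminuss R$ is a constraint that has no nonmonotonic dependency on any rule in $u^\leq \setminuss u^<$.

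For the splitting-set condition, I fix $r \ins R$ and $s \ins u^\leq$ with $r \dependsmn s$ and show $s \ins R$. Since $r$ is a non-constraint in $u^<$, Definition~\ref{def:evalgraph}(b) places $r$ in a unique unit $v$, with $u < v$. Inspecting the four cases of Definition~\ref{def:dependencies} shows that each one requires an atom $b \ins H(s)$, hence $H(s) \neqs \emptyset$ and $s$ is also a non-constraint, housed in a unique unit $w$. Applying Proposition~\ref{thm:evalgraphruledeps} to the non-constraint $r$ forces $(v,w) \ins E$ whenever $v \neqs w$. Three subcases remain: if $w \eqs v$, then $s \ins v \subseteqs u^<$ directly; if $w \eqs u$, the edge $(v,u)$ would give $v < u$, contradicting $u < v$ by acyclicity, so this case is impossible; otherwise the edge yields $v < w$, and combined with $u < v$ transitivity gives $u < w$, so $s \ins w \subseteqs u^<$. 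In every case $s \ins u^<$, and $H(s) \neqs \emptyset$ then yields $s \ins R$.

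For the second condition, the inclusions $R \subseteqs u^< \subseteqs u^\leq$ are immediate, and by construction $u^< \setminuss R$ contains exactly the constraints of $u^<$. To see that none of these constraints $c$ depends nonmonotonically on any rule $t \ins u^\leq \setminuss u^< \eqs u \setminuss u^<$, I argue by contradiction. If $c \dependsnmon t$, Definition~\ref{def:dependencies}(ii) forces $H(t) \neqs \emptyset$, so $t$ is a non-constraint, and Definition~\ref{def:evalgraph}(b) together with $t \ins u$ pins the unique host of $t$ to $u$. Meanwhile, $c \ins u^<$ sits in some unit $v$ with $u < v$, so $v \neqs u$ by acyclicity. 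Definition~\ref{def:evalgraph}(c) applied to $c \dependsnmon t$ then delivers the edge $(v,u) \ins E$, which gives $v < u$ and contradicts $u < v$.

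The one thing requiring vigilance throughout is edge orientation in $\cE$ --- edges point from a unit to the units it depends on, so $u < v$ reads as \emph{$u$ depends on $v$} --- together with the distinction between the unique host of a non-constraint (Definition~\ref{def:evalgraph}(b)) and the potentially multiple hosts of a constraint. The workhorse that closes off each critical configuration is acyclicity of $\cE$: whenever the unique-host argument threatens to place a rule simultaneously in $u$ and in some $v$ with $u < v$, acyclicity forbids the back-edge that would be needed.
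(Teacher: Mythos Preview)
Your proof is correct and follows essentially the same route as the paper's: you verify the four ingredients of Definition~\ref{def:genbottom} (the inclusion chain $R\subseteq u^< \subseteq u^\leq$, that $u^<\setminus R$ consists of constraints, that those constraints have no nonmonotonic dependency into $u^\leq\setminus u^<$, and that $R$ is a rule splitting set of $u^\leq$), using Proposition~\ref{thm:evalgraphruledeps} for the splitting-set part and condition~\eqref{itm:covernonmon} of Definition~\ref{def:evalgraph} for the nonmonotonic-dependency part, with acyclicity of $\cE$ closing each contradiction. Your case analysis is spelled out more explicitly than in the paper (which argues more tersely that ``dependencies are covered''), but the underlying argument is the same.
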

\newcommand\myproofevalgraphprecbottom{
 For any set of rules, let $\mi{constr}(S)= \{ r\in S \mid
 \reva{H}(r)=\emptyset \}$ denote the set of constraints in $S$.
  We say that the
  \emph{dependencies of $r \in Q$ are covered at unit $u \in U$},
  if
  for every rule $s \in Q$ such that $r \dependsmn s$ and $s \notin u$,
  it holds that $(u,u') \in E$ for all $u' \in U|_s$, i.e.,
  $u$ has an edge to all units containing $s$.

  To prove that $B = u^<$ is a generalized bottom of $P = u^\leq$
  wrt.\ the rule splitting set $R = u^< \setminus \mi{constr}(u^<)$
  as by Definition~\ref{def:genbottom}, we prove that
  \begin{inparaenum}[(a)]
  \item
    $R \subseteq B \subseteq P$,
  \item
    $B \setminus R$ contains only constraints,
  \item
    no constraint in $B \setminus R$
    has nonmonotonic dependencies to rules in $P \setminus B$, and
  \item
    $R$ is a rule splitting set of $P$.
  \end{inparaenum}

  Statement (a) corresponds to
  $u^< \setminus \mi{constr}(u^<) \subseteq u^< \subseteq u^\leq$
  and $u^\leq$ is defined as $u^\leq = u^< \cup u$,
  therefore the relations all hold.
  For (b), $B \setminus R = u^< \setminus (u^< \setminus \mi{constr}(u^<))$,
  and as $A \setminus (A \setminus B) = A \cap B$,
  it is easy to see that $B \setminus R = u^< \cap \mi{constr}(u^<)$
  and thus $B \setminus R$ only contains constraints.
  For (c),
  we show a stronger property,
  namely that no rule (constraint or non-constraint)
  in $B$ has nonmonotonic dependencies to rules in $P \setminus B$.
  $B = u^<$ is the union of evaluation units $V = \{ v \in U \mid v < u \}$.
  By Definition~\ref{def:evalgraph}~\eqref{itm:covernonmon}
  all nonmonotonic dependencies $r \dependsneg s$ are covered at every
  unit $w$ such that $w\in U_r$.
  Hence if $r\in w$ and $w \in V$, then either
  $s \in w$ or $s \in w^<$ holds, and hence $s\in w^\leq \subseteq u^<$.
  As $P \setminus B = u^\leq \setminus u^<$,
  no nonmonotonic dependencies
  from $B=u^<$ to $P \setminus B$ exist and (c) holds.
  For (d) we know that $R = u^< \setminus \mi{constr}(u^<)$
  contains no constraints,
  and by Proposition~\ref{thm:evalgraphruledeps}
  all dependencies of non-constraints in $R$ are covered by $\cE$.
  Therefore $r \in R$, $r \dependsmn s$, and $s \in P$
  implies that $s \in R$.
  Consequently, (d) holds which proves the theorem.
}%
\myinlineproof{%
  \begin{proof}%
    \myproofevalgraphprecbottom
  \end{proof}
}

\begin{example}[ctd.]
  \label{ex:evalgraphsthmprecbottom}
In $\cE_1$, $u_2^< = u_1$ and $u_2^\leq = u_1 \cups
  u_2$ and $u_2^<$ is a generalized bottom of $u_2^\leq$ wrt.\ $R=\{r_1,
  r_{\reva{3}}, r_4\}$.
  In $\cE_2$,
  we have $u_4^< = u_1 \cups u_2 \cups u_3$
  and $u_4^\leq = \myPswim$
  and %
  $u_4^<$ is a generalized bottom of $\myPswim$
  wrt.\
 $R = \{ r_1,r_2,r_3,r_4 \}$.
  We can verify this on Definition~\ref{def:genbottom}:
  we have $P = \myPswim$,
  $B = u_4^< = \{ r_1,r_2,r_3,r_4,c_6,c_7,c_8 \}$,
  and $R$ as above.
  Then $R \subseteqs B \subseteqs P$, and
  furthermore $B \setminus R = \{ c_6,c_7,c_8 \}$
  consists of constraints none of which
  depends nonmonotonically on a rule in
  $P \setminus B = \{ r_5 \}$.
  \qed
\end{example}
\begin{theorem}
  \label{thm:evalgraphpredbottom}
  Let $\cE=(U,E)$ be an evaluation graph of a \hex-program $Q$ and
  $u \in U$. Then for every unit $u' \in \myinputs (u)$,
  it holds that $u'^\leq$
  is a generalized bottom of the subprogram $u^<$
  wrt.\ the rule splitting set $R = \{ r \in u'^\leq \mid \reva{H}(r)\neq\emptyset \}$.
\end{theorem}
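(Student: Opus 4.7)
The plan is to verify the four properties that Definition~\ref{def:genbottom} demands for the triple $P = u^<$, $B = u'^\leq$, and $R = \{ r \in u'^\leq \mid H(r)\neq\emptyset\}$. First I would establish the basic inclusion $u'^\leq \subseteq u^<$. Since $u' \in \myinputs(u)$, we have $(u,u') \in E$ and hence $u < u'$; for any unit $w$ with $u' < w$, transitivity of the path relation yields $u < w$, so the rules of $w$ lie in $u^<$, and the rules of $u'$ are in $u^<$ as well. Thus $u'^\leq = u'^< \cup u' \subseteq u^<$. Combining this with $R \subseteq u'^\leq$ gives condition~(a) of the definition, and condition~(b) is immediate because $R$ consists exactly of the non-constraints in $u'^\leq$, so $u'^\leq \setminus R$ contains only constraints.

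For condition~(c), I would take any constraint $c \in u'^\leq \setminus R$; by definition $c$ lies in some unit $w$ with $u' \leq w$. Assume towards a contradiction that $c \dependsnmon s$ for some $s \in u^< \setminus u'^\leq$. By Definition~\ref{def:evalgraph}~\eqref{itm:covernonmon}, every $v \in U|_s$ with $v \neq w$ satisfies $(w,v) \in E$, i.e., $w < v$. Together with $u' \leq w$, transitivity gives $u' < v$, so the rules of $v$ (and in particular $s$) are in $u'^< \subseteq u'^\leq$; while if $v = w$, then $s \in w \subseteq u'^\leq$ directly. Either way $s \in u'^\leq$, contradicting $s \in u^< \setminus u'^\leq$.

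For condition~(d), the rule-splitting property, I would take $r \in R$ together with any $s \in u^<$ such that $r \dependsmn s$. Since $r$ is a non-constraint and by Definition~\ref{def:dependencies} each dependency $r \dependsmn s$ involves a head atom of $s$, both $r$ and $s$ are non-constraints. Then by Definition~\ref{def:evalgraph}~\eqref{itm:ruleunique}, $U|_r = \{w\}$ and $U|_s = \{v\}$ for unique units $w$, $v$ with $u' \leq w$ (because $r \in u'^\leq$). Proposition~\ref{thm:evalgraphruledeps} forces $(w,v) \in E$ whenever $w \neq v$, and then the same reachability chain as above shows $v \in u'^\leq$ and therefore $s \in u'^\leq$; since $s$ is a non-constraint, $s \in R$, establishing the splitting condition.

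The main obstacle is bookkeeping the direction of the $<$-relation: edges in $\cE$ point along increasing $<$ toward ``depended-on'' units, while the coverage properties of Definition~\ref{def:evalgraph} must be instantiated for a rule $r$ that may lie in some $w$ strictly below $u'$ in the DAG, and one must still conclude that the targets of its dependencies are themselves reachable from $u'$. Handling this uniformly requires combining the three facts $u' \leq w$, $(w,v) \in E \Rightarrow w < v$, and transitivity into a single $u' < v$ conclusion, which is routine once the notation is fixed but is the only place where care is needed.
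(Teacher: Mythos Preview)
Your proof is correct and follows essentially the same approach as the paper's: both verify the four conditions of Definition~\ref{def:genbottom} for $P=u^<$, $B=u'^\leq$, and $R$ the non-constraints of $u'^\leq$, using Definition~\ref{def:evalgraph}~\eqref{itm:covernonmon} for the nonmonotonic-dependency closure and Proposition~\ref{thm:evalgraphruledeps} for the splitting-set property. The only cosmetic difference is that the paper proves (c) in the slightly stronger form ``no rule in $B$ has a nonmonotonic dependency to $P\setminus B$,'' whereas you restrict to constraints in $B\setminus R$; and you should make explicit in (d) that the case $w=v$ (i.e., $r$ and $s$ in the same unit) is handled exactly as in your argument for (c).
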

\newcommand\myproofevalgraphpredbottom{
  Similar to the proof of Theorem~\ref{thm:evalgraphprecbottom},
  we show this in four steps;
  given $P = u^<$,
  $R = u'^{\leq} \setminus \mi{constr}(u'^{\leq})$,
  and
  $B = u'^{\leq} = u' \cup u'^<$,
  we show that
  \begin{inparaenum}[(a)]
  \item
    $R \subseteq B \subseteq P$,
  \item
    $B \setminus R$ contains only constraints,
  \item
    no constraint in $B \setminus R$
    has nonmonotonic dependencies to rules in $P \setminus B$, and
  \item
    $R$ is a rule splitting set of $P$.
  \end{inparaenum}
  Let $\myinputs(u) = \{u_1,\ldots,u_k\}$ and
  Let $V = \{ v \in U \mid v < u' \}$ be the set of units
  on which $u'$ transitively depends.
  (Note that $V \subset \myinputs(u)$ and $u \notin V$.)
  As $u'^<$ contains all units $u'$ transitively depends on,
  we have $B = u' \cup \bigcup_{w \in V} w$.

  For (a), $R \subseteq B$ holds trivially,
  and $B \subseteq P$ holds by definition of $u^<$ and $u'^\leq$
  and because $u' \in \myinputs(u)$.
  Statement (b) holds, because $B \setminus R$ removes $R$ from $B$,
  i.e., it removes everything that is not a constraint in $B$ from $B$,
  therefore only constraints remain.
  For (c) we show that no rule in $B$
  has a nonmonotonic dependency to rules in $P \setminus B$.
  By Definition~\ref{def:evalgraph}~\eqref{itm:covernonmon},
  all nonmonotonic dependencies are covered at all units.
  Therefore a rule $r \in w$, $w \in \{ u' \} \cup V$
  with $r \dependsnmon s$, $s \in U$
  implies that either
  $s \in w$, or that
  $s$ is contained in a predecessor unit of $w$
  and therefore in $u'$ or in $V$.
  Hence there are no nonmonotonic dependencies from rules in $B$
  to any rules not in $B$,
  and hence also not to rules in $P \setminus B$ and (c) holds.
  For (d) we know that $R$ contains no constraints
  and by Proposition~\ref{thm:evalgraphruledeps}
  all dependencies of non-constraints in $R$ are covered by $\cE$.
  Therefore $r \in R$, $r \dependsmn s$, $s \in P$ implies that $s \in R$
  and the theorem holds.
}%
\myinlineproof{%
  \begin{proof}%
    \myproofevalgraphpredbottom
  \end{proof}
}
\begin{example}[ctd.]
  \label{ex:evalgraphspredbottom}
In $\cE_1$, we have $u_1 \in \myinputsE{\cE_1}(u_2)$;
hence  $u_1^\leq = u_1$ is by Theorem~\ref{thm:evalgraphpredbottom}
a generalized bottom of $u_2^< = u_1$  wrt.\   %
$R = \{ r_1,r_3,r_4\}$. Furthermore, $u_2 \in \myinputsE{\cE_1}(u_3)$ and  hence  $u_2^\leq = u_1 \cups u_2$
  is a generalized bottom of  $u_3^< = u_1 \cups u_2$  wrt.\
  $R = \{ r_1,r_2,r_3,r_4,r_5\}$.
The case of $\cE_2$ and $u_4$ is less clear.
  We have $u_2 \in \myinputsE{\cE_2}(u_4)$,
  thus by Theorem~\ref{thm:evalgraphpredbottom}
  $u_2^\leq = u_1 \cups u_2 = \{ r_1,r_2,c_8 \}$
  is a generalized bottom of
  $u_4^< = u_1 \cups u_2 \cups u_3$
  wrt.\
  $R = \{r_1,r_2\}$.
 Comparing against Definition~\ref{def:genbottom},
 we have
  $P = u_1 \cups u_2 \cups u_3$
  and
  $B = u_1 \cups u_2$;
 thus indeed $R \subseteqs B \subseteq P$
 and no constraint in $B \setminus R = \{c_8\}$
depends nonmonotonically on any rule in
  $P \setminuss B = \{ r_3,r_4,c_6,c_7 \}$.
  \qed
\end{example}

\subsubsection{First Ancestor Intersection Units}
\label{sec:fais}

\myfigureCAUexample{tb}

We will use the evaluation graph for model building; as syntactic
dependencies reflect semantic dependencies between units, multiple
paths between units \reva{require} attention. Of particular importance are
\emph{\mycaustext,} which are units where distinct
paths starting at some unit meet first. More formally,

\ifrevisionmarkers %
\newpage
\fi
\begin{definition}
  Given an evaluation graph $\cE=(U,E)$
  and units $v\neq w \in U$,
  we say that unit \emph{$w$ is a \mycautext (\myCAUtext) of $v$},
  if paths $p_1\neq p_2$ from $v$ to $w$ exist in $E$ that overlap only in $v$ and $w$.
 By $\mycau(v)$  we denote the set of all \myCAUstext of $v$.
\end{definition}
\begin{example}
  Figure~\ref{fig:cauexamples} sketches an evaluation graph with
  dependencies %
  $a \depends b \depends c \depends e \depends f$,
  $a \depends d \depends e \depends g$,
  and
  $b \depends d$.
  We have that $\mycau(a) = \{ d, e\}$,
 $\mycau(b) = \{ e \}$, and $\mycau(u) = \emptyset$ for each $u\in
 U\setminus \{a,b\}$.
  In particular,
  $f$ and $g$ are not \myCAUstext of $b$,
  because all pairs of distinct paths from $b$ to $f$ or $g$
  overlap in more than two units.
  \qed
\end{example}
Note that for tree-shaped evaluation graphs, $\mycau(v)=\emptyset$
for each unit $v$ as paths between nodes in a tree are unique.
\begin{example}[ctd.]
  The evaluation graph $\cE_1$
  of
  $\myPswim$
  is a tree  (see Fig.~\ref{fig:exOldEvalStrat}),
  thus $\mycau(u) = \emptyset$
  for $u \in \{u_1,u_2,u_3\}$.
  In contrast, the evaluation graph $\cE_2$ of $\myPswim$
  (see Fig.~\ref{fig:exBetterEvalStrat})
  is not a tree;
  we have that $\mycau(u_4) = \{u_1\}$
  and no other unit in $\cE_2$ has \myCAUstext.
  \qed
\end{example}

We can build an evaluation graph $\cE$ for a program $P$ based on
the dependency graph $DG(P)$. Initially, the units are set to the
maximal strongly connected components of $DG(P)$, and then
units are iteratively merged while preserving acyclicity and the
conditions (a)-(d) of an evaluation graph; we will
discuss some existing heuristics in Section~\ref{sec:heuristics}, while for details we refer to
\citeN{r2014-phd}.

\subsection{Interpretation Graph}
\label{sec:interpretationgraph}

We now define the Interpretation Graph (short \myigraph),
which is the foundation of our model building algorithm.
An \myigraph is a labeled directed graph
defined wrt.\ an evaluation graph, where
each vertex is associated with a specific evaluation unit,
a type (input resp.\ output interpretation) and
\reva{a} set of ground atoms.

We do not use interpretations themselves as vertices, as distinct
vertices may be associated with the same interpretation; still we call
vertices of the \myigraph interpretations.

\reva{Towards defining \myigraphs
we first define an auxiliary concept called {\em interpretation structure}.
We then define \myigraphs
as the subset of interpretation structures
that obey certain topological and uniqueness conditions.
Finally we present an example
(Example~\ref{ex:modelGraphBetterStrat} and Figure~\ref{fig:exModelGraphBetterStrat}).}
\begin{definition}[Interpretation Structure]
  \label{def:interpretationstructure}
  Let $\cE = (U,E)$ be an evaluation graph for a program $P$.
  An \emph{interpretation structure $\cI$ for $\cE$}
  is a directed acyclic graph $\cI = (M,F,\myunit,\mytype,\myint)$
  \reva{with nodes} $M \subseteq \cI_{\mi{id}}$
  from a countable set $\cI_{\mi{id}}$ of identifiers,
  \reva{edges $F \subseteqs M \timess M$,}
  and total node labeling functions
  $\myunit\colon M \to U$,
  $\mytype\colon M \to \{\scI,\, \scO\}$,
  and $\myint\colon M \to 2^{\HBP}$.
\end{definition}
The following notation will be useful.
Given unit $u \in U$ in the evaluation graph associated with an \myigraph $\cI$,
we denote by
$\myimodelsI(u) \eqs
  \{ m \ins M \mid
    \myunit(m) \eqs u$ $\text{and } \mytype(m) \eqs \scI \}$ the {input
    (i-)interpretations}\/, and by $\myomodelsI(u) =
  \{ m \ins M \mid$ $\myunit(m) \eqs u \text{ and } \mytype(m) \eqs \scO \}$
the {output (o-)interpretations}\/ of $\cI$ at unit $u$.
For every vertex $m \in M$, we denote by
$$
  \myint^+(m) = \myint(m) \cup \bigcup
    \{ \myint(m') \mid m' \in M \text{ and $m'$ is reachable from $m$ in $\cI$}\}
$$
the \emph{expanded interpretation} of $m$.

Given an interpretation structure $\cI = (M,F,\myunit,\mytype,\myint)$
for $\cE=(U,E)$ and a unit $u \in U$, we define the following properties:
\begin{enumerate}[(a)]
\item[(IG-I)]
  \emph{I-connectedness:}
  for every $m \ins \myomodelsI(u)$, \reva{it holds that}
  $|\{ m' \mids (m,m') \ins F\}| = \reva{ |\{ m' \ins \myimodelsI(u)
    \mids (m,m') \ins F\}| } \eqs  1$;
\item[(IG-O)]
  \emph{O-connectedness:}
  for every $m \ins \myimodelsI(u)$,
  \reva{$|\{ m_i \mids (m,m_i)$ $\ins F\}| \eqs |\myinputs(u)|$}
  and for every $u_i \ins \myinputs(u)$
  \reva{we have $|\{ m_i \ins \myomodelsI(u_i) \mids (m,m_i) \ins F\}| \eqs 1$};
\item[(IG-F)]
  \emph{\myCAUtext intersection:}
 let $\cE'$ be the subgraph  of $\cE$ on the units reachable from $u$%
\footnote{I.e., $\cE'$ is the subgraph of $\cE$ induced by the set of
 units reachable from $u$, including $u$; in abuse of terminology, we
 briefly say \quo{the subgraph (of $\cE$) reachable from}}
 and for every $m \in \myimodelsI(u)$, let $\cI'$ be the subgraph of $\cI$ reachable from $m$.
  Then
  $\cI'$ contains exactly one \myoint at each unit of $\cE'$.
  (Note that both $\cI$ and $\cE$ are acyclic,
 hence $\cI'$ does not include $m$ and $\cE'$ does not include $u$.)
\item[(IG-U)]
  \emph{Uniqueness:}
  for every
  $m_1\neq m_2 \ins M$ such that
$\myunit(m_1) \,{=}$ $\myunit(m_2) = u$,
we have  $\myint^+(m_1) \neq \myint^+(m_2)$ (the expanded
interpretations differ).
\end{enumerate}

\begin{definition}[Interpretation Graph]
  \label{def:interpretationgraph}
  Let $\cE = (U,E)$ be an evaluation graph for a program $P$.
  then an \emph{interpretation graph (\myigraph)}
  for $\cE$
  is an interpretation structure $\cI = (M,F,\myunit,\mytype,\myint)$  that fulfills for every unit $u \in U$
  the conditions (IG-I), (IG-O), (IG-F), and (IG-U).
\end{definition}

Intuitively, the conditions make every \myigraph \quo{live}
on its associated evaluation graph:
an \myiint must conform to all dependencies of the unit it belongs to,
by depending on exactly one \myoint at that unit's predecessor units (IG-\reva{O});
moreover an \myoint must depend on exactly one \myiint at the same unit (IG-\reva{I}).
Furthermore,
every \myiint depends directly or indirectly
on exactly one \myoint at each unit it can reach in the \myigraph (IG-F);
this ensures that no expanded interpretation $\myint^+(m)$
\quo{mixes} two or more \myiints resp.\, \myoints from the same unit.
(The effect of condition (IG-F) is visualized in Figure~\ref{fig:goodbadcau}.)
Finally, redundancies in an \myigraph are ruled out by the uniqueness condition (IG-U).

\myfiguregoodbadcau{btp}
\myfigureModelGraphBetterStrat{tb}
\begin{example}[ctd.]
  \label{ex:modelGraphBetterStrat}
  Figure~\ref{fig:exModelGraphBetterStrat} %
  shows an interpretation graph $\cI_2$ for $\cE_2$.
 The $\myunit$ label is depicted as \reva{dashed} rectangle
  labeled with the respective unit.
  The $\mytype$ label is indicated after interpretation names,
  i.e., $m_1/\scI$ denotes that interpretation $m_1$ is an input interpretation.
  For $\cI_2$ the set $\cI_\mi{id}$ of identifiers is $\{ m_1,\ldots,m_{15}\}$.
  \reva{T}he symbol \Lightning{}\
  \reva{in a unit $u$ pointing to an \myiint $m$ indicates
 that there is no \myoint wrt.\ input $m$ of unit $u$.
  Section~\ref{sec:evaluation} describes
  an algorithm for building an \myigraph given an evaluation graph.
  }

  Dependencies are shown as arrows between interpretations.
  Observe that %
  I-connec\-tedness \reva{(IG-I)} is fulfilled,
  as every \myoint depends on exactly one \myiint at the same unit.
  \reva{For example $m_9$ and $m_{10}$ depend on $m_7$.}
  O-connectedness \reva{(IG-O)} is similarly fulfilled,
  in particular consider \myiints of $u_4$ in $\cI_2$:
  $u_4$ has two predecessor units ($u_2$ and $u_3$)
  and every \myiint at $u_4$ depends on exactly one \myoint
  at $u_2$ and exactly one \myoint at $u_3$.
  The condition on \myCAUtext intersection \reva{(IG-F)}
  could only be violated by \myiints at $u_4$\reva{,
  concretely it would be violated if two different
  \myoints are reachable at $u_1$ from one \myiint at $u_4$}.
  We can verify that from both $m_{13}$ and $m_{14}$
  we can reach exactly one \myoint at each unit;
  hence the condition is fulfilled.
  \reva{An example for a violation
  would be an \myiint at $u_4$
  that depends on $m_6$ and $m_9$:
  in this case we could reach two distinct \myoints
  $m_2$ and $m_3$ at $u_1$, thereby violating (IG-F).}
  Uniqueness (IG-U) is satisfied,
  as in both graphs no unit has two output models with the same content.
  \qed
\end{example}

Note that the empty graph is an \myigraph.
This is by intent, as our model building algorithm will progress
from an empty \myigraph to one with interpretations at every unit,
precisely if the program has an answer set.

\subsubsection{Join}

We will build \myigraphs by adding one vertex at a time,
always preserving the \myigraph conditions.
Adding an \myoint requires to add a dependency to one \myiint at the same unit.
Adding an \myiint similarly requires addition of dependencies.
However this is more involved
because condition (IG-F) could be violated.
Therefore, we next define an operation that captures all necessary conditions.

We call the combination of \myoints which yields an \myiint a \quo{\emph{join}}.
Formally, the join operation \quo{$\join$} is defined as follows.
\begin{definition}
  \label{def:join}
  Let $\cI=(M,F,\myunit,\mytype,\myint)$ be an \myigraph for
  an evaluation graph $\cE=(V,E)$ of a program $P$.
  Let $u \in V$ be a unit, %
  let $\myinputs(u)= \{u_1,\ldots,u_k\}$ be the predecessor units of $u$,
  and let $m_i\in \myomodelsI(u_i)$, $1 \leq i \leq k$,
  be an \myoint at $u_i$.
  Then \emph{the join
  $m_1 \join \dotsb \join m_k = \bigcup_{1 \leq i \leq k} \myint(m_i)$
  at $u$ is defined}
  iff
  for each $u' \in \mycau(u)$
  the set of \myoints at $u'$ that are reachable (in $F$) from some \myoint
  $m_i$, $1 \leq i \leq k$, contains exactly one \myoint $m' \in \myomodelsI(u')$.
\end{definition}
Intuitively,
a set of interpretations can only be joined
if all interpretations depend on the same (and on a single) interpretation at every unit.
\begin{example}[ctd.]
  \label{ex:join}
  In $\cI_2$,
  \myiints $m_1$, $m_4$, $m_5$, $m_7$, and $m_8$
  are created by trivial join operations with none or one predecessor unit.
  For $m_{13}$ and $m_{14}$, we have a nontrivial join:
  $\myint(m_{13}) = \myint(m_6) \cup \myint(m_{11})$
  and the join is defined because $\mycau(u_4) = \{ u_1 \}$,
  and from $m_{6}$ and $m_{11}$ we can reach in $\cI_2$
  exactly one \myoint at $u_1$.
  Observe that the join $m_6 \join m_9$ is not defined,
  as we can reach in $\cI_2$ from $\{ m_6, m_9 \}$
  the  \myoints $m_2$ and $m_3$ at $u_1$, and thus
  more than exactly one \myoint at some \myCAUtext of $u_4$.
Similarly, the join $m_6 \join m_{10}$ is undefined,
as we can reach $m_2$ and $m_3$ at $u_1$.
  \qed
\end{example}
The result of a join is the union of predecessor interpretations; this
is important for answer set graphs and join operations on them,
which comes next.
Note that each leaf unit
(i.e., without predecessors)
has exactly one well-defined join result, viz.\ $\emptyset$.

If we add a new \myiint from the result of a join operation
to an \myigraph and dependencies to all participating \myoints, the
resulting graph is again an \myigraph; thus the join is sound wrt.\ to the \myigraph properties.
Moreover, each \myiint that can be added to \reva{an} \myigraph while
preserving the \myigraph conditions can be
synthesized by a join; that is, the join is complete for such additions.
This is a consequence of the following result.
\begin{proposition}
  \label{thm:joinmodelgraphsoundcomplete}
  Let $\cI \eqs (M,F,\myunit,\mytype,\myint)$
  be an \myigraph
  for an evaluation graph $\cE \eqs (V,$ $E)$ and
  $u \in V$ with $\myinputs(u) \eqs \{u_1,\ldots,u_k\}$.
  Furthermore, let
  $m_i \ins \myomodelsI(u_i)$, $1 \leq i \leq k$,
 such that no vertex $m \in \myimodelsI(u)$ exists such that $\{ (m,m_1),
  \ldots,$ $(m,m_k) \} \subseteq F$.
  Then the join $J = m_1 \join \dotsb \join m_k$ is defined at $u$
  iff
  $\cI' = (M',\allowbreak{}F',\allowbreak{}\myunit',\allowbreak{}\mytype',\allowbreak{}\myint')$
  is an \myigraph for $\cE$
  where
  \begin{inparaenum}[(a)]
  \item
    $M' = M \cup \{m'\}$ for some new vertex $m' \ins \cI_{\mi{id}} \setminuss M$,
  \item
    $F' = F \cup \{(m',m_i)\mid 1 \leq i \leq k\}$,
  \item
    $\myunit' = \myunit \cup \{ (m',u) \}$,
  \item
    $\mytype' = \mytype \cup \{ (m',\scI) \}$, and
  \item
    $\myint' = \myint \cup \{ (m',J) \}$.
  \end{inparaenum}
\end{proposition}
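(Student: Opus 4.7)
The plan is to prove the two directions separately, with the bulk of the work being the forward direction where we must check all four \myigraph{} conditions (IG-I), (IG-O), (IG-F), (IG-U) for the extended structure $\cI'$, while the backward direction follows almost immediately from (IG-F) specialized to the new vertex $m'$.

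For the forward direction, I assume $J$ is defined and verify the conditions one at a time. Condition (IG-I) is preserved because $\cI'$ adds only the new \myiint{} $m'$ and edges outgoing from $m'$; no new \myoint{} is introduced, and no existing edge is removed, so the I-connectedness requirement for \myoints{} at every unit is inherited from $\cI$. Condition (IG-O) for units other than $u$ and for existing \myiints{} at $u$ is inherited likewise; for $m'$ itself, by construction $F'$ adds exactly the $k$ edges $(m',m_1),\ldots,(m',m_k)$, one per predecessor unit of $u$, which is precisely what (IG-O) requires. Condition (IG-U) at units other than $u$ is inherited. At $u$, the hypothesis that no existing \myiint{} at $u$ already has all edges $(m,m_i)$ rules out the obvious collision in dependency structure; combined with (IG-O) applied to any existing $m^\star \ins \myimodelsI(u)$ (which forces $m^\star$ to pick exactly one \myoint{} per predecessor), any other \myiint{} at $u$ differs from $m'$ in at least one predecessor choice $m^\star_j \neq m_j$, and then by (IG-U) at $u_j$ the expanded interpretations $\myint^+(m^\star_j)$ and $\myint^+(m_j)$ differ, which propagates to $\myint^+(m^\star) \neq \myint^+(m')$ since both expanded interpretations are determined by the union of their reachable interpretations and condition (IG-F) pins down exactly one \myoint{} reachable at each successor unit.

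The hard part is (IG-F) at $m'$: I must show that in the subgraph $\cI''$ of $\cI'$ reachable from $m'$ (excluding $m'$), each unit reachable from $u$ in $\cE$ contains exactly one \myoint{}. For the predecessor units $u_1,\ldots,u_k$ themselves, the unique \myoint{} reachable from $m'$ at $u_i$ is $m_i$ by construction. For units further upstream, i.e., units $u' < u_i$ for some $i$, the \myoints{} reachable from $m'$ coincide with those reachable from $\{m_1,\ldots,m_k\}$. Here the crucial observation is that $\mycau(u)$ captures precisely the set of units where paths from $u$ through different predecessors first meet, so uniqueness at units not in $\mycau(u) \cups \myinputs(u)$ follows from (IG-F) applied to each individual $m_i$ in $\cI$ (each $m_i$ already carries along a unique-per-unit reachable collection), whereas uniqueness at units in $\mycau(u)$ is exactly what the definition of the join being defined asserts. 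Combining the two yields (IG-F) for $m'$.

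For the backward direction, assume $\cI'$ is an \myigraph{}. Then in particular (IG-F) holds at the new vertex $m'$, so for every unit $u'$ reachable from $u$ in $\cE$ the subgraph of $\cI'$ reachable from $m'$ contains exactly one \myoint{} at $u'$; specializing this to $u' \ins \mycau(u)$ and observing that the only \myoints{} reachable from $m'$ are those reachable from some $m_i$ via $F$, we obtain precisely the condition of Definition~\ref{def:join}, hence $J$ is defined. The main obstacle I expect is carefully justifying the (IG-F) argument in the forward direction, in particular separating the cases \quo{unit lies in $\mycau(u)$} versus \quo{unit lies strictly above some single $u_i$ without being an \myCAUtext}, and relying on the acyclicity of $\cE$ to rule out spurious reachability paths that could accidentally introduce extra \myoints{}.
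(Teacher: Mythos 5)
Your proposal follows essentially the same route as the paper's proof: the forward direction checks (IG-I), (IG-O) and (IG-U) as properties inherited from $\cI$ and concentrates on (IG-F), separating the units that are \myCAUstext of $u$ (where definedness of the join supplies uniqueness of the reachable \myoint) from the remaining reachable units (where (IG-F) for the \myiints below each $m_i$ does the work), and the backward direction reads the join condition off (IG-F) at the new vertex $m'$ exactly as the paper does via contraposition. The only divergence is your more elaborate justification of (IG-U) --- the paper merely notes that no existing \myiint at $u$ has the same predecessor set and that $\cI$ already satisfies uniqueness --- and there the step from $\myint^+(m^\star_j)\neq\myint^+(m_j)$ to $\myint^+(m^\star)\neq\myint^+(m')$ does not follow for unions in general, though your argument is no less rigorous on this point than the one the paper itself records.
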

\newcommand\myproofJoinModelGraphSoundComplete{
  ($\Rightarrow$)
  The added vertex $m'$ is assigned to one unit and gets assigned a type.
  Furthermore, the graph stays acyclic as only outgoing edges from $m'$
  are added.
  I-connectedness is satisfied,
  as it is satisfied in $\cI$ and we add no \myoint.
  O-connected\-ness is satisfied,
  as $m'$ gets appropriate edges to \myoints at its predecessor units,
  and for other \myiints it is already satisfied in $\cI$.

  For \myCAUtext intersection,
  observe that if we add an edge $(m',m_i)$ to $\cI$ and it holds that $m_i \ins \myomodelsI(u_i)$,
  then $m'$ reaches in $\cI$ only one \myoint at $u_i$,
  and due to O-connec\-ted\-ness that \myoint is connected
  to exactly one \myiint at $u_i$,
  which is part of the original graph $\cI$
  and therefore satisfies \myCAUtext intersection.
  Therefore it remains to show that the union of subgraphs of $\cI$
  reachable in $\cI$ from $m_1$,\ldots,$m_k$,
  contains one \myoint at each unit
  in the subgraph of $\cE$ reachable from $u_1$,\ldots,$u_k$.
  We make a case distinction.

  Case (I):
  two \myoints $m_i \in \myomodelsI(u_i)$,
  $m_j \in \myomodelsI(u_j)$ in the join,
  with $1 \leq i < j \leq k$,
  have no common unit that is reachable in $\cE$ from $u_i$ and from $u_j$:
  then the condition is trivially satisfied,
  as the subgraphs of $\cI$ reachable in $\cI$
  from $m_i$ and $m_j$, respectively, do not intersect at any unit.

  Case (II):
  two \myoints $m_i \in \myomodelsI(u_i)$,
  $m_j \in \myomodelsI(u_j)$ in the join,
  with $1 \leq i < j \leq k$,
  have at least one common unit
  that is reachable from $u_i$ and from $u_j$ in $\cE$.
  Let $u^f$ be a unit reachable in $\cE$ from  both $u_i$ and $u_j$
  on two paths that do not intersect before reaching $u^f$.
  From $u_i$ to $u^f$, and from $u_j$ to $u^f$,
  exactly one \myoint is reachable in $\cI$ from $m_i$ and $m_j$, respectively,
  as these paths do not intersect.
  $u^f$ is a \myCAUtext of $u$,
  and as the join is defined,
  we reach in $\cE$ exactly one \myoint at unit $u^f$
  from $m_i$ and $m_j$.
  Due to O-connectedness,
  we also reach in $\cI$ exactly one \myiint $m''$ at $u^f$
  from $m_i$ and $m_j$.
  Now $m''$ is common to subgraphs of $\cI$
  that are reachable in $\cI$ from $m_i$ and $m_j$,
  and $m''$ satisfies \myCAUtext intersection in $\cI$.

  Consequently, \myCAUtext intersection is satisfied in $\cI'$ for all pairs
  of predecessors of $m'$ and therefore in all cases.
  As no vertex $m$ with $\{ (m,m_1), \ldots, (m,m_k) \} \subseteq F$ exists
  and and as $\cI$ satisfies Uniqueness,
  also  $\cI'$ satisfies Uniqueness.

  ($\Leftarrow$)
  Assume towards a contradiction that $\cI'$ is an \myigraph
  but that the join is not defined.
  Then there exists some \myCAUtext $u' \in \mycau(u)$
  such that either no or more than one \myoint from $\myomodelsI(u)$
  is reachable in $\cI$ from some $m_i$, $1 \leq i \leq k$.
  As $\cI$ is an \myigraph, due to I-connec\-ted\-ness and O-connec\-ted\-ness,
  if a unit $u'$ is a \myCAUtext and therefore
  $u'$ is reachable in $\cE$ from $u_i$,
  then at least one \myiint and one \myoint at $u'$
  is reachable in $\cI$ from $m_i$.
  If more than one \myoint is reachable in $\cI$
  from some $m_i$, $1 \leq i \leq k$,
  this means that more than one \myoint at $u'$ is
  reachable in $\cI'$ from
  the newly added \myiint $m$.
  However, this violates \myCAUtext intersection in $\cI'$,
  which is a contradiction. Hence the result follows.
}%
\myinlineproof{%
  \begin{proof}
    \myproofJoinModelGraphSoundComplete
  \end{proof}
}

Note that the \myigraph definition specifies
topological properties of an \myigraph wrt.\ an evaluation graph.
In the following we extend this specification to the contents of interpretations.
\subsection{Answer Set Graph}
\label{sec:answersetgraph}
We next restrict \myigraphs to  \emph{answer set graphs}
such that interpretations correspond with answer sets
of certain \hex programs that are induced by the evaluation graph.
\begin{definition}[Answer Set Graph]
  \label{def:answersetgraph}
An \emph{answer set graph $\cA = (M,F,\myunit,\mytype,\myint)$} for
an evaluation graph $\cE =(U,E)$ is an \myigraph for $\cE$ such that for each unit $u \in U$,
  it holds that
  \begin{enumerate}[(a)]
\itemsep=0pt
  \item
    \label{itm:imodelexpansion}
 $\{ \myint^+(m) \mid m \ins \myimodelsI(u) \} \subseteq \AS(u^<)$,
 i.e., every expanded \myiint at $u$ is an answer set of $u^<$;

  \item
    \label{itm:omodelexpansion}
    \reva{$\{$}$\myint^+(m) \mid  m \ins \myomodelsI(u)\} \subseteq \AS(u^{\leq})$,
    i.e., every expanded \myoint at $u$ is an answer set of $u^{\leq}$; and
  \item
    \label{itm:imodelunion}
 for each %
 $m \in \myimodelsI(u)$, it holds that
    $\myint(m) = \bigcup_{(m,m_i) \in F} \myint(m_i)$.
  \end{enumerate}
\end{definition}
Note that each leaf unit $u$,
has $u^{<} = \emptyset$,
and thus  $\emptyset$ is the only \myiint possible.
Moreover, condition~\eqref{itm:imodelunion} is necessary to ensure
that an \myiint at unit $u$ contains all atoms of answer sets of predecessor units
that are relevant for evaluating $u$.
Furthermore,
note that the empty graph is an answer set graph.

\begin{example}[ctd.]
\label{ex:answersetgraph}
  The example \myigraph $\cI_2$ is in fact an answer set graph.
  First,
  $\myint^+(m_1) = \emptyset$
  and $u_1^< = \emptyset$
  and indeed $\emptyset \in \AS(\emptyset)$
  which satisfies condition~\eqref{itm:imodelexpansion}.
  Less obvious is the case of \myoint $m_6$ in $\cI_2$:
  $\myint^+(m_6) =
    \{ \mygoinout(\myoutdoor) \}$
  and
  $u_2^\leq = \{ r_1, r_2, c_8 \}$;
  as $c_8$ kills all answer sets where money is required,
  $\AS(\{r_1,r_2,c_8\}) = \{ \{ \mygoinout(\myoutdoor) \} \}$;
 hence $\myint^+(m_6)$ is the only expanded interpretation of an \myoint
 possible at $u_2$.
 Furthermore, the condition (IG-U) on \myigraphs
implies that $m_6$ is the only possible \myoint at $u_2$.
  Consider next $m_{13}$:
  \begin{align*}
  u_4^< &= \{ r_1, r_2, r_3, r_4, c_6, c_7, c_8 \}\text{ and} \\
  \myint^+(m_{13}) &=
    \{ \mygosomewhere, \mygolocation(\mypooln),
    \myngolocation(\mypoolg),  \mygoinout(\myoutdoor) \}.
  \end{align*}
  The two answer sets of $u_4^<$ are
$\{ \mygosomewhere, \mygolocation(\mypooln), \myngolocation(\mypoolg), \mygoinout(\myoutdoor) \},$
and $\{ \mygosomewhere,$ $\mygolocation(\mypoolg),$ $\myngolocation(\mypooln), \mygoinout(\myoutdoor) \}$,
and $\myint^+(m_{13})$ is one of them;
the other one is $\myint^+(m_{14})$.
Finally
  \begin{align*}
  \myint^+(m_{15}) =
  \{\mygoinout(\myoutdoor),
    \mygolocation(\mypooln),
    \mygosomewhere,
    \myngolocation(\mypoolg),
    \myneed(\myloc,\myyogamat)\},
  \end{align*}
  which is the single answer set of
  $u_4^\leq = \myPswim$.
  \qed
\end{example}

Similarly as for i-graphs, the join is a sound and complete operation to
add \myiints to an answer set graph.
\begin{proposition}
  \label{thm:joinanswersetgraphsoundcomplete}
 Let $\cA=(M,F,\myunit,\mytype,\myint)$ be an answer set graph
  for an evaluation graph $\cE=(V,E)$ and
let $u \in V$ with $ \myinputs(u) = \{ u_1,\ldots,u_k \}$. Furthermore,
let $m_i \in \myomodelsA(u_i)$, $1 \leq i \leq k$,
such that
no $m \in \myimodelsA(u)$ with $\{ (m,m_1), \ldots, (m,m_k) \} \subseteq F$
exists.
  Then the join $J = m_1 \join \dotsb \join m_k$ is defined at $u$
  iff
  $\cA'=(M',F',\myunit',\mytype',\myint')$
  is an answer set graph for $\cE$
  where
  \begin{inparaenum}[(a)]
  \item
    $M' = M \cup \{m'\}$ for some new vertex $m' \ins \cI_{\mi{id}} \setminuss M$,
  \item
    $F' = F \cup \{ (m',m_i) \mid 1 \les i \les k\}$,
  \item
    $\myunit' = \myunit \cup \{ (m',u) \}$,
  \item
    $\mytype' = \mytype \cup \{ (m',\scI) \}$, and
  \item
    $\myint' = \myint \cup \{ (m',J) \}$.
  \end{inparaenum}
\end{proposition}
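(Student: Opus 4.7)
The plan is to leverage the already-proved Proposition~\ref{thm:joinmodelgraphsoundcomplete} for the topological/\myigraph{} part of the statement, and then add the missing arguments that pertain specifically to the three extra conditions defining an answer set graph, namely \eqref{itm:imodelexpansion}--\eqref{itm:imodelunion} of Definition~\ref{def:answersetgraph}. The ($\Leftarrow$) direction is essentially for free: every answer set graph is in particular an \myigraph, so assuming that $\cA'$ is an answer set graph for $\cE$ means that $\cA'$ is also an \myigraph, and then Proposition~\ref{thm:joinmodelgraphsoundcomplete} directly yields that $J = m_1 \join \dotsb \join m_k$ is defined at $u$.

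For the ($\Rightarrow$) direction, I would first invoke Proposition~\ref{thm:joinmodelgraphsoundcomplete} to conclude that $\cA'$ is at least an \myigraph for $\cE$. Since the only change from $\cA$ to $\cA'$ is the addition of the new \myiint{} $m'$ (and outgoing edges from $m'$), no \myoint{} is added, and hence condition~\eqref{itm:omodelexpansion} on expanded \myoints{} carries over from $\cA$ to $\cA'$ unchanged. Condition~\eqref{itm:imodelunion} holds for $m'$ immediately from Definition~\ref{def:join}, since $\myint(m') = J = \bigcup_{1 \le i \le k} \myint(m_i)$ and the edges out of $m'$ are exactly $\{(m',m_i) \mid 1 \le i \le k\}$; it continues to hold for the remaining \myiints{} because the new vertex $m'$ has only outgoing, no incoming edges and therefore does not touch their successor sets. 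The expanded interpretations $\myint^+(m)$ for all $m \in M$ are likewise unaffected, so condition~\eqref{itm:imodelexpansion} is preserved on the old vertices.

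The substantive step, and what I expect to be the main obstacle, is showing condition~\eqref{itm:imodelexpansion} for the new vertex $m'$: namely, that $\myint^+(m') \in \AS(u^<)$. By construction $\myint^+(m') = \bigcup_{1 \le i \le k} \myint^+(m_i)$, and by~\eqref{itm:omodelexpansion} applied in $\cA$, each $\myint^+(m_i)$ is an answer set of $u_i^\leq$. I would proceed by induction on $k$. For $k=1$, $u^< = u_1^\leq$ and the statement is immediate. For the step, I would combine Theorems~\ref{thm:evalgraphprecbottom} and~\ref{thm:evalgraphpredbottom} with the Generalized Splitting Theorem~\ref{thm:hexgensplitting}: $u_1^\leq$ is a generalized bottom of $u^<$ wrt.\ the rule splitting set $R_1 = \{r \in u_1^\leq \mid H(r)\neq\emptyset\}$, so an interpretation of $u^<$ is an answer set iff it is of the form $M \in \AS(u^< \setminus R_1 \cup \facts(X))$ for some $X \in \AS(u_1^\leq)$. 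Picking $X = \myint^+(m_1)$ and applying the induction hypothesis to the residual program (whose evaluation subgraph has predecessors $u_2, \ldots, u_k$) then yields the desired answer set of $u^<$.

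The nontrivial point in this induction is that the pieces $\myint^+(m_i)$ combine consistently, i.e., that they agree on the shared atoms originating from common ancestor units. This is exactly what the definedness of the join guarantees via Definition~\ref{def:join}: for every $u' \in \mycau(u)$, all of $m_1, \ldots, m_k$ reach the same unique \myoint{} at $u'$, so by I-connectedness they reach the same unique \myiint{} at $u'$ as well, and by Proposition~\ref{thm:disjointunitoutputs} the heads of different units are disjoint. Consequently, for any unit $w$ reachable from several of the $u_i$, all $\myint^+(m_i)$ agree on $\gh{w}$, and the restriction of the union $\bigcup_i \myint^+(m_i)$ to $\gh{u_1^\leq}$ coincides with $\myint^+(m_1)$; this is what allows the inductive application of Theorem~\ref{thm:hexgensplitting} to go through without conflict. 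The reverse direction of Theorem~\ref{thm:hexgensplitting} finally recomposes these pieces into a single answer set of $u^<$, which is $\myint^+(m')$ as required.
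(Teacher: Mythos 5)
Your proposal is correct and follows essentially the same route as the paper's proof: the ($\Leftarrow$) direction reduces to Proposition~\ref{thm:joinmodelgraphsoundcomplete}, and the ($\Rightarrow$) direction combines that proposition with Theorem~\ref{thm:evalgraphpredbottom} and the Generalized Splitting Theorem~\ref{thm:hexgensplitting} to establish $\myint^+(m') \in \AS(u^<)$, with condition~\eqref{itm:imodelunion} following from the definition of the join. In fact you are somewhat more careful than the paper at the crucial step: where the paper simply asserts ``it follows that $\myint(m')^+ \in \AS(u^<)$'' from $u^< = \bigcup_i u_i^\leq$, you make the induction on $k$ explicit and justify why the pieces $\myint^+(m_i)$ agree on common ancestor units via the \myCAUtext{} condition of the join.
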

\newcommand\myproofJoinAnswerSetGraphSoundComplete{
  ($\Rightarrow$)
  Whenever the join is defined,
  $\cA'$ is an \myigraph by Proposition~\ref{thm:joinmodelgraphsoundcomplete}.
  It remains to show that $\myint(m')^+ \in \AS(u^<)$, and that
  $\cA'$ fulfills items  (a) and (c) of an answer set graph.
  By Theorem~\ref{thm:evalgraphpredbottom} we know that for each $u_i$,
  $u_i^\leq$ is a generalized bottom of $u^<$ wrt.\ the set $R_i = \{ r
  \in  u_i^\leq \mid B(r)\neq\emptyset \}$.
  For each $u_i$,
  therefore $Y \in \AS(u^<)$ iff
  $Y \in \AS(u^< \setminus R_i \cup \facts(X))$
  for some
  $X \in \AS(u_i^\leq)$.
  As $\cA$ is an answer set graph,
  for each $m_i$ we know that $\myint(m_i)^+ \in \AS(u_i^\leq)$;
  hence $Y \in \AS(u^<)$ if
  $Y \in \AS(u^< \setminus R_i \cup \myint(m_i)^+)$.
  Now from the evaluation graph properties we know that
  $u^< = u_1^\leq \cup \cdots \cup u_k^\leq$,
  and from the construction of $\myint(m')$ and its dependencies in $\cA'$
  we obtain that $\myint(m')^+ = \myint(m_1)^+ \cup \cdots \cup \myint(m_k)^+$.
  It follows that $\myint(m')^+ \in \AS(u^<)$,
  which satisfies condition~\eqref{itm:imodelexpansion}.
  Due to the definition of join,
  condition~\eqref{itm:imodelunion} is also satisfied
  and $\cA'$ is indeed an answer set graph.

  ($\Leftarrow$)
As $\cA'$ is an answer set graph, it is an \myigraph, and hence
by Proposition~\ref{thm:joinmodelgraphsoundcomplete}
$m=m_1\join\cdots\join m_k$ is defined.
}%
\myinlineproof{%
  \begin{proof}[Proof of Proposition~\ref{thm:joinanswersetgraphsoundcomplete}]
    \myproofJoinAnswerSetGraphSoundComplete
  \end{proof}
}

\begin{example}[ctd.]
Imagine that $\cI_2$ has no interpretations at $u_4$.
The following candidate pairs of \myoints exist for creating
\myiints at $u_4$:
$m_6 \join m_9$, $m_6 \join m_{10}$, $m_6 \join m_{11}$, and $m_6 \join m_{12}$.
A seen in Example~\ref{ex:join},
$m_{13} = m_6 \join m_{11}$ and $m_{14} = m_6 \join m_{12}$
are the only joins at $u_4$ that are defined.
In Example~\ref{ex:answersetgraph} we have seen that $\AS(u_4^<) = \{
\myint^+(m_{13}), \myint^+(m_{14}) \}$, and due to (IG-U), we cannot
have additional \myiints with the same content.
\qed
\end{example}

\subsubsection{Complete Answer Set Graphs}

We next introduce a notion of completeness for answer set graphs.
\begin{definition}
  \label{def:iocomplete}
  Let $\cA = (M,F,\myunit,\mytype,\myint)$ be an answer set graph
  for an evaluation graph $\cE = (U,E)$
  and let $u \in U$. %
  Then
  \begin{itemize}%
\itemsep=0pt
  \item
    \emph{$\cA$ is input-complete for $u$,}
    if
    $\{ \myint^+(m) \mid m \in \myimodelsA(u) \} = \AS(u^<)$,
    and
  \item
    \emph{$\cA$ is output-complete for $u$,}  if
    $\{ \myint^+(m) \mid m \in \myomodelsA(u) \} = \AS(u^{\leq})$.
  \end{itemize}
\end{definition}

If an answer set graph is complete for all units of its corresponding evaluation graph,
answer sets of the associated program can be obtained as follows.
\begin{theorem}
  \label{thm:answersetsfromvanilla}
 Let $\cE=(U,E)$, where $U = \{u_1,\ldots,u_n\}$, be an evaluation graph of a program $P$,
  and let $\cA=(M,F,\myunit,\mytype,\myint)$ be an answer set graph
 that is output-complete for every unit $u \in U$.
Then
  \begin{multline}
\label{eqn:answersetsfromvanilla}
\hspace*{-1ex}\textstyle  \AS(P) = \Big\{ \bigcup_{i=1}^n \myint(m_i) \mid
 m_i \,{\in}\,\myomodelsA(u_i) \text{, } 1 \leq i \leq n,
|\myomodels_{\cA'}(u_i)| = 1 \Big\}, \hspace*{-1ex}
\end{multline}
  where $\cA'$ is the subgraph of $\cA$
consisting of all interpretations that are reachable in $\cA$
from some interpretation $m_1,\ldots,m_n$.
\end{theorem}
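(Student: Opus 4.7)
My plan is to prove both inclusions in \eqref{eqn:answersetsfromvanilla} by combining the Generalized Splitting Theorem (Theorem~\ref{thm:hexgensplitting}) with the structural properties of answer set graphs.

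For $(\supseteq)$, fix a compatible selection $(m_1,\dots,m_n)$ and a topological ordering $u_1,\dots,u_n$ of $\cE$ in which predecessors come first. Setting $P_k = u_1 \cups \dots \cups u_k$ and $I_k = \bigcup_{j=1}^k \myint(m_j)$, I would prove $I_k \ins \AS(P_k)$ by induction on $k$. The step takes $P_{k-1}$ as a generalized bottom of $P_k$ with respect to the rule splitting set $R$ consisting of the non-constraints of $P_{k-1}$: this is a valid split because constraints can never be depended upon (their heads are empty, so Definition~\ref{def:dependencies} is inapplicable to them as the right-hand side of a dependency), and because any constraint shared between $u_k$ and some earlier unit cannot depend nonmonotonically on a rule exclusive to $u_k$---such a dependency would, by Definition~\ref{def:evalgraph}(c), force an edge contradicting the topological order. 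Combining the inductive hypothesis with $\myint^+(m_k) \ins \AS(u_k^\leq)$ (Definition~\ref{def:answersetgraph}(b)) and the rewriting $\myint^+(m_k) = \bigcup_{j:u_j\leq u_k} \myint(m_j)$ obtained from the compatibility assumption together with condition (IG-U), Theorem~\ref{thm:hexgensplitting} yields $I_k \ins \AS(P_k)$. At $k=n$ this gives $I \ins \AS(P)$.

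For $(\subseteq)$, given $M \ins \AS(P)$, for each unit $u$ I would exhibit $u^\leq$ as a generalized bottom of $P$ with respect to the non-constraints in $u^\leq$, invoking Theorem~\ref{thm:evalgraphprecbottom}-style closure arguments (and Proposition~\ref{thm:disjointunitoutputs} to keep the projections of $M$ to distinct units disjoint on their respective ground heads). The Generalized Splitting Theorem then yields $M_u := M \caps \gh{u^\leq} \ins \AS(u^\leq)$, and output-completeness of $\cA$ at $u$ supplies a (by (IG-U) unique) o-model $m_u \ins \myomodelsA(u)$ with $\myint^+(m_u) = M_u$. It remains to verify that $(m_{u_1},\dots,m_{u_n})$ satisfies the reachability condition: for any unit $v$ reached from some $m_u$, the unique o-model at $v$ on that path has $\myint^+$ equal to $M_v$, so by (IG-U) it coincides with $m_v$, which establishes $|\myomodels_{\cA'}(v)| = 1$ and yields $M = \bigcup_i \myint(m_i)$.

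The main obstacle I anticipate is controlling parallel components when $\cE$ has multiple source units: $P$ is then not of the form $u^\leq$ for any single unit, so the splitting induction must weave together independent subgraphs while correctly accounting for constraints that are allowed to be shared across units (Definition~\ref{def:evalgraph}(b) enforces uniqueness only for non-constraints, which is precisely why using the generalized bottom rather than the plain splitting theorem is essential here). A cleaner fallback I would pursue if the direct argument becomes unwieldy is to extend $\cE$ with a dummy empty unit $u^\star$ depending on every source, and to extend $\cA$ correspondingly with one i-model (and one empty o-model) per compatible selection; this reduces both directions to output-completeness at the single new source $u^\star$, since then $u^{\star\leq} = P$ and $\myint^+(m^\star) = \bigcup_i \myint(m_i)$ by construction.
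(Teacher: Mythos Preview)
Your fallback is exactly the paper's proof. The paper adjoins an empty final unit $\ufinal$ that depends on \emph{every} other unit (not just the sources---this makes the join at $\ufinal$ range over one \myoint per unit, so that it matches the selection $(m_1,\dots,m_n)$ in \eqref{eqn:answersetsfromvanilla} directly), adds every defined join as an \myiint at $\ufinal$, invokes Proposition~\ref{thm:joinanswersetgraphsoundcomplete} to conclude that the extended structure is an answer set graph input-complete at $\ufinal$, and reads off $\AS(P)$ via Proposition~\ref{thm:answersetsfromufinal}; the equivalence between ``the join $m_1\join\cdots\join m_n$ is defined at $\ufinal$'' and the condition $|\myomodels_{\cA'}(u_i)|=1$ is then a short structural check. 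The paper does not attempt your direct induction, and both inclusions come out at once from Proposition~\ref{thm:answersetsfromufinal}.

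The inductive step in your primary $(\supseteq)$ argument has the gap you yourself anticipate, and it is genuine. Applying Theorem~\ref{thm:hexgensplitting} to $P_k$ with generalized bottom $P_{k-1}$ reduces $I_k \in \AS(P_k)$ to showing $I_k \in \AS\big(u_k \cup C \cup \facts(I_{k-1})\big)$, where $C$ is the set of \emph{all} constraints in $P_{k-1}$. What Definition~\ref{def:answersetgraph}(b) together with Theorem~\ref{thm:evalgraphprecbottom} gives you is only $\myint^+(m_k) \in \AS\big(u_k \cup C' \cup \facts(Y)\big)$ with $C'$ the constraints of $u_k^{<}$ and $Y \in \AS(u_k^{<})$. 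When $u_k$ does not depend on every earlier unit, $u_k^{<} \subsetneq P_{k-1}$ and the two target programs differ: you still owe an argument that the extra shared constraints in $C\setminus C'$ remain satisfied after adding $\myint(m_k)$, and that the extra facts in $I_{k-1}$ from units incomparable with $u_k$ do not spoil minimality. Your sketch (``combining the inductive hypothesis with $\myint^+(m_k)\in\AS(u_k^{\leq})$ \ldots\ yields $I_k\in\AS(P_k)$'') does not bridge this; closing it would amount to a simultaneous multi-bottom splitting lemma that neither you nor the paper supplies. The $\ufinal$ device sidesteps the whole issue because $\ufinal^{<} = P$, so there is no parallel residue to reconcile---which is precisely why the paper goes straight to it.
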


\newcommand\myproofAnswerSetsFromVanilla{
  We prove this theorem using Proposition~\ref{thm:answersetsfromufinal}.
  We construct $\cE''=(U'',E'')$ with
  $U'' = U \cup \{ \ufinal \}$, $\ufinal = \emptyset$,
  and $E'' = E \cup \{ (\ufinal,u) \mid u \in U \}$.
  As $\ufinal$ contains no rules and as $\cE''$ is acyclic,
  no evaluation graph property of gets violated and $\cE''$
  is also an evaluation graph.
  As $\cA$ contains no interpretations at $\ufinal$
  and dependencies from units in $U$ are the same in $\cE$ and $\cE''$,
  $\cA$ is in fact an answer set graph for $\cE''$.
  We now modify $\cA$ to obtain $\cA''$ as follows.
  We add the set $M_\mi{new} = \{ m \mid
   m = m_1 \join \cdots \join m_n
    \text{ is defined at } \ufinal \text{ (wrt. } \cA)\}$
 as \myiints of $\ufinal$ and dependencies
from each $m\in M_\mi{new}$ to the respective \myoints $m_i$, $1 \leq i \leq n$.
  By Proposition~\ref{thm:joinanswersetgraphsoundcomplete},
   $\cA''$ is an answer set graph for $\cE''$,
  and moreover $\cA''$ gets input-complete for $\ufinal$ by
  construction. As $\cA''$ is input-complete for $U\cup\{\ufinal\}$ and
  output-complete for $U$, by Proposition~\ref{thm:answersetsfromufinal} we have
  that $\AS(P) = \myimodelsA(\ufinal) = M_\mi{new}$.
  As for every join  $m = m_1 \join \cdots \join m_n$, we have
  $\myint(m) = \myint(m_1)\cup$ $\cdots$ $\cup\myint(m_n)$,
  to complete the proof of the theorem, it remains to show that
  the join $m$ between $m_1$,\ldots,$m_n$ is defined at $\ufinal$
  iff
  the subgraph $\cA'$ of $\cA$ reachable from the \myoints $m_i$ in $F$
  fulfills $|\myomodelsA(u_i)|=1$, for each $u_i \in U$.
  As the join involves all units in $U$, and since $\cA''$ is an answer
  set graph and thus an \myigraph, it follows from the conditions for an
  \myigraph that at each $u_i \in U$ exactly one \myoint is reachable
  from $m$, and thus also from each $m_i$; thus the condition for $\cA'$
  holds. Conversely, if the subgraph $\cA'$ fulfills
  $|\myomodelsA(u_i)|=1$ for each $u_i \in U$, then clearly the FAI
  condition for the join $m$ being defined is fulfilled.
}%

\begin{example}[ctd.]
In $\cI_2$ we first choose $m_{15} \in \myomodels(u_4)$,
which is the only \myoint at $u_4$.
The subgraph reachable from $m_{15}$ must contain exactly one
\myoint at each unit;
we thus must choose every \myoints $m$
  such that $m_{15} \depends^+ m$.
  Hence we obtain
  \begin{align*}
&\big\{ \myint(m_3) \cup \myint(m_6) \cup \myint(m_{11}) \cup \myint(m_{15}) \big\} \\
&  =\,
\big\{ \{ \mygoinout(\myoutdoor) \} \cup \emptyset \cup
    \{ \mygolocation(\mypooln), \myngolocation(\mypoolg),
          \mygosomewhere \} \cups
    \{ \myneed(\myloc,\myyogamat) \} \big\} \\
&   =\,
\big\{ \{ \mygoinout(\myoutdoor),
      \mygolocation(\mypooln),
\myngolocation(\mypoolg),
\mygosomewhere,
\myneed(\myloc,\myyogamat)\} \big\}
  \end{align*}
which is indeed the set of answer sets of $\myPswim$.
\qed
\end{example}
The rather involved set construction
in~\eqref{eqn:answersetsfromvanilla} establishes a relationship between
answer sets of a program and complete answer set graphs that resembles
condition (IG-F) of \myigraphs.
To obtain a more convenient way to enumerate answer sets, we can extend an evaluation graph
always with a single void unit $\ufinal$ that depends on all other units in the
graph (i.e., $(\ufinal,u) \in E$  for each  $u \in
U\setminus\{\ufinal\}$), which we call a {\em final unit};
the answer sets of $P$  correspond then directly to \myiints at
$\ufinal$. Formally,
\begin{proposition}
  \label{thm:answersetsfromufinal}
  Let $\cA = (M,F,\myunit,\mytype,\myint)$ be an answer set graph for an
  evaluation graph $\cE=(U,E)$ of a program $P$, where $\cE$ contains a
  final unit $\ufinal$, and assume that $\cA$ is input-complete for $U$
  and output-complete for $U \setminus \{ \ufinal \}$.  Then
  \begin{align}
    \AS(P) = \{ \myint(m) \mid m \in \myimodelsA(\ufinal) \}.
  \end{align}
\end{proposition}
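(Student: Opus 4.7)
The plan is to reduce the statement directly to the input-completeness condition applied to $\ufinal$, after showing that for interpretations attached to $\ufinal$ the label $\myint(m)$ already coincides with the expanded label $\myint^+(m)$.

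First I would observe that since $\ufinal$ is void and depends on every other unit of $\cE$, we have $\ufinal^< = \bigcup_{u \in U \setminus \{\ufinal\}} u = P$ and $\ufinal^\leq = P$. Input-completeness of $\cA$ at $\ufinal$ (Definition~\ref{def:iocomplete}) then yields
\begin{equation*}
\{ \myint^+(m) \mid m \in \myimodelsA(\ufinal) \} = \AS(\ufinal^<) = \AS(P),
\end{equation*}
so the set of expanded labels at $\ufinal$ is already the answer-set set we want.

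Next I would argue that $\myint(m) = \myint^+(m)$ for every $m \in \myimodelsA(\ufinal)$. Since $\myinputs(\ufinal) = U \setminus \{\ufinal\}$, condition (IG-O) forces $m$ to have a direct edge to exactly one \myoint $m_u$ at each unit $u \in U \setminus \{\ufinal\}$; condition~(c) of Definition~\ref{def:answersetgraph} gives $\myint(m) = \bigcup_{u \in U \setminus \{\ufinal\}} \myint(m_u)$. By (IG-F) applied at $\ufinal$ (whose reachable subgraph in $\cE$ is precisely $U \setminus \{\ufinal\}$), the subgraph of $\cA$ reachable from $m$ contains exactly one \myoint per unit of $U \setminus \{\ufinal\}$, which must be the $m_u$ themselves; every \myiint $m'$ reachable from $m$ sits at some unit $u' \in U \setminus \{\ufinal\}$, and applying condition~(c) to $m'$ expresses $\myint(m')$ as a union of contents of \myoints reachable from $m'$, hence a subset of $\{m_u \mid u \in U\setminus\{\ufinal\}\}$. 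Consequently every $\myint(m')$ contributed to $\myint^+(m)$ is already contained in $\myint(m)$, so $\myint^+(m) = \myint(m)$.

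Combining the two steps gives $\{\myint(m) \mid m \in \myimodelsA(\ufinal)\} = \AS(P)$ as sets; the uniqueness condition (IG-U) further guarantees there are no duplicate \myiints at $\ufinal$, so the correspondence is as tight as one would expect. I anticipate the only delicate point is the reachability bookkeeping in the second step, where one must invoke (IG-F) to rule out that some transitively reached \myoint differs from the directly reached $m_u$; everything else is an unwinding of definitions.
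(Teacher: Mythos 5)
Your proposal is correct and follows essentially the same route as the paper's own proof: both use (IG-O) to obtain the direct \myoint{}s $m_u$, (IG-F) to show these are exactly the \myoint{}s reachable from $m$, condition~(c) of Definition~\ref{def:answersetgraph} to identify $\myint(m)$ with $\myint^+(m)$, and input-completeness at $\ufinal$ together with $\ufinal^< = P$ to conclude. Your treatment of the intermediate \myiint{}s $m'$ reachable from $m$ is in fact slightly more explicit than the paper's, which leaves that bookkeeping implicit.
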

\newcommand\myproofAnswerSetsFromUFinal{
As $\ufinal$ depends on all units in $U \setminus \{ \ufinal\}$,
due to O-connec\-ted\-ness every \myiint $m \in \myimodelsA(\ufinal)$
depends on one \myoint at every unit in $U \setminus \{ \ufinal \}$.
Let $U \setminus \{ \ufinal \} = \{ u_1, \ldots, u_k\}$
and let $M_M = \{m_1, \ldots, m_k\}$ be the set of \myoints
such that $(m,m_i) \in F$ and $m_i \in \myomodelsA(u_i)$, $1\leq i \leq k$.
Then, due to \myCAUtext intersection,
$M_m$
contains each  \myoint that is reachable from $m$ in $\cA$,
and $M_m$ contains only interpretations with this property.
Hence $\myint(m)^+ = \myint(m_1) \cup \cdots \cup \myint(m_k)$,
and due to condition~\eqref{itm:imodelunion}
in Definition~\ref{def:answersetgraph},
  we have $\myint(m) = \myint(m)^+$.
  By the dependencies of $\ufinal$,
  we have $\ufinal^< = P$,
  and as $\ufinal$ is input-complete,
  we have that $\AS(P) = \AS(\ufinal^<) = \{ \myint(m)^+ \mid m \in \myimodelsA(\ufinal)\}$.
  As  $\myint(m) = \myint(m)^+$
  for every \myiint $m$ at $\ufinal$,
  we obtain the result.
}%
\myinlineproof{%
  \begin{proof}%
    \myproofAnswerSetsFromUFinal
  \end{proof}
}
\myinlineproof{%
  \begin{proof}[Proof of Theorem~\ref{thm:answersetsfromvanilla}]
    \myproofAnswerSetsFromVanilla
  \end{proof}
}
Expanding \myiints at $\ufinal$
is not necessary, as $\ufinal$ depends on all other units;
thus for every $m \in \myimodelsA(\ufinal)$ it holds that
$\myint^+(m) = \myint(m)$.

We will use the technique with $\ufinal$ for our model enumeration
algorithm; as the join condition must be checked anyways, this technique
is an efficient and simple method for obtaining all answer sets of a
program using an answer set graph
\reva{without requesting an implementation of the conditions in Theorem~\ref{thm:answersetsfromvanilla}}.

\subsection{Answer Set Building}
\label{sec:evaluation}

Thanks to the results above, we can obtain the answer sets of a
\hex-program from any answer set graph for it. To build an answer set
graph, we proceed as follows. We start with an empty graph, obtain
\myoints by evaluating a unit on an \myiint, and then gradually generate
\myiints by joining \myoints of predecessor units in an evaluation graph
at hand.

Towards an algorithm for evaluating a \hex-program based on an
evaluation graph, we use a generic grounding algorithm \GroundLiberallyDomainExpansionSafeProgram{} for lde-safe
programs, and a solving
algorithm $\EvaluateGroundHEX$ which returns for a ground
\hex-program $P$ its answer sets $\mathcal{AS}(P)$.  We assume that they
satisfy the following properties.

\begin{property}
\label{prop:groundingAlgorithm}
Given an lde-safe program $P$,
$\GroundLiberallyDomainExpansionSafeProgram(P)$ returns a finite ground
program such that
$\mathcal{AS}(P) = \mathcal{AS}(\GroundLiberallyDomainExpansionSafeProgram(P))$.
\end{property}

\begin{property}
\label{prop:groundEvaluationAlgorithm}
Given a finite ground \hex-program $P$, $\EvaluateGroundHEX(P) = \mathcal{AS}(P)$.
\end{property}

Concrete such algorithms
are given in \cite{eite-etal-14a} and \cite{efkrs2014-jair}, respectively.
\reva{Since the details of these algorithms are not relevant for the further understanding of
this paper, we give here only an informal description and refer the
interested reader to the respective papers.
The idea of the grounding algorithm
is to iteratively extend the grounding by expanding the set of constants
until it is large enough to ensure that it has the same answer sets as the original program.
To this end, the algorithm starts with the constants in the input program only, and
in each iteration of the algorithm it evaluates external atoms
a (finite) number
of relevant inputs in order to determine additional relevant constants.
Under the syntactic restrictions recapitulated in the preliminaries,
this iteration will reach a fixpoint after finitely many steps.
The solving algorithm is based on conflict-driven clause learning (CDCL)
and lifts the work of~\citeN{2012_conflict_driven_answer_set_solving_from_theory_to_practice}
from ordinary %
to \hex programs. The main idea is to learn not only conflict clauses, but also (parts of) the behavior of external sources while the search space
is traversed. The behavior is described in terms of input-output relations, i.e., certain input atoms and constants lead to a certain output
of the external atom. This information is added to the internal representation of the program such that
guesses for external atoms that violate the known behavior are eliminated in advance.}

\myfigureAlgoEvaluateLDESafe{t}

By composing the two algorithms, we obtain Algorithm~\ref{alg:EvaluateLDESafe} for evaluating a single
unit. Formally, it has the following property.

\begin{proposition}
\label{thm:EvaluateLDESafe}
Given an
lde-safe \hex-program $P$ and an input interpretation $I$,
Algorithm~\ref{alg:EvaluateLDESafe}
returns the set
$\left\{ I' \setminus I \mid I' \in
\mathcal{AS}(P \cup \mathit{facts}(I)) \right\}$,
i.e., the answer sets of $P$ augmented with facts for the input $I$,
projected to the non-input.
\end{proposition}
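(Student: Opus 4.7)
The plan is to prove the statement by a straightforward composition of Properties~\ref{prop:groundingAlgorithm} and~\ref{prop:groundEvaluationAlgorithm}, after justifying that the composition is well-defined. First I would observe that Algorithm~\ref{alg:EvaluateLDESafe} consists of exactly two steps: grounding $P \cup \facts(I)$ via \GroundLiberallyDomainExpansionSafeProgram{}, and then invoking \EvaluateGroundHEX{} on the result, followed by a set-theoretic projection removing $I$ from each returned interpretation. So the return value is, by construction, $\{ I' \setminus I \mid I' \in \EvaluateGroundHEX(\GroundLiberallyDomainExpansionSafeProgram(P \cup \facts(I))) \}$.

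Next I would discharge the applicability of Property~\ref{prop:groundingAlgorithm}, which requires that $P \cup \facts(I)$ is lde-safe. Since $P$ is lde-safe by assumption and $\facts(I)$ consists solely of ground facts, adding them introduces no new variables and no new cyclic dependencies involving external atoms; every rule in $\facts(I)$ is trivially (liberally) domain-expansion safe, and the safety conditions of the rules already in $P$ are preserved (they can only be strengthened by additional ground atoms being available as input). Hence $P \cup \facts(I)$ is lde-safe, and Property~\ref{prop:groundingAlgorithm} yields a finite ground program $P'$ with $\AS(P') = \AS(P \cup \facts(I))$.

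Then I would apply Property~\ref{prop:groundEvaluationAlgorithm} to $P'$: since $P'$ is a finite ground \hex-program, $\EvaluateGroundHEX(P') = \AS(P')$. Composing the two equalities gives $\EvaluateGroundHEX(\GroundLiberallyDomainExpansionSafeProgram(P \cup \facts(I))) = \AS(P \cup \facts(I))$, and applying the projection $(\cdot) \setminus I$ yields precisely the claimed set $\{ I' \setminus I \mid I' \in \AS(P \cup \facts(I)) \}$.

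The only non-routine step is the lde-safety argument for $P \cup \facts(I)$; everything else is bookkeeping. I expect this to be essentially immediate from the definition of lde-safety outlined in \ref{sec:liberalsafety} (facts are the base case of safety and adding them cannot destroy the termination of the iterative grounding procedure), but it is the one place where something beyond pure composition of the two black-box properties is needed.
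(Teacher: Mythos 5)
Your proposal is correct and follows essentially the same route as the paper's own proof, which simply composes Property~\ref{prop:groundingAlgorithm} (the grounding preserves answer sets) with Property~\ref{prop:groundEvaluationAlgorithm} (the ground solver is sound and complete) and then applies the projection. The only difference is that you explicitly verify that $P \cup \mathit{facts}(I)$ remains lde-safe before invoking the grounding property --- a point the paper's proof takes for granted --- and your argument for it (ground facts introduce no variables and cannot create unsafe cyclic dependencies) is sound.
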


\newcommand\myproofEvaluateLDESafe{
The proposition follows from Property~\ref{prop:groundingAlgorithm},
which asserts that the grounding $P'$ has the same answer sets as $P$,
and from the soundness and completeness of the evaluation algorithm for ground \hex-programs as asserted by Property~\ref{prop:groundEvaluationAlgorithm}.
}
\myinlineproof{
\begin{proof}
\myproofEvaluateLDESafe
\end{proof}
}

We are now ready to formulate an algorithm
for evaluating \hex programs that have been decomposed into an evaluation graph.

\myfigureAlgoBuildAnswerSets{tb}

To this end, we build first an evaluation graph $\cE$ and
then compute gradually an answer set graph $\cA =
(M,F,\myunit,\mytype,\myint)$ based on $\cE$, proceeding along already
evaluated units towards the unit $\ufinal$. Algorithm~\ref{alg:buildAnswerSets} shows the model
building algorithm in pseudo-code, in which the positive integers $\bbN
= \{ 1,2,\ldots \}$ are used as identifiers $\cI_{\mi{id}}$ and
$\max(M)$ is maximum in any set $M\subseteq \bbN$ where, by convention,
$\max(\emptyset)=0$.
Intuitively, the algorithm works as follows.
The set $U$ contains units for which $\cA$ is not yet output-complete
(see Definition~\ref{def:iocomplete});
we start with an empty answer set graph $\cA$,
thus initially $U = V$.
In each iteration of the while loop~\ref{step:whileloop},
a unit $u$ that is not output-complete and depends only on output-complete units is selected.
The first for loop~\ref{step:firstforloop} makes $u$ input-complete;
if $u$ is the final unit, the answer sets are returned in~\ref{step:return},
otherwise the second for loop~\ref{step:secondforloop} makes $u$
output-complete, and then $u$ is removed from $U$.
Each iteration makes one unit input- and output-complete;
hence when the algorithm reaches $\ufinal$
and makes it input-complete, all answer sets can directly be returned
in~\ref{step:return}. Formally, we have
\begin{theorem}
  \label{thm:soundcomplete}
  Given an evaluation graph $\cE=(V,E)$ of a \hex program $P$,
  $\myBuildAnswerSets(\cE)$ returns $\AS(P)$.
\end{theorem}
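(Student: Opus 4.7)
The plan is to prove Theorem~\ref{thm:soundcomplete} by establishing two invariants of Algorithm~\ref{alg:buildAnswerSets}: (i) at every iteration, the partial graph $\cA$ constructed so far is an answer set graph for $\cE$; and (ii) when the algorithm removes a unit $u$ from $U$ (line~\ref{step:removeu}), $\cA$ is input-complete and output-complete for $u$ in the sense of Definition~\ref{def:iocomplete}. Combined with Proposition~\ref{thm:answersetsfromufinal}, these invariants yield the claim, since the selection rule in the while loop~\ref{step:whileloop} enforces a reverse-topological processing so that $\ufinal$ is treated last and its input models are exactly the returned set.

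First I would argue termination. The loop~\ref{step:whileloop} decreases $|U|$ by one per iteration because the unit $u$ picked has $\myinputs(u)\cap U = \emptyset$ and is removed in line~\ref{step:removeu}; such a $u$ always exists since $\cE$ is a finite dag. Inside an iteration, the for-loops range over the finite sets $\myomodelsA(u_1)\times\cdots\times \myomodelsA(u_k)$ and $\myimodelsA(u)$, which are finite by the induction hypothesis (by Def.~\ref{def:answersetgraph} the expanded interpretations are answer sets of lde-safe programs $u_i^{\leq}$ resp.\ $u^{<}$, whose answer set sets are finite), and each call $\EvaluateLDESafe(u,\myint(m'))$ returns a finite set by Proposition~\ref{thm:EvaluateLDESafe} and the lde-safety of $u\cup\facts(\myint(m'))$.

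For the invariants, I would induct on the number of iterations. In the base case $\cA=\emptyset$, which is trivially an answer set graph. For the inductive step, fix $u$ chosen in the current iteration; by the choice rule every $u_i\in\myinputs(u)$ has already been processed, so by the IH $\cA$ is input- and output-complete for each $u_i$. The first for-loop~\ref{step:firstforloop} enumerates all tuples of $\myoints$ at the predecessor units and adds an \myiint whenever the join $J=m_1\join\cdots\join m_k$ is defined. By Proposition~\ref{thm:joinanswersetgraphsoundcomplete}, each such addition preserves the answer set graph property and, conversely, every \myiint admissible at $u$ arises in this way. To show this makes $\cA$ input-complete for $u$, I would invoke Theorem~\ref{thm:evalgraphpredbottom}: for each $u_i\in\myinputs(u)$, $u_i^{\leq}$ is a generalized bottom of $u^{<}$, so iterated application of the Generalized Splitting Theorem~\ref{thm:hexgensplitting} decomposes $\AS(u^{<})$ precisely into consistent combinations of answer sets of the $u_i^{\leq}$; the \quo{consistency} condition across shared \myCAUstext is exactly the definedness of the join (Def.~\ref{def:join}), and condition~(IG-U) rules out duplicates. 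Hence $\{\myint^+(m)\mid m\in\myimodelsA(u)\}=\AS(u^{<})$.

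Next, if $u=\ufinal$ the algorithm returns in~\ref{step:return}; by Proposition~\ref{thm:answersetsfromufinal} (noting that $\ufinal^{<}=P$ and $\myint^+(m)=\myint(m)$ at $\ufinal$ by condition (c) of Def.~\ref{def:answersetgraph}), the set $\myimodelsA(\ufinal)$ equals $\AS(P)$, which is the desired output. Otherwise, the second for-loop~\ref{step:secondforloop} processes every \myiint $m'$ of $u$: by Proposition~\ref{thm:EvaluateLDESafe}, $\EvaluateLDESafe(u,\myint(m'))$ returns $\{\,I'\setminus\myint(m')\mid I'\in\AS(u\cup\facts(\myint(m')))\,\}$, and an \myoint with $\myint(m)=o$ is added together with the edge $(m,m')$. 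Applying Theorem~\ref{thm:evalgraphprecbottom} (which gives that $u^{<}$ is a generalized bottom of $u^{\leq}$) and Theorem~\ref{thm:hexgensplitting}, the expanded \myoints at $u$ constructed this way are exactly $\AS(u^{\leq})$; condition~(IG-U) is preserved because distinct $m'$ give disjoint $\myint^+$ and $\EvaluateLDESafe$ enumerates each $I'$ once. Thus $\cA$ becomes output-complete for $u$, the invariants are restored, and the induction proceeds.

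The main obstacle in this proof is the first for-loop's correctness argument: showing that the definedness of the join (a purely graph-theoretic \myCAUtext condition) coincides with the semantic compatibility required to obtain precisely $\AS(u^{<})$ from the answer sets of the $u_i^{\leq}$ via iterated generalized splitting. This hinges on Proposition~\ref{thm:joinanswersetgraphsoundcomplete} together with the disjointness result Proposition~\ref{thm:disjointunitoutputs}, which guarantees that the unions $\myint(m_1)\cup\cdots\cup\myint(m_k)$ corresponding to joined \myoints do not clash on heads, so the Generalized Splitting Theorem can be applied coherently across all predecessors and all shared ancestors.
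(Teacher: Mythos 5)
Your proof follows essentially the same route as the paper's: induction over the while-loop iterations maintaining the invariant that $\cA$ is an answer set graph that is input- and output-complete for the already-processed units, with Proposition~\ref{thm:joinanswersetgraphsoundcomplete} handling the join step, Theorem~\ref{thm:evalgraphprecbottom} together with Theorem~\ref{thm:hexgensplitting} handling the output step, and Proposition~\ref{thm:answersetsfromufinal} yielding the final return; your explicit termination argument and the more detailed input-completeness justification via Theorem~\ref{thm:evalgraphpredbottom} only elaborate what the paper leaves implicit. The one minor slip is the claim that distinct \myiints $m'$ yield \emph{disjoint} expanded interpretations --- they are merely distinct by (IG-U) --- but this does not affect the correctness of the argument.
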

\newcommand\myproofSoundComplete{
We show by induction on its construction that
$\cI=(M,F,\myunit,$ $\mytype,\myint)$ is an answer set graph for $\cE$,
and that at the beginning of the while-loop
$\cI$ is input- and output-complete for $V \setminus U$.

(Base)
Initially, $\cI$ is initially and $V = U$,
hence the base case trivially holds.

(Step)
Suppose that $\cI$ is an answer set graph for $\cE$
at the beginning of the while-loop, and that it is
input- and output-complete for $V \setminus U$.
As the chosen $u$ only depends on units in $V \setminus U$,
it depends only on output-complete units.
For a leaf unit $u$,
\ref{step:emptyinputmodels} creates an empty \myiint
and therefore makes $u$ input-complete.
For a non-leaf unit $u$,
  the first for-loop \ref{step:firstforloop}
  builds all possible joins of interpretations at predecessors of $u$
  and adds them as \myiints to $\cI$.
  As all predecessors of $u$ are output-complete by the hypothesis,
  this makes $u$ input-complete.
  Now suppose that Condition~\ref{step:return} is false, i.e.,
  $u\neq \ufinal$. Then the second for-loop~\ref{step:secondforloop}
  evaluates $u$ wrt.\ every \myiint at $u$
  and adds the result to $u$ as an \myoint.
  Due to Proposition~\ref{thm:EvaluateLDESafe},
  $\EvaluateLDESafe(u,\myint(m'))$ returns all interpretations $o$ such
  that $o \in \{ X \setminus \myint(m') \mid X \in \AS(u \cup
  \facts(\myint(m')) \}$.
  As $u$ depends on all units on which its rules depend,
  and as \myiints contain all atoms from \myoints of predecessor units
  (due to condition~\eqref{itm:imodelunion} of Definition~\ref{def:answersetgraph}),
  we have $\EvaluateLDESafe(u,\myint(m')) = \EvaluateLDESafe(u,\myint(m')^+)$.
  By Theorem~\ref{thm:evalgraphprecbottom},
  $u^<$ is a generalized bottom of $u^{\leq}$, and by the induction
  hypothesis $\myint(m')^+ \in \AS(u^<)$;
  hence by Theorem~\ref{thm:hexgensplitting},
  we have that
  $\myint(m')^+ \cup o \in \AS(u^{\leq})$.
  Consequently, adding a new \myoint $m$ with interpretation $\myint(m) = o$
  and dependency to $m'$ to the graph $\cI$ results in $\myint(m)^+ \in
  \AS(u^{\leq})$, and adding all of them makes $\cI$ output-complete for $u$.
  Finally, in \ref{step:removeu} $u$ is removed from $U$; hence at the
  end of the while-loop $\cI$ is an answer set graph and again input-
  and output-complete for $V \setminus U$.

  It remains to consider the case where  Condition~\ref{step:return} is true.
  Then $\ufinal$ was made input-complete,
  which means that all predecessors of $\ufinal$ are output-complete.
  As $\ufinal$ depends on all other units, we have $U = \{ \ufinal \}$
  and the algorithm returns $\myimodelsA(u)$; by
  Proposition~\ref{thm:answersetsfromufinal}, it thus returns $\AS(P)$,
  which will happen in the $|V|$-th iteration of the while loop.
}
\myinlineproof{
  \begin{proof}
    \myproofSoundComplete
  \end{proof}
}

A run of the algorithm on our running example using the evaluation
graph $\cE_2$ extended with a final unit is given in
\ref{sec:appendix-alg-run}.

\subsubsection{Model Streaming}

Algorithm $\myBuildAnswerSets$ as described above keeps all answer sets
in memory, and it evaluates each unit only once wrt.\ every possible \myiint.
This may lead to a resource bound excess, as in general an
exponential number of answer sets respectively interpretations at
evaluation units are possible. However, keeping the whole answer set
graph in memory is not necessary for computing all answer sets.

We have realized a variant of Algorithm $\myBuildAnswerSets$ that uses
the same principle of constructing an answer set graph, interpretations
are created at a unit {\em on demand}\/ when they are requested by units
that depend on it; furthermore, the algorithm keeps basically only one
interpretation at each evaluation unit in memory at a time, which means
that interpretations are provided in a {\em streaming fashion}\/ one by one,
and likewise the answer sets of the program at the
unit $\ufinal$, where the model building starts.
Such answer set streaming is particularly attractive for applications,
as one can terminate the computation after obtaining sufficiently many
answer sets. On the other hand, it comes at the cost of potential
re-evaluation of units wrt.\ the same \myiint, as we need to trade space
for time. However, in practice this algorithm works well and is the one
used in the \dlvhex prototype.
We describe this algorithm in
\ref{sec:appendixStreaming}.

\section{Implementation}
\label{sec:heximpl}

In this section we give some details on the implementation of the techniques.
Our prototype system is called \dlvhex{}; it is written in C++ and
online available as open-source software.%
\footnote{\url{http://www.kr.tuwien.ac.at/research/systems/dlvhex}}
The current version 2.4.0 was released in September 2014.

We first describe the general architecture, the major components, and
their interplay \reva{(Section~\ref{sec:architecture})}.
Then we give an overview about the existing
heuristics \reva{for  building evaluation graphs
(Section~\ref{sec:heuristics}).
Experimental results are presented and discussed in
Section~\ref{sec:experiments}.
}
For details on the usage of the system, we refer to the
website; an exhaustive description of the supported command-line
parameters is output when the system is called without parameters.

\subsection{System Architecture}
\label{sec:architecture}

The \dlvhex{} system architecture is shown in Figure~\ref{fig:dlvhexArchitecture}.
The arcs model both control and data flow within the system.
The evaluation of a \hex{}-program works as follows.

First, the input program is
passed to the \emph{evaluation framework}~\textcircled{\tiny 1},
\reva{which} creates an \emph{evaluation graph} depending on the chosen evaluation heuristics.
This results in a number of interconnected \emph{evaluation units}. While the interplay of the units
is managed by the evaluation framework, the individual units are handled by \emph{model generators}
of different kinds.

Each instance of a model generator
\reva{%
realizes \EvaluateLDESafe\ (Algorithm~\ref{alg:EvaluateLDESafe}) for}
a single evaluation unit,
receives \emph{input interpretations} from the framework (which are either output by predecessor units
or come from the input facts for leaf units), and sends output interpretations back to the framework~\textcircled{\tiny 2},
which manages
the integration of the latter to final answer sets
\reva{and realizes \myBuildAnswerSets (Algorithm~\ref{alg:buildAnswerSets})}.

Internally, the model generators make use of a \emph{grounder} and a
\emph{solver} for ordinary ASP programs. The architecture of our system
is flexible and supports multiple concrete backends that can be plugged
in. Currently it supports \dlv{}, \gringo{} 4.4.0 and \clasp{} 3.1.0,
as well as an internal grounder and a solver that were built from scratch
(mainly for testing purposes); they use basically the same core
algorithms as \gringo{} and \clasp{}, but without
optimizations.  The reasoner backends \gringo{} and \clasp{} are
statically linked to our system; thus no interprocess communication is
necessary.  The model generator within the \dlvhex{} core sends a
non-ground \reva{evaluation unit} to the \hex{}-grounder \reva{which returns} a ground
\reva{evaluation unit}~\textcircled{\tiny 3}. The \hex{}-grounder in turn uses
\reva{one of the above mentioned }ordinary ASP grounders as \reva{backend}~\textcircled{\tiny 4} and accesses
external sources to handle
\reva{newly introduced constants that are not part of the input program (called \emph{value invention})}~\textcircled{\tiny 5}.  The
ground \reva{evaluation unit} is then sent to the \reva{ASP solver and answer sets of the ground unit} are returned~\textcircled{\tiny
 6}.

\reva{Intuitively, model generators evaluate evaluation units
by replacing external atoms by ordinary \quo{replacement} atoms,
guessing their truth value, and making sure that the guesses are correct with respect to the external oracle functions.
To achieve that, the solver backend needs to make}
callbacks to the \emph{Post Propagator} in the \dlvhex{} core \reva{during model building}.
\reva{
The Post Propagator checks guesses for external atoms against the actual semantics
and checks the minimality of the answer set.}
\reva{
It processes a complete or partial model candidate,
and returns learned nogoods} to the external solver~\textcircled{\tiny 7}
as formalized in \cite{2012_conflict_driven_asp_solving_with_external_sources}.
\reva{The \dlv{} backend calls the Post Propagator only for complete model candidates,
the internal solver and the \clasp{} backend also call it for partial model candidates of evaluation units.}
For the \clasp{} backend, we exploit its SMT interface, which was previously used
for the special case of constraint answer set solving~\cite{geossc09a}.
\reva{
Verifying guesses of replacement atoms requires calling} \emph{plugins}
that implement the external sources
\reva{(i.e., the oracle functions $\extFun{g}$ from Definition~\ref{def:extensionalevaluationfunction})}%
~\textcircled{\tiny 8}.
Moreover, the Post Propagator also
\reva{%
ensures answer set minimality by eliminating}
unfounded sets that \reva{are caused by external sources and therefore can not be}
detected by the ordinary ASP solver \reva{backend
(as shown by \citeN{efkrs2014-jair}).
}
Finally, \reva{as soon as} the evaluation framework
\reva{obtains an \myiint of the final evaluation unit $u_\mi{final}$,
this \myiint (which is an answer set according to Proposition~\ref{thm:answersetsfromufinal})
}
is returned to the user~\textcircled{\tiny 9}.

\begin{figure}[t]
\renewcommand\arraystretch{2.4}
\centering
\resizebox{\textwidth}{!}{
\myfigureSystemArchitecture
}
\caption{Architecture of \dlvhex{}}
\label{fig:dlvhexArchitecture}
\end{figure}

\subsection{Heuristics}
\label{sec:heuristics}

As for creating evaluation graphs,
several heuristics have been implemented.
A heuristics starts with the rule dependency graph as by Definition~\ref{def:ruledepgraph}
and then acyclically combines nodes into units.

Some heuristics are described in the following.

\newcommand{\heuritem}[1]{\smallskip\noindent{\textbf #1\ }}
\heuritem{\heurtriv}
is a \quo{trivial} heuristics that makes units as small as
possible. This is useful for debugging,
however it generates the largest possible number of evaluation units
and therefore incurs a large overhead.
\reva{As a consequence \heurtriv performs clearly worse than other heuristics and we do not report its performance in experimental results.}

\heuritem{\heurold}
is the evaluation heuristics of the \dlvhex prototype version~1.
\heurold makes units as large as possible and has several drawbacks as discussed above.

\heuritem{\heurnew}
is a simple evaluation heuristics which has the goal of finding
a compromise between the \heurtriv and \heurold.
It places rules into units as follows:
  \begin{enumerate}[(i)]
  \item
    it puts rules~$r_1,r_2$ into the same unit
    whenever~$r_1 \dependsmn s$ and $r_2 \dependsmn s$ for some rule $s$
and there
    is no rule $t$ such that exactly one of $r_1,r_2$ depends on $t$;
  \item
    it puts rules~$r_1,r_2$ into the same unit
    whenever $s \dependsmn r_1$ and $s \dependsmn r_2$ for some rule $s$
    and there is no rule $t$ such that $t$ depends on exactly one of $r_1,r_2$; but
  \item
    it never puts rules $r,s$ into the same unit
    if $r$ contains external atoms and $r \dependsmn s$.
  \end{enumerate}
  Intuitively,  %
  \heurnew builds an evaluation graph
  that puts all rules with external atoms and their successors into one unit,
  while separating rules creating input for distinct external atoms.
  This avoids redundant computation and joining unrelated interpretations.

\heuritem{\heurmrg}
is a heuristics %
for%
\reva{ finding a compromise between}%
~(1) minimizing the number of units, and~(2) splitting the program whenever a
de-relevant nonmonotonic external atom would receive input from
the same unit.
\reva{We mention this heuristics only as an example,
but disregard it in the experiments since it was developed in connection
with novel \quo{liberal} safety criteria~\cite{efkr2013-aaai}
that are beyond the scope of this paper.}
\heurmrg greedily gives preference to~(1) and is motivated by
the following considerations.
The grounding algorithm by~\citeN{eite-etal-14a} evaluates the external
sources under all interpretations such that the set of observed
constants is maximized.  While monotonic and antimonotonic input atoms
are not problematic (the algorithm can simply set all to true
resp.~false), nonmonotonic parameters require an exponential number of
evaluations in general.  Thus, although program decomposition is not strictly
necessary \reva{for evaluating liberally safe \hex-programs}, it is still useful in such cases as it restricts grounding to
those interpretations that are actually relevant in some answer set.
However, on the other hand it can be disadvantageous for propositional
solving algorithms such as
those
in \cite{2012_conflict_driven_asp_solving_with_external_sources}.

Program decomposition can be seen as a hybrid between traditional and
lazy grounding (cf.~e.g.~\citeN{pdpr2009-fi}), as program parts are
instantiated that are larger than single rules but smaller than the
whole program.

\subsection{Experimental Results}
\label{sec:experiments}

In this section, we evaluate the model-building framework empirically.
To this end, we compare the following configurations. In the \reva{\emph{H1}}
column, we use the previous \reva{state-of-the-art}
evaluation method~\cite{rs2006}
before the framework \reva{in Section~\ref{sec:decomposition}} was developed.
\reva{This previous} method also makes use of program
decomposition.  However, in contrast to our new framework, the
decomposition is based on atom dependencies rather than rule
dependencies, and the decomposition strategy is hard-coded and not
customizable.  This evaluation method corresponds to
heuristics \heurold{} in our new framework.

In the \emph{w/o framework} column, we present the results without application of the framework
using the \reva{\hex-program evaluation} algorithm by~\citeN{eite-etal-14a}
\reva{which allows to first instantiate and then solve the instantiated \hex-program}.
Note that before this algorithm was developed,
\reva{such a %
`two-phase'} evaluation was not possible
since program decomposition was necessary for grounding purposes.
With \reva{the algorithm in \cite{eite-etal-14a}}%
, decomposition is not necessary anymore,
but \reva{can still be} useful
as the results in the \reva{\emph{H2} column shows, which correspond to the results when applying the
heuristics \emph{H2} described above}.

The configuration of the grounding algorithm and the solving algorithm (\reva{e.g.,}~conflict-driven learning strategies)
also influence the results.
Moreover, in addition to the default heuristics of framework, other heuristics have been developed as well
and the best selection of the heuristics often depends on the configuration of the grounding and the solving algorithm.
Since they were used as black boxes in Algorithm~\ref{alg:EvaluateLDESafe},
an exhaustive experimental analysis of the system is beyond the scope of this paper and would require an in-depth description
of these algorithms.
Thus, we confine the discussion to the default settings, which suffices to show that the new framework
can speed up the evaluation significantly.
\reva{The only configuration difference
between the  result columns \emph{\heurold} and \emph{\heurnew}
is the evaluation heuristics,
all other parameters are equal.
Evaluating the \emph{w/o framework} column
requires the grounding algorithm from \cite{eite-etal-14a}
instead of evaluation via decomposition,
therefore \emph{w/o framework} does not use any heuristics.
The solver backend (\clasp) configuration is the same in
\emph{\heurold}, \emph{\heurnew}, and \emph{w/o framework}.
We use the streaming algorithm
(see \ref{sec:appendixStreaming}) in all experiments.}
For an in depth discussion, we refer to~\citeNBYYB{efkrs2014-jair}{eite-etal-14a}
and~\citeN{r2014-phd},
where the efficiency was evaluated using a variety of applications
including planning tasks (\reva{e.g.,} robots searching an unknown area for
an object, tour planning), computing extensions of abstract argumentation frameworks, inconsistency analysis in multi-context systems, and reasoning over description logic knowledge bases.

\reva{We %
discuss here two benchmark problems,
which we evaluated on a
Linux server with two 12-core AMD 6176 SE CPUs with 128GB RAM
running \dlvhex{} version 2.4.0. and an \emph{HTCondor} load distribution system%
\footnote{\url{http://research.cs.wisc.edu/htcondor}}
that ensures robust runtimes.
The HTCondor system ensures that multiple runs of the same instance have negligible deviations in the order of
fractions of a second, thus we can restrict the experiments to one run.}
The grounder and solver backends for all benchmarks are \gringo{} 4.4.0 and \clasp{} 3.1.1.
For each instance, we limited the CPU usage to two cores and 8GB RAM.
The timeout for each instance was 600 seconds.
Each line shows the average runtimes over all instances of a
certain size, where each timeout counts as 600 seconds.
While instances usually become harder with larger size,
there might be some exceptions due to the randomly generated instances;
however, the overall trend shows that runtimes increase with the instance size.
Numbers in parentheses are the numbers of instances of respective size
in the leftmost column and the numbers of timeout instances elsewhere.
The generators, instances and external sources are available at {\small\url{http://www.kr.tuwien.ac.at/research/projects/hexhex/hexframework}}.

\newcommand{\mcsbench}[0]{{\sc MCS}\xspace}
\subsubsection{Multi-Context Systems (\mcsbench)}

The \mcsbench benchmarks originate in the application scenario of
enumerating output-projected equilibria (i.e., global models) of a given
multi-context system (MCS) (cf.\ Section~\ref{secKnowledgeOutsourcing}).
Each instance comprises 7--9 contexts (propositional knowledge bases) whose local
semantics is modeled by external atoms; roughly speaking,
they single out assignments to the atoms of a context occurring in bridge rules such that
local models exist. For each context, 5--10 such atoms are
guessed and bridge rules, which are
modeled by ordinary rules, are randomly constructed on top.
The \mcsbench instances were generated using the
DMCS~\cite{Bairakdar2010dmcs}
instance generator,
with 10 randomized instances for different link structure between
contexts (diamond (d), house (h), ring (r), zig-zag (z))
and system size; they have between 4 and about 20,000 answer sets, with
an average of 400. We refer to \cite{Bairakdar2010dmcs} and \cite{ps2012}
for more details on the benchmarks and the \hex-programs.
\begin{table}[t]
        \scriptsize
        \centering
        \begin{tabular}[t]{rrrrrrr}
                \hline
                \hline
				Topology and & \multicolumn{3}{c}{First Answer Set} & \multicolumn{3}{c}{All Answer Sets} \\
        \cline{2-4}
        \cline{5-7}
				Instance Size \rule{0pt}{1.1em} %
          & \multicolumn{1}{c}{H1} & \multicolumn{1}{c}{w/o framework} & \multicolumn{1}{c}{H2}
          & \multicolumn{1}{c}{H1} & \multicolumn{1}{c}{w/o framework} & \multicolumn{1}{c}{H2} \\
		\hline
		 d-7-7-3-3 (10) & 1.23 (0) & 0.29 (0) & 0.38 (0) & 4.93 ~~(0) & 0.76 (0) & 0.79 (0) \\
		 d-7-7-4-4 (10) & 18.43 (0) & 1.09 (0) & 0.76 (0) & 50.78 ~~(0) & 3.39 (0) & 1.80 (0) \\
		 d-7-7-5-5 (10) & 94.18 (1) & 3.60 (0) & 1.52 (0) & 289.35 ~~(4) & 20.21 (0) & 4.97 (0) \\
		 h-9-9-3-3 (10) & 83.17 (1) & 3.77 (0) & 0.70 (0) & 300.96 ~~(4) & 28.67 (0) & 2.11 (0) \\
		 h-9-9-4-4 (10) & 389.74 (6) & 30.56 (0) & 2.14 (0) & 555.94 ~~(9) & 335.11 (5) & 12.56 (0) \\
		 r-7-7-4-4 (10) & 39.27 (0) & 2.82 (0) & 0.33 (0) & 366.17 ~~(5) & 57.26 (0) & 2.06 (0) \\
		 r-7-7-5-5 (10) & 389.88 (6) & 105.80 (1) & 0.93 (0) & 600.00 (10) & 377.37 (5) & 4.39 (0) \\
		 r-7-8-5-5 (10) & 226.04 (3) & 25.11 (0) & 0.57 (0) & 541.80 ~~(9) & 317.64 (5) & 3.99 (0) \\
		 r-7-9-5-5 (10) & 355.37 (5) & 145.99 (2) & 0.87 (0) & 600.00 (10) & 458.14 (7) & 5.42 (0) \\
		 r-8-7-5-5 (10) & 502.64 (8) & 329.47 (5) & 1.21 (0) & 555.26 ~~(9) & 443.15 (7) & 5.84 (0) \\
		 r-8-8-5-5 (10) & 390.81 (6) & 201.08 (3) & 1.00 (0) & 600.00 (10) & 495.41 (8) & 5.38 (0) \\
		 z-7-7-3-3 (10) & 2.34 (0) & 0.32 (0) & 0.44 (0) & 9.17 ~~(0) & 1.13 (0) & 1.00 (0) \\
		 z-7-7-4-4 (10) & 33.32 (0) & 1.58 (0) & 1.07 (0) & 182.44 ~~(2) & 9.00 (0) & 2.67 (0) \\
		 z-7-7-5-5 (10) & 164.33 (2) & 12.69 (0) & 3.52 (0) & 502.49 ~~(8) & 89.01 (1) & 6.90 (0) \\
                \hline
                \hline
        \end{tabular}
        \caption{\mcsbench experiments: variable topology (d, h, r, z) and instance size.}
        \label{tab:mcs}
\end{table}

Table~\ref{tab:mcs} shows the experimental results:
computation with the old method \reva{\heurold} often exceeds the time limit,
while the new method \heurnew manages to enumerate all solutions of all instances.
Monolithic evaluation without decomposition shows a performance between the old and new method.
These results show that our new evaluation method is essential for using
HEX to computationally realize the MCS application.

\newcommand{\revsel}[0]{{\sc RS}\xspace}
\newcommand{\revselTwo}[0]{{\sc RSTrack}\xspace}
\newcommand{\revselFour}[0]{{\sc RSPaper}\xspace}

\subsubsection{Reviewer Selection (\revsel)}

Our second benchmark is Reviewer Selection (\revsel):
we represent $c$ conference tracks,
$r$ reviewers and $p$ papers.
Papers and reviewers are assigned to conference tracks,
and there are conflicts between reviewers and papers,
some of which are
given by external atoms.
We consider two scenarios: \revselTwo and \revselFour.
They are designed to measure the effect of external atoms
on the elimination of a large number of answer set candidates;
in contrast to the \mcsbench experiments we can control this aspect in the \revsel experiments.

In \revselTwo we vary the number $c$ of conference tracks,
where each track has 20 papers and 20 reviewers.
Each paper must get two reviews, and no reviewer must get more than two papers.
Conflicts are dense such that only one valid assignment exists per track,
hence each instance has exactly one answer set,
and in each track two conflicts are external.
For each
number $c$ there is only one instance
because \revselTwo instances are not randomized.
The results of \revselTwo are shown in Table~\ref{tab:revsel2}:
runtimes of the old evaluation heuristics (\heurold)
grow fastest with
size,
without using decomposition grows slightly slower
but also reaches timeout at
size~9.
Only the new decomposition (\heurnew heuristics)
can deal with
size~20
without timeout.
Finding the first answer set and enumerating all answer sets show very similar times,
as \revselTwo instances have a single answer set and finding it seems
hard.
\begin{table}[t]
        \scriptsize
        \centering
        \begin{tabular}[t]{rrrrrrr}
                \hline
                \hline
				Instance Size & \multicolumn{3}{c}{First Answer Set} & \multicolumn{3}{c}{All Answer Sets} \\
        \cline{2-4}
        \cline{5-7}
        \rule{0pt}{1.1em} %

          & \multicolumn{1}{c}{H1} & \multicolumn{1}{c}{w/o framework} & \multicolumn{1}{c}{H2}
          & \multicolumn{1}{c}{H1} & \multicolumn{1}{c}{w/o framework} & \multicolumn{1}{c}{H2} \\
		\hline
		 1 (1) & 2.84 (0) & 3.14 (0) & 2.78 (0) & 2.73 (0) & 3.14 (0) & 2.79 (0) \\
		 2 (1) & 6.13 (0) & 7.18 (0) & 4.90 (0) & 6.05 (0) & 7.17 (0) & 4.88 (0) \\
		 3 (1) & 10.18 (0) & 12.30 (0) & 8.32 (0) & 10.25 (0) & 12.35 (0) & 8.37 (0) \\
		 4 (1) & 15.92 (0) & 18.66 (0) & 12.12 (0) & 15.86 (0) & 18.85 (0) & 12.16 (0) \\
		 5 (1) & 26.06 (0) & 28.47 (0) & 17.17 (0) & 26.23 (0) & 28.35 (0) & 17.06 (0) \\
		 6 (1) & 47.06 (0) & 45.71 (0) & 23.39 (0) & 46.84 (0) & 45.62 (0) & 23.26 (0) \\
		 7 (1) & 92.76 (0) & 79.41 (0) & 31.19 (0) & 96.56 (0) & 79.82 (0) & 31.04 (0) \\
		 8 (1) & 198.59 (0) & 155.10 (0) & 37.85 (0) & 199.74 (0) & 155.26 (0) & 38.06 (0) \\
		 9 (1) & 600.00 (1) & 600.00 (1) & 46.61 (0) & 600.00 (1) & 600.00 (1) & 46.75 (0) \\
		 10 (1) & 600.00 (1) & 600.00 (1) & 57.48 (0) & 600.00 (1) & 600.00 (1) & 57.40 (0) \\
		 11 (1) & 600.00 (1) & 600.00 (1) & 68.98 (0) & 600.00 (1) & 600.00 (1) & 69.45 (0) \\
		 12 (1) & 600.00 (1) & 600.00 (1) & 84.41 (0) & 600.00 (1) & 600.00 (1) & 84.11 (0) \\
		 13 (1) & 600.00 (1) & 600.00 (1) & 99.55 (0) & 600.00 (1) & 600.00 (1) & 99.52 (0) \\
		 14 (1) & 600.00 (1) & 600.00 (1) & 117.39 (0) & 600.00 (1) & 600.00 (1) & 117.15 (0) \\
		 15 (1) & 600.00 (1) & 600.00 (1) & 138.45 (0) & 600.00 (1) & 600.00 (1) & 137.51 (0) \\
		 16 (1) & 600.00 (1) & 600.00 (1) & 163.12 (0) & 600.00 (1) & 600.00 (1) & 158.43 (0) \\
		 17 (1) & 600.00 (1) & 600.00 (1) & 184.99 (0) & 600.00 (1) & 600.00 (1) & 181.94 (0) \\
		 18 (1) & 600.00 (1) & 600.00 (1) & 208.83 (0) & 600.00 (1) & 600.00 (1) & 210.82 (0) \\
		 19 (1) & 600.00 (1) & 600.00 (1) & 236.98 (0) & 600.00 (1) & 600.00 (1) & 237.45 (0) \\
		 20 (1) & 600.00 (1) & 600.00 (1) & 267.54 (0) & 600.00 (1) & 600.00 (1) & 268.60 (0) \\
		 21 (1) & 600.00 (1) & 600.00 (1) & 600.00 (1) & 600.00 (1) & 600.00 (1) & 600.00 (1) \\
                \hline
        \end{tabular}
        \caption{\revselTwo experiments: variable number of conference tracks, single answer set.}
        \label{tab:revsel2}
\end{table}

In \revselFour
we fix the number of
tracks to $c \eqs 5$;
we vary the number $p$ of papers in each track
and set the number of reviewers to $r \eqs p$.
Each paper must get three reviews
and each reviewer must not get more than three papers assigned.
Conflicts are randomized and less dense than in \revselTwo:
the number of answer sets is greater than one and does not grow with the instance size.
Over all tracks and papers, $2 p$ randomly chosen conflicts are external,
and we generate 10 random instances per size and report results
averaged per instance size
in Table~\ref{tab:revsel4e6}.
As clearly seen, our new method is always faster than the other
methods, and evaluation without a decomposition framework performs
slightly better than the old method. %
Different from \revselTwo,
we can see a clear difference between
finding the first answer set and
enumerating all answer sets
as \revselFour instances have more than one answer set.

To confirm that the new method is geared towards handling many external atoms,
we conducted also experiments with instances that had few
external atoms for eliminating answer set candidates but many local
constraints.
For such highly constrained instances, the new
decomposition framework is not beneficial as it incurs an overhead
compared to the monolithic evaluation that increases runtimes.
\begin{table}[t]
        \scriptsize
        \centering
        \begin{tabular}[t]{rrrrrrr}
                \hline
                \hline
				Instance Size & \multicolumn{3}{c}{First Answer Set} & \multicolumn{3}{c}{All Answer Sets} \\
        \cline{2-4}
        \cline{5-7}
        \rule{0pt}{1.1em} %

				  & \multicolumn{1}{c}{H1} & \multicolumn{1}{c}{w/o framework} & \multicolumn{1}{c}{H2}
				  & \multicolumn{1}{c}{H1} & \multicolumn{1}{c}{w/o framework} & \multicolumn{1}{c}{H2} \\
				\hline
                5 (10) & 1.06 ~~(0) & 0.28 ~~(0) & 0.21 (0) & 2.25 ~~(0) & 0.43 ~~(0) & 0.23 ~~(0) \\
                8 (10) & 8.76 ~~(0) & 2.73 ~~(0) & 0.38 (0) & 14.73 ~~(0) & 4.54 ~~(0) & 0.44 ~~(0) \\
                11 (10) & 108.70 ~~(1) & 83.26 ~~(1) & 0.98 (0) & 171.01 ~~(2) & 104.84 ~~(1) & 1.28 ~~(0) \\
                14 (10) & 180.99 ~~(2) & 125.83 ~~(1) & 2.08 (0) & 299.22 ~~(4) & 245.62 ~~(3) & 2.67 ~~(0) \\
                17 (10) & 418.92 ~~(6) & 364.95 ~~(5) & 5.15 (0) & 549.01 ~~(9) & 513.21 ~~(8) & 8.14 ~~(0) \\
                20 (10) & 485.35 ~~(8) & 453.39 ~~(7) & 7.32 (0) & 507.66 ~~(8) & 501.74 ~~(8) & 14.45 ~~(0) \\
                23 (10) & 542.03 ~~(9) & 508.75 ~~(8) & 13.91 (0) & 600.00 (10) & 600.00 (10) & 23.16 ~~(0) \\
                26 (10) & 600.00 (10) & 600.00 (10) & 33.20 (0) & 600.00 (10) & 600.00 (10) & 154.51 ~~(2) \\
                29 (10) & 600.00 (10) & 600.00 (10) & 60.78 (0) & 600.00 (10) & 600.00 (10) & 108.03 ~~(0) \\
                32 (10) & 600.00 (10) & 600.00 (10) & 129.95 (0) & 600.00 (10) & 600.00 (10) & 315.56 ~~(4) \\
                35 (10) & 600.00 (10) & 600.00 (10) & 136.84 (0) & 600.00 (10) & 600.00 (10) & 302.90 ~~(3) \\
                38 (10) & 600.00 (10) & 600.00 (10) & 308.92 (3) & 600.00 (10) & 600.00 (10) & 441.06 ~~(6) \\
                41 (10) & 600.00 (10) & 600.00 (10) & 421.69 (6) & 600.00 (10) & 600.00 (10) & 529.80 ~~(8) \\
                44 (10) & 600.00 (10) & 600.00 (10) & 470.61 (7) & 600.00 (10) & 600.00 (10) & 553.19 ~~(9) \\
                47 (10) & 600.00 (10) & 600.00 (10) & 485.60 (7) & 600.00 (10) & 600.00 (10) & 529.00 ~~(8) \\
                50 (10) & 600.00 (10) & 600.00 (10) & 485.07 (7) & 600.00 (10) & 600.00 (10) & 526.66 ~~(8) \\
                \hline
                \hline
        \end{tabular}
        \caption{\revselFour experiments: variable number of papers/reviewers, multiple answer sets, randomized.}
        \label{tab:revsel4e6}
\end{table}

\subsubsection{Summary}

The results demonstrate a clear improvement using the new framework;
they can often be further improved
by fine-tuning the grounding and solving algorithm, and by customizing the default heuristics
of the framework, as discussed by~\citeNBYYB{efkrs2014-jair}{eite-etal-14a},
and \citeN{r2014-phd}.
However, already the default settings yield results that are significantly better than
using the previous evaluation method or using no framework at all;
note that the latter requires an advanced grounding algorithm as by~\citeN{eite-etal-14a}, which
was not available at the time the initial evaluation approach as by~\citeN{rs2006} was developed.

In conclusion, the evaluation framework in
Section~\ref{sec:decomposition} pushes \hex-programs
towards scalability for realistic instance sizes, which previous
evaluation techniques missed.

\section{Related Work and Discussion}
\label{sec:relatedanddiscussion}

We now discuss our results in the context of related work, and will
address possible optimizations.

\subsection{Related Work}
\label{sec:related-work}

\reva{
We first discuss related approaches for integrating
external reasoning into ASP formalisms,
then we discuss work related to the notion of
rule dependencies that we introduced
in Section~\ref{sec:ruledependencies},
we discuss related notions of modularity and program decomposition.
Finally we relate our splitting theorems
to other splitting theorems in the literature.
}

\subsubsection{External Sources}

The \dlvex{} system~\cite{cci2007} was a pioneering work on value
invention through external atoms in ASP. It supported VI-restricted
programs, which amount to \hex-programs under extensional semantics without
higher-order atoms and a strong safety condition that is subsumed by
lde-safety. Answer set computation followed the traditional approach
on top of \dlv{}, but used a special progressive grounding
method (thus an experimental comparison to solving, i.e., model building as
in the focus of this paper, is inappropriate).

\reva{
With respect to constraint theories and ASP, several works exist.
The \acsolver\ system \cite{Mellarkod2008},
the \ezcsp\ system \cite{Balduccini2009},
and the \clingcon{} system~\cite{os2012}
divide the program into ASP-literals and constraint-literals,
which} can be seen as a special case of
\hex-programs that focuses on a particular external source.  As for
evaluation, an important difference to general external sources is that
constraint atoms do not use value invention. The modularity techniques
from above are less relevant for this setting as grounding the overall
program in one shot is possible.
However, this also fits into our framework as disabling decomposition
in fact corresponds to a dedicated (trivial) heuristics which keeps the whole
program as a single unit.
\reva{
For a detailed comparison between \acsolver, \ezcsp, and \clingcon\
see \cite{Lierler2014}.

\citeN{Balduccini2013}
also experimentally compared \ezcsp and \clingcon
while varying the degree of integration
between the constraint solver and the ASP solver backend.
Their \quo{black-box integration} corresponds with \dlvhex's
integration of the \dlv backend:
external atom semantics are verified
by plugins callbacks only when a full
answer set candidates has been found in the backend;
moreover their \quo{clear-box integration} corresponds with \dlvhex's
integration of the \clasp solver backend:
plugin callbacks are part of the CDCL propagation
and can operate on partial answer set candidates.
Note that constraint answer set programs can be realized
as a \dlvhex-plugin
(such an effort is currently ongoing).%
\footnote{\url{http://github.com/hexhex/caspplugin}}
}

We also remark that \gringo{} and \clasp{} use a concept called
\quo{external atoms} for realizing various applications such as constraint
ASP solving as in \clingcon{} and incremental solving~\cite{gkks2014}.
However, despite their name they are different from external atoms in
\hex-programs.
In the former case, external atoms are excluded from grounding-time
optimization such that these atoms are not eliminated
even if their truth value is deterministically false during grounding.
This allows to add rules that found truth of such atoms in later incremental grounding steps.
In case of \hex{} the truth value is determined by external sources.
Moreover \gringo{} contains an interface for Lua and Python functions
that can perform computations during grounding.
\hex{} external atoms are more expressive:
they cannot always be evaluated during grounding
because their semantics is defined with respect to the answer set.

\subsubsection{Rule Dependencies}
In the context of answer set programming, dependency graphs over rules
have been used earlier, \reva{e.g.,}~by~\citeN{Linke2001} and~\citeN{2004_suitable_graphs_for_answer_set_programming}. However,
these works consider only ordinary ground programs, and furthermore the
graphs are used for characterizing and computing the answer sets of a
program from these graphs. In contrast, we consider nonground programs
with and external atoms, and we use the graph to split the program into
evaluation units with the goal of modularly computing answer sets.

\subsubsection{Modularity}
Our work is naturally related to work on program modularity under stable model
semantics, as targeted by splitting sets~\cite{lifs-turn-94}
and descendants, with
the work by~\citeN{2008_achieving_compositionality_of_the_stable_model_semantics_for_smodels_programs} and~\citeN{jotw2009-jair}
a prominent representative that
lifted
them to modular programs with choice rules and disjunctive rules, by
considering \quo{symmetric splitting}. Other works, \reva{e.g.,}\
by~\citeN{DBLP:conf/aaai/LierlerT13} go further to {\em define}\/ semantics
of systems of program modules, departing from a mere
semantics-preserving decomposition of a
larger program into smaller parts, or consider multi-language systems that combine modules in
possibly different formalisms on equal terms (cf.~e.g.\
\citeN{DBLP:conf/lpnmr/JarvisaloOJN09} and~\citeN{DBLP:conf/frocos/TasharrofiT11}).

Comparing
the works by~\citeN{2008_achieving_compositionality_of_the_stable_model_semantics_for_smodels_programs} and~\citeN{jotw2009-jair}
as, from a semantic decomposition perspective, the closest in this group  to
ours, an important difference is that our approach works for non-ground
programs and explicitly considers possible  overlaps of modules.
It is tailored to efficient evaluation of arbitrary programs, rather
than to facilitate module-style logic programming with declarative
specifications, or to provide compositional semantics for modules
beyond uni-lateral evaluation, as done
by~\citeN{DBLP:conf/lpnmr/JarvisaloOJN09}
and~\citeN{DBLP:conf/frocos/TasharrofiT11}; for them,
introducing values outside the module domain (known as {\em
value invention}) does not play a visible role.
In this regard, it is in line with previous \hex-program evaluation~\cite{eiter-etal-06}
and decomposition techniques to ground ordinary
programs efficiently \cite{cali-etal-08}.

\subsubsection{Splitting Theorems}
\label{secSplittingTheorems}
Our new splitting theorems compare to related splitting theorems as
follows.

Theorem~\ref{thm:hexsplitting} is similar to Theorem 4.6.2 by~\citeN{rs2006};
however, we do not use splitting sets on atoms, but splitting
sets on rules.  Furthermore, \citeN{rs2006} has no analog to
Theorem~\ref{thm:hexgensplitting}.

The seminal Splitting Set Theorem by~\citeN{lifs-turn-94}
divides the interpretation of $P$ into disjoint sets $X$ and $Y$,
where $X$ is an answer set of the \quo{bottom} $\mi{gb}_A(P) \subseteq P$
and $Y$ is an answer set of a \quo{residual} program
obtained from $P \setminus \mi{gb}_A(P)$ and $X$.
In the residual program, all references to atoms in $X$ are removed,
in a way that
it semantically behaves as if facts $X$
were added to $P \setminus \mi{gb}_A(P)$,
while the answer sets of the residual do not contain any atom in $X$.
This works nicely for answer set programs,
but it is problematic when applied to \hex programs,
because external atoms may depend on the bottom and on atoms in heads of the residual program;
hence, they cannot be eliminated from rule bodies.
The only way to eliminate bottom facts from the residual program would
be to \quo{split} external atoms semantically into a part depending on the
bottom and the program remainder, and by replacing external atoms in rules
with external atoms that have been partially evaluated wrt.\ a bottom
answer set.
Technically, this requires to introduce new external atoms, and
formulating a splitting theorem for \hex programs with two disjoint
interpretations $X$ and $Y$ is not straightforward. Furthermore, such
external atom splitting and partial evaluation might not be possible
in a concrete application scenario.
Different from the two splitting theorems recalled above,
the Global Splitting Theorem by~\citeN{eiter-etal-06}
does not split an interpretation of the program $P$
into disjoint interpretations $X$ and $Y$, and thus should be compared
to our Theorem~\ref{thm:hexgensplitting}.
However, the Global Splitting Theorem
does not allow constraint sharing, and it involves
a residual program which specifies how external atoms are evaluated via
\quo{replacement atoms},
which lead to extra facts $D$ in the residual program that
must be removed from its answer sets.
Both the specification of replacement atoms and the extra facts
make the Global Splitting Theorem
cumbersome to work with when proving correctness of \hex encodings.
Moreover, the replacement atoms are geared towards
a certain implementation technique which however is not mandatory
and can be avoided.

Lemma~5.1 by~\citeN{1997_disjunctive_datalog}
is structurally similar to our Theorem~\ref{thm:hexgensplitting}: answer sets of the bottom program
are evaluated together with the program depending on the bottom (here called the residual),
hence answer sets of the residual are answer sets of the original
program. However, the result was based on atom dependencies and did
neither consider negation nor external atoms.

\reva{%
In sumary our new Generalized Splitting Theorem
has the following advantages.
\begin{itemize}
\item By moving from atom to rule splitting sets, no separate definition
 of the bottom is needed, which just becomes the (rule) splitting set.

\item As regards \hex-programs, splitting is simple (and not troubled)
if all atoms that are true in an answer set of the bottom also appear
in the residual program. Typically, this is not the case in results
from the literature.

\item Finally, also the residual program itself is simpler (and easier
  to construct), by just dropping rules and adding facts. No rule
  rewriting needs to be done, and no extra facts need to be introduced in
  the residual program nor in the bottom.
\end{itemize}
The only (negligible) disadvantage of the new theorems is that the answer
sets of the bottom and the residual program may no longer be disjoint;
however, each residual answer set includes some (unique) bottom answer
set.
}

\subsection{Possible Optimizations}

Evaluation graphs naturally encode parallel
evaluation plans.  We have not yet investigated the potential benefits
of this feature in practice, but this property allows us to do parallel
solving based on solver software that does not have parallel computing
capabilities itself (\quo{parallelize from outside}).
This applies both to programs with external atoms,
as well as to ordinary ASP programs (i.w., without external atoms).
Improving reasoning performance by decomposition has been investigated by~%
\citeN{2005_partition_based_logical_reasoning_for_fo_and_propositional_theories},
however, only wrt.\ monotonic logics.

Improving \hex evaluation efficiency by using knowledge about domain restrictions
of external atoms has been discussed by~\citeN{efk2009-ijcai}.
These rewriting methods yield partially grounded sets of rules
which can easily be distributed into distinct evaluation units by an optimizer.
This directly provides efficiency gains as described in the above work.

As a last remark on possible optimizations, we observe that the data
flow
between evaluation units
can be optimized using proper notions of model projection, such as
in \cite{gks2009-cpaior}.  Model projections would tailor input data of
evaluation units to necessary parts of intermediate answer sets;
however, given that different units might need different parts of the same
intermediate input answer set, a space-saving efficient projection
technique is not straightforward.

\section{Conclusion}
\label{sec:conclusion}

\hex-programs
extend answer set programs with access to external sources through an
API-style interface, which has been fruitfully deployed to various
applications. Providing efficient evaluation methods for such programs
is a challenging but important endeavor, in order to enhance the
practicality of the approach and to make it eligible for a broader range
of applications.
In this direction, we have presented in this article a novel evaluation
method for \hex-programs based on modular decomposition.  We have
presented new results for the latter using special splitting sets, which
are more general than previous results and use rule sets as a basis for
splitting rather than sets of atoms as in previous
approaches. Furthermore, we have presented an evaluation framework which
employs besides a traditional evaluation graph that consists of program
components and reflects syntactic dependencies among them, also a model
graph whose nodes collect answer sets that are combined and passed on
between components. Using decomposition techniques, evaluation units can
be dynamically formed and evaluated in the framework using different
heuristics, Moreover, the answer sets of the overall program can be
produced in a streaming fashion.  The new approach leads in combination
with other techniques to significant improvements for a variety of
applications, as demonstrated
by \citeNBYYB{eite-etal-14a}{efkrs2014-jair} and \citeN{r2014-phd}.
Notably, while our results target \hex-programs,
the underlying concepts and techniques are not limited to them (e.g., to
separate the evaluation and the model graph) and may be fruitfully
transferred to other rule-based formalisms.

\subsection{Outlook}

The work we presented can be continued in different directions. As for
the prototype reasoner, a rather straightforward extension is to support
brave and cautious reasoning on top of \hex programs, while
incorporating constructs like aggregates or preference constraints
requires more care and efforts. Regarding program evaluation, our
general evaluation framework provides a basis for further optimizations
and evaluation strategies.
Indeed, the generic notions of evaluation unit, evaluation graph and
model graph allow to specialize and improve our framework in different
respects. First, evaluation units (which may contain duplicated
constraints), can be chosen according to a proper estimate of the number
of answer sets (the fewer, the better); second, evaluation plans can be
chosen by ad-hoc optimization modules, which may give preference to (a
combination of) time, space, or parallelization conditions. Third,
the framework is amenable to a form of coarse-grained distributed
computation at the level of evaluation units (in the style of
\citeN{perr-etal-10}).%

While modular evaluation is advantageous in many applications,
it can also be counterproductive, as currently the propagation of
knowledge learned by conflict-driven techniques
into different evaluation units is not
possible. In such cases, evaluating the program as a single evaluation
unit is often also infeasible due to the properties of the grounding
algorithm, as observed by~\citeN{eite-etal-14a}.  Thus, another starting
point for future work is a tighter integration of the solver instances
used to evaluate different units, e.g., by exchanging learned knowledge.
In this context, also the interplay of the grounder and the solver is an
important topic.

\section*{Acknowledgements}

We would like to thank the anonymous reviewers
and Michael Gelfond for their constructive feedback.

\newcommand\myhexevalappendixproofs{%
\section{Proofs}
\label{sec:proofs}
  \mylocatedproof{%
    \begin{proof}[Proof of Theorem~\ref{thm:hexsplitting} (Splitting Theorem)]
      \myproofhexsplitting
    \end{proof}
  }
  \mylocatedproof{%
    \begin{proof}[Proof of Theorem~\ref{thm:hexgensplitting} (Generalized Splitting Theorem)]
      \myproofhexgensplitting
    \end{proof}
  }
  \mylocatedproof{%
    \begin{proof}[Proof of Proposition~\ref{thm:evalgraphruledeps}]
      \myproofEvalGraphRuleDeps
    \end{proof}
  }
  \mylocatedproof{%
    \begin{proof}[Proof of Proposition~\ref{thm:disjointunitoutputs}]
      \myproofDisjointOutputModels
    \end{proof}
  }
  \mylocatedproof{%
    \begin{proof}[Proof of Proposition~\ref{thm:evalgraphcoversdomainexpansionsafe}]
      \myproofEvalGraphCoversDExpansionSafe
    \end{proof}
  }
  \mylocatedproof{%
    \begin{proof}[Proof of Theorem~\ref{thm:evalgraphprecbottom}]
      \myproofevalgraphprecbottom
    \end{proof}
  }
  \mylocatedproof{%
    \begin{proof}[Proof of Theorem~\ref{thm:evalgraphpredbottom}]
      \myproofevalgraphpredbottom
    \end{proof}
  }
  \mylocatedproof{%
    \begin{proof}[Proof of Proposition~\ref{thm:joinmodelgraphsoundcomplete}]
      \myproofJoinModelGraphSoundComplete
    \end{proof}
  }
  \mylocatedproof{%
    \begin{proof}[Proof of Proposition~\ref{thm:joinanswersetgraphsoundcomplete}]
      \myproofJoinAnswerSetGraphSoundComplete
    \end{proof}
  }
  \mylocatedproof{%
    \begin{proof}[Proof of Theorem~\ref{thm:answersetsfromvanilla}]
      \myproofAnswerSetsFromVanilla
    \end{proof}
  }
  \mylocatedproof{%
    \begin{proof}[Proof of Proposition~\ref{thm:answersetsfromufinal}]
      \myproofAnswerSetsFromUFinal
    \end{proof}
  }
  \mylocatedproof{
    \begin{proof}[Proof of Proposition~\ref{thm:EvaluateLDESafe}]
      \myproofEvaluateLDESafe
    \end{proof}
  }
  \mylocatedproof{
    \begin{proof}[Proof of Theorem~\ref{thm:soundcomplete}]
      \myproofSoundComplete
    \end{proof}
  }
} %

	\bibliographystyle{acmtrans}

\iffinal\inlinereftrue\fi
\ifinlineref
\else
\bibliography{hexeval}
\fi %

\ifextended
\appendix

\myhexevalappendixproofs

\ifshowotherappendix

\section{Example Run of Algorithm~\ref{alg:buildAnswerSets}}
\label{sec:appendix-alg-run}

We provide here an example run of Algorithm~\ref{alg:buildAnswerSets} for
our running example.

\begin{example}[ctd.]
\label{ex:buildanswersets}
Consider an evaluation graph $\cE_2'$
which is $\cE_2$ plus $\ufinal=\emptyset$, which depends on all other units.
Following Algorithm~\ref{alg:buildAnswerSets}
we first choose $u = u_1$,
and as $u_1$ has no predecessor units,
step~\ref{step:emptyinputmodels}
creates the \myiint $m_1$ with $\myint(m_1) = \emptyset$.
As $u_1 \neq \ufinal$, we continue and in loop~\ref{step:secondforloop}
obtain
$O = \AS(u_1) = \big\{
    \{ \mygoinout(\myindoor) \},
    \{ \mygoinout(\myoutdoor) \} \big\}$.
We add both answer sets as \myoints $m_2$ and $m_3$
and then finish the outer loop with $U = \{ u_2, u_3, u_4, \ufinal \}$.
In the next iteration, we could choose $u = u_2$ or $u = u_3$;
assume we choose $u_2$. Then $\myinputs(u_2) = \{ u_1 \}$ and $k = 1$,
and we enter the loop~\ref{step:firstforloop}
and build all joins that are possible with \myoints at $u_1$
(all joins are trivial and all are possible),
i.e., we copy the interpretations
and store them at $u_2$ as new \myiints $m_4$ and $m_5$.
In the loop~\ref{step:secondforloop},
we obtain $O = \EvaluateLDESafe(u_2, \{ \mygoinout(\myindoor) \}) = \emptyset$,
as indoor swimming requires money which is excluded by $c_8 \in u_2$.
Therefore \myiint $\{ \mygoinout(\myindoor) \}$ yields no \myoint,
indicated by \Lightning.
However,
we obtain $O = \EvaluateLDESafe(u_2, \{ \mygoinout(\myoutdoor) \}) = \{ \emptyset \}$:
as outdoor swimming neither requires money nor anything else,
\myiint $\{ \mygoinout(\myoutdoor) \}$ derives no additional atoms
and yields the empty answer set, which we store as \myoint $m_6$ at $u_2$;
the iteration ends with $U = \{ u_3, u_4, \ufinal \}$.
In the next iteration we choose $u = u_3$,
we add in loop~\ref{step:firstforloop} \myiints $m_7$ and $m_8$ to $u_3$,
and in loop~\ref{step:secondforloop} \myoints $m_9$, \ldots, $m_{12}$ to $u_3$;
the iteration ends with $U = \{ u_4, \ufinal \}$.
In the next iteration we choose $u = u_4$;
this time we have multiple predecessors,
and in loop~\ref{step:firstforloop}
we check join candidates $m_6 \join m_9$ and $m_6 \join m_{10}$,
which are both not defined.
The other join candidates are $m_6 \join m_{11}$ and $m_6 \join m_{12}$, which
are both defined; we thus add their results as \myiints $m_{13}$ and
$m_{14}$, respectively, to $u_4$.
The loop~\ref{step:secondforloop} computes then
one \myoint $m_{15}$ for \myiint $m_{13}$
and no \myoint for $m_{14}$.
The iteration ends with $U = \{ \ufinal \}$.
In the next iteration, we have
$\myinputs(\ufinal) = \{ u_1, u_2, u_3, u_4 \}$
and the loop~\ref{step:firstforloop}
checks all combinations of one \myoint at each unit in $\myinputs(\ufinal)$.
Only one such join candidate is defined, namely
$m = m_3 \join m_6 \join m_{11} \join m_{15}$,
whose result is stored as a new \myiint at $\ufinal$.
The check~\ref{step:return} now succeeds,
and we return all \myiints at $\ufinal$; i.e., we return
$\{ m \} =
 \big\{\{ \mygoinout(\myoutdoor),
\mygolocation(\mypooln),
\myngolocation(\mypoolg),
\mygosomewhere,
\myneed(\myloc,\myyogamat)\} \big\}$.
This is indeed the set of answer sets of $\myPswim$.
  \qed
\end{example}

\section{On Demand Model Streaming Algorithm}
\label{sec:appendixStreaming}

Algorithm~\ref{alg:buildAnswerSets} fully evaluates all other units
before computing results at the final evaluation unit $\ufinal$, and it
keeps the intermediate results in memory. If we are only interested in
one or a few answer sets, many unused results may be calculated.

Using the same evaluation graph, we can compute the answer sets with a
different, more involved algorithm $\myOnDemandAS$ (shown in
Algorithm~\ref{MyAlgOnDemandAS}) that operates demand-driven from units,
starting with $\ufinal$, rather than data-driven from completed units.
It uses in turn several building blocks that are shown in
Algorithms~\ref{alg:getNextOModel}--\ref{alg:getNextIModel}

$\myOnDemandAS$ calls Algorithm~$\myGetNextOModel$ for $\ufinal$ and
outputs its output models, i.e., the answer sets of the
input program $P$ given by the evaluation graph $\cE$, one by one
until it gets back $\undef$.  Like Algorithm~\ref{alg:buildAnswerSets},
$\myGetNextOModel$ builds in combination with the other algorithms an
answer set graph $\cA$ for $\cE$ that is input-complete at all units, if
all statements marked with '$(+)$' are included; omitting them, it
builds $\cA$ virtually and has at any time at most one input and one
output model of each unit in memory.

Roughly speaking, the models at units are determined in the same order
in which a right-to-left depth-first-traversal of the evaluation graph
$\cE$ would backtrack from edges. This is because first all models of
the subgraph reachable from a unit $u$ are determined, then models at
the unit $u$, and then the algorithm backtracks. The models of the
subgraph are retrieved with $\myGetNextIModel$ one by one,
and using
$\myNextAnswerSet$ the output models are generated and returned. The
latter function is assumed to return, given a \hex{}-program $P$ and
the $i$-th element in an arbitrary but fixed enumeration $I_1,
I_2,\ldots,I_m$ of the answer sets of $P$ (without duplicates),
the next answer set $I_{i+1}$, where by convention $I_0=\undef$ and the
return value for $I_m$ is $\undef$. This is easy to provide
on top of current solvers, and the incremental usage of
$\myNextAnswerSet$ allows for an efficient stateful realization (e.g.\ answer
set computation is suspended).

\myfigureAlgoOnDemandAS{tp}
\myfigureAlgoGetNextOModel{tp}

The trickiest part of this approach is $\myGetNextIModel$, which has to
create locally and in an incremental fashion all joins that are globally
defined, i.e., all combinations of incrementally available output models
of predecessors which share a common predecessor model at all
\myCAUstext. To generate all combinations of output models in the right
order, it uses the algorithm $\myEnsureModelIncrement$.

\myfigureAlgoEnsureModelIncrement{tp}

The algorithms operate on a global data structure
$\cS=(\cE,\cA,\mycuri,\mycuro,\myrefcounto)$ called \emph{storage},
where
\begin{compactitem}
\item $\cE=(U,E)$ is the evaluation graph
containing $\ufinal \in U$,
\item
$\cA=(M,F,\myunit,\mytype,\myint)$ is the (virtually built) answer set graph,
\item $\mycuri : U \to M \cup \{ \undef \}$ and $\mycuro : U \to M \cup
\{ \undef \}$, are functions that informally associate with a unit $u$
  the current input respectively output model considered,  and
\item $\myrefcounto : U \to \bbN \cup \{ 0 \}$ is a function that keeps
  track of how many current input models point to the current output
  model of $u$; this is used to ensure correct joins, by checking
  in $\myGetNextOModel$ that the condition (IG-F) for sharing models in the interpretation graph is not violated (for details see Section~\ref{sec:fais} and Definition~\ref{def:interpretationgraph}).
\end{compactitem}

Initially, the storage $\cS$ is empty, i.e., it contains the input
evaluation graph $\cE$, an empty answer set graph $\cA$, and the
functions are set to $\mycuri(u) = \undef$, $\mycuro(u) = \undef$, and
$\myrefcounto(u) = 0$ for all $u \in U$.  The call of $\myGetNextOModel$
for $\ufinal$ triggers the right-to-left depth-first  traversal of the
evaluation graph.

\myfigureAlgoGetNextIModel{tp}

We omit tracing Algorithm~$\myOnDemandAS$ on our running example, as
this would take quite some space; however, one can check that given the
evaluation graph $\cE_2$, it correctly outputs the single answer set
  \begin{align*}
  I = \{
    \mygoinout(\myoutdoor),
    \mygolocation(\mypooln),
    \myngolocation(\mypoolg),
    \mygosomewhere,
    \myneed(\myloc,\myyogamat)\}.
  \end{align*}
Formally, it can be shown that given an evaluation graph $\cE=(U,E)$
of a program $P$ such that $\cE$ contains a final unit $\ufinal =
\emptyset$, Algorithm~$\myOnDemandAS$ outputs one by one all answer
sets of $P$, without duplicates, and that in the version without
$(+)$-lines, it stores at most one input and one output model per unit
(hence the size of the used storage is linear in the size of the
ground program $\grnd(P)$).

\section{Overview of Liberal Domain-Expansion Safety}
\label{sec:liberalsafety}

Strong domain-expansion safety is overly restrictive,
as it also excludes programs
that
clearly \emph{are}\/ finitely restrictable.
In this section we give an overview about the notion
and refer to \cite{eite-etal-14a} for details.

\begin{example}
\label{ex:DESNotSS}
Consider the following program:
\begin{equation*}
P {=} \left\{
\begin{array}{@{\,}l@{\colon}l@{~~~}l@{\colon}l@{~}l@{}}
r_1 & p(a).             & r_3 & s(Y) &\leftarrow p(X), \ext{\mathit{concat}}{X,a}{Y}. \\[1ex]
r_2 & \mathit{q}(aa). & r_4 & p(X) &\leftarrow s(X), \mathit{q}(X).
\end{array}\right\}
\end{equation*}
It is not strongly safe because $Y$ in the cyclic external atom
$\ext{\mathit{concat}}{X,a}{Y}$ in $r_3$ does not occur
in an ordinary body atom that does not depend on $\ext{\mathit{concat}}{X,a}{Y}$.
However, $P$ is finitely restrictable as the cycle is ``broken'' by $\mathit{dom}(X)$
in $r_4$.
\qedhere
\end{example}
To overcome unnecessary restrictions
of strong safety in \cite{eiter-etal-06}, \emph{liberal domain-expansion safety} (lde-safety) has
been introduced~\cite{eite-etal-14a}, which
incorporates both syntactic and semantic properties of a program.
The details of the notion are not necessary for this paper, except that
all lde-safe programs have finite groundings with the same answer sets;
we give here a brief overview.

Unlike strong safety, liberal de-safety is not a property of
entire atoms
but of
\emph{attributes}, i.e., pairs of predicates and argument positions.
Intuitively, an attribute is lde-safe, if the number of
different terms in an answer-set preserving grounding (i.e.\ a
grounding which has the same answer sets if restricted to the positive atoms
as the original program) is finite.
A program is lde-safe, if all its attributes are lde-safe.

The notion of lde-safety is designed in an extensible fashion,
i.e., such that several
safety criteria can be easily integrated.
For this we parametrize our definition of lde-safety
by a \emph{term bounding function (TBF)}, which
identifies variables in a rule that are ensured to have only
finitely many instantiations in the answer set preserving grounding.
Finiteness of the overall grounding
follows then from the properties of TBFs.

For an ordinary predicate $p \,{\in}\, \mathcal{P}$,
let $\attr{p}{}{i}$ be the \emph{$i$-th attribute of $p$} for all $1 \le i \le \mathit{ar}(p)$.
For an external predicate $\amp{g} \in \mathcal{X}$ with input list $\vec{X}$ in rule $r$,
let $\attr{\amp{g}[\vec{X}]_r}{T}{i}$ with $T \in \{\ipar, \opar\}$ be the \emph{$i$-th input resp. output attribute of $\amp{g}[\vec{X}]$ in $r$} for all $1 \le i \le \mathit{ar}_T(\amp{g})$.
For a ground program $P$, the \emph{range} of an attribute is, intuitively, the set of ground terms which occur in the position of the attribute.
Formally, for an attribute $\attr{p}{}{i}$ we have
$\mathit{range}(\attr{p}{}{i}, P) = \{ t_i \mid p(t_1, \ldots, t_{\mathit{ar}(p)}) \in A(P) \}$;
for an attribute $\attr{\amp{g}[\vec{X}]_r}{T}{i}$ we have
$\mathit{range}(\attr{\amp{g}[\vec{X}]_r}{T}{i}, P) = \{ x^T_i \mid
\amp{g}[\vec{x}^\ipar](\vec{x}^\opar) \in \mathit{EA}(P) \}$,
where $\vec{x}^s = x^{s}_1, \ldots, x^{s}_{\mathit{ar}_{s}(\amp{g})}$.

We use the following
monotone operator to compute
by fixpoint iteration a finite subset of $\mathit{grnd}(P)$ for a program
$P$:
\begin{equation*}
G_{P}(P') = \bigcup_{r \in P} \{ r\theta \mid \exists I \subseteq \mathcal{A}(P'), I \not\models \bot, I \models B^{+}(r\theta) \},
\end{equation*}
\noindent where
$\mathcal{A}(P') = \{ \T a, \F a \mid a \in A(P') \}
\setminus \{ \F a \mid a \leftarrow . \in P \}$ and
$r\theta$ is the ground instance of $r$ under variable substitution $\theta\colon\mathcal{V} \to \mathcal{C}$.
Note that in this definition, $I$ might be partial, but by convention we assume that all atoms which are not explicitly assigned to true
are false.
That is, $G_{P}$ takes a ground program $P'$ as input
and returns all rules from $\mathit{grnd}(P)$ whose positive
body is satisfied under some assignment over the atoms of $\Pi'$.
Intuitively, the operator iteratively extends the grounding by new rules
if they are possibly relevant for the evaluation, where relevance
is in terms of satisfaction of the positive rule body under some assignment
constructable over the atoms which are possibly derivable so far.
Obviously, the least fixpoint $G_{P}^{\infty}(\emptyset)$ of this operator
is a subset of $\mathit{grnd}(P)$;
we will show that it is finite if $P$ is lde-safe
according to our new notion.
Moreover, we will show that this grounding preserves all answer sets
as all omitted rule instances have unsatisfied bodies anyway.

\begin{example}
\label{ex:groundingoperator}
Consider the following program $P$:
\begin{equation*}
\begin{array}{r@{~}r@{~}l}
r_1\colon & s(a). & \quad r_2\colon \mathit{dom}(\mathit{ax}).\quad r_3\colon\ \mathit{dom}(\mathit{axx}). \\
r_4\colon & s(Y)  & \leftarrow  s(X), \ext{\mathit{concat}}{X,x}{Y}, \mathit{dom}(Y).
\end{array}
\end{equation*}
The least fixpoint of $G_{P}$ is the following ground program:
\begin{equation*}
\begin{array}{r@{}r@{~}l}
r_1'\colon\ & s(a).      & \quad r_2'\colon\ \mathit{dom}(\mathit{ax}).\quad r_3'\colon\ \mathit{dom}(\mathit{axx}). \\
r_4'\colon\ & s(\mathit{ax}) &\leftarrow s(a), \ext{\mathit{concat}}{a,x}{ax}, \mathit{dom}(ax). \\
r_5'\colon\ & s(\mathit{axx}) &\leftarrow s(ax), \ext{\mathit{concat}}{ax,x}{axx}, \mathit{dom}(axx).
\end{array}
\end{equation*}
Rule $r_4'$ is added in the first iteration and rule $r_5'$ in the second.
\end{example}

Towards a definition of lde-safety,
we say that a term in a rule is \emph{bounded},
if the number of substitutions in $G_{P}^{\infty}(\emptyset)$ for this term is finite.
This is abstractly formalized using \emph{term bounding
          functions}.

\begin{definition}[Term Bounding Function (TBF)]
\label{def:termboundingfunction}
A \emph{term bounding function}, denoted $b(P, r, S, B)$,
maps a program $P$, a rule~$r \in P$, a set $S$ of (already
safe) attributes,
and a set $B$ of (already bounded) terms in $r$
to an enlarged set of (bounded) terms $b(P, r, S, B) \supseteq B$,
such that every~$t \in b(P, r, S, B)$
has finitely many substitutions in $G_{P}^{\infty}(\emptyset)$ if
\begin{inparaenum}[(i)]
\item the attributes $S$ have a finite range in $G_{P}^{\infty}(\emptyset)$
and
\item each term in~$\mathit{terms}(r) \cap B$ has finitely many
substitutions in~$G_{P}^{\infty}(\emptyset)$.
\end{inparaenum}
\end{definition}

Intuitively, a TBF receives a set of already bounded terms and
a set of attributes that are already known to be lde-safe. Taking the program
into account, the TBF then identifies and returns
further terms which are also bounded.

The concept yields lde-safety of attributes and programs
from the boundedness of variables according to a TBF.
We provide a mutually inductive definition that takes
the empty set of lde-safe attributes $S_0(P)$ as
its basis. Then, each iteration step~$n \ge 1$ defines first
the set of bounded terms $B_{n}(r, P, b)$ for all rules~$r$,
and then an enlarged set of lde-safe attributes $S_n(P)$.
The set of lde-safe attributes in step $n+1$ thus depends on the TBF,
which in turn depends on the domain-expansion safe attributes from step $n$.
\begin{definition}[Liberal Domain-Expansion Safety]
\label{def:domainexpansionsafety}
Let $b$ be a term bounding function. The set $B_{n}(r, P, b)$ of \emph{bounded terms} in a rule $r \in P$
in step $n \ge 1$ is
$B_{n}(r, P, b) = \bigcup_{j \ge 0} B_{n,j}(r, P, b)$
where $B_{n,0}(r, P, b) = \emptyset$ and for all $j \ge 0$, $B_{n,j+1}(r, P, b) = b(P, r, S_{n-1}(P), B_{n,j})$.

\smallskip
The set of \emph{domain-expansion safe attributes} $S_{\infty}(P) = \bigcup_{i \ge 0} S_i(P)$
of a program $P$
is iteratively constructed with
$S_0(P) = \emptyset$
and for $n \ge 0$:
\begin{itemize}
\item $\attr{p}{}{i}{\,\in}\, S_{n+1}(P)$ if for each $r \,{\in}\, P$
and atom $p(t_1, \ldots,$ $t_{\mathit{ar}(p)}) \in H(r)$,
we have that term $t_i \in B_{n+1}(r, P, b)$, i.e., $t_i$ is \emph{bounded};
\item $\attr{\amp{g}[\vec{X}]_r}{\ipar}{i} \,{\in}\, S_{n+1}(P)$ if
each $\vec{X}_i$ is a \emph{bounded} variable,
or $\vec{X}_i$ is a predicate input parameter $p$ and
$\attr{p}{}{1}, \ldots, \attr{p}{}{\mathit{ar}(p)} \in S_n(P)$;
\item $\attr{\amp{g}[\vec{X}]_r}{\opar}{i} \,{\in}\, S_{n+1}(P)$
  if and only if
$r$ contains an external atom $\amp{g}[\vec{X}](\vec{Y})$ such that $\vec{Y}_i$ is bounded,
or
$\attr{\amp{g}[\vec{X}]_r}{\ipar}{1}, \ldots, \attr{\amp{g}[\vec{X}]_r}{\ipar}{\mathit{ar}_{\ipar}(\amp{g})} \in S_n(P)$.
\end{itemize}

\smallskip
A program $P$ is \emph{liberally domain-expansion (lde)
          safe}, if it is safe and all its attributes are domain-expansion safe.
\end{definition}

A detailed description of liberal safety is beyond the scope of this paper.
However, it is crucial that each liberally domain-expansion safe \hex-program $P$ is finitely restrictable,
i.e., there is a finite subset $P_g$ of $grnd_{C}(P)$ s.t.~$\mathcal{AS}(P_g) = \mathcal{AS}(\grnd_{C}(P))$.
A concrete grounding algorithm
\GroundLiberallyDomainExpansionSafeProgram{} is given in~\cite{eite-etal-14a};
we use $\GroundLiberallyDomainExpansionSafeProgram(P)$ in this article to refer to a finite grounding of $P$ that has the same answer sets.

\fi %

\fi %

\end{document}

